\newcommand{\indicator}{\mathds{1}}
\newtheorem{theorem}{Theorem}
\DeclareMathOperator*{\argmax}{arg\,max}
\DeclareMathOperator*{\argmin}{arg\,min}
\newcommand{\RS}[0]{\textsc{RS}}
\newcommand{\CC}[0]{\textsc{CC}}
\newcommand{\LCE}[0]{\textsc{LCE}}
\newcommand{\OVA}[0]{\textsc{OVA}}
\newcommand{\SP}[0]{\textsc{SP}}
\newcommand{\DT}[0]{\textsc{DT}}
\newcommand{\ASM}[0]{\textsc{ASM}}
\newcommand{\ok}{\overline{\kappa}}
\newcommand{\tatd}{\tau_{\mathtt{ATD}}}
\newcommand{\htatd}{\hat{\tau}_{\mathtt{ATD}}}
\newcommand{\tcatd}{\tau_{\mathtt{CATD}}}
\newcommand{\htcatd}{\hat{\tau}_{\mathtt{CATD}}}
\newcommand{\trdd}{\tau_{\mathtt{RD}}}
\newcommand{\htrdd}{\hat{\tau}_{\mathtt{RD}}}
\newcommand{\Acc}{\widehat{\mathtt{Acc}}}
\newcommand{\cprob}{\overset{p}{\longrightarrow}}
\newtheorem{assumption}{Assumption}
\newtheorem{proposition}{Proposition}
\newtheorem{scenario}{Scenario}
\theoremstyle{definition}
\begin{document}

% If your paper is accepted and the title of your paper is very long,
% the style will print as headings an error message. Use the following
% command to supply a shorter title of your paper so that it can be
% used as headings.
%
%\runningtitle{I use this title instead because the last one was very long}

% If your paper is accepted and the number of authors is large, the
% style will print as headings an error message. Use the following
% command to supply a shorter version of the authors names so that
% they can be used as headings (for example, use only the surnames)
%
\runningauthor{Palomba, Pugnana, {\'{A}}lvarez, Ruggieri}

\twocolumn[

\aistatstitle{A Causal Framework for Evaluating Deferring Systems}

\aistatsauthor{ Filippo Palomba\rm{*} \And Andrea Pugnana\rm{*}$^{, \dagger}$ \And  Jos{\'{e}} M. {\'{A}}lvarez \And  Salvatore Ruggieri}

\aistatsaddress{ Princeton University \And  University of Pisa\\University of Trento \And KU Leuven \And University of Pisa } ]

\begin{abstract}
    Deferring systems extend supervised Machine Learning (ML) models with the possibility to defer predictions to human experts. However, evaluating the impact of a deferring strategy on system accuracy is still an overlooked area. This paper fills this gap by evaluating deferring systems through a causal lens. We link the potential outcomes framework for causal inference with deferring systems, which allows to identify the causal impact of the deferring strategy on predictive accuracy. We distinguish two scenarios. In the first one, we have access to both the human and ML model predictions for the deferred instances. Here, we can identify the individual causal effects for deferred instances and the aggregates of them. In the second one, only human predictions are available for the deferred instances. Here, we can resort to regression discontinuity designs to estimate a local causal effect. We evaluate our approach on synthetic and real datasets for seven deferring systems from the literature. 
\end{abstract}

\def\thefootnote{*}\footnotetext{Equal contribution. Order is chosen alphabetically.}
\def\thefootnote{$\dagger$}\footnotetext{Work submitted while at the University of Pisa.}
\def\thefootnote{\arabic{footnote}}
% \section{GENERAL FORMATTING INSTRUCTIONS}
%%%%%%%%%%%%%%%%%%%%%%%%%%%%%%%%%%%%%%%%%%%%%%%%%%%%%%%%%%%%%%
\section{INTRODUCTION}
\label{sec:introduction}
%%%%%%%%%%%%%%%%%%%%%%%%%%%%%%%%%%%%%%%%%%%%%%%%%%%%%%%%%%%%%%

Learning to defer (LtD) \citep{DBLP:conf/nips/MadrasPZ18} extends supervised learning by allowing Machine Learning (ML) models to defer the prediction to a human expert. The extended models -- called \textit{deferring systems} -- aim at obtaining the best from the combination of AI and human expert predictions, thus reducing potentially harmful mistakes.
The LtD research field is blooming, with novel approaches continuously appearing (e.g., \cite{DBLP:conf/aistats/MozannarLWSDS23,DBLP:conf/nips/CaoM0W023,DBLP:conf/aistats/LiuCZF024,DBLP:conf/icml/WeiC024}). 
However, all these works evaluate deferring systems by looking at the final accuracy obtained by the human-AI team.
This accuracy-based view on evaluation is limited as it does not consider the \textit{causal effect} of introducing the deferring strategy on the system's predictive performance.
When evaluating the impact of a new drug or a new policy, e.g., one has not only to observe a positive change in the outcome of interest, but also has to ensure the change in that outcome was due to the performed intervention~\citep{DBLP:journals/widm/NogueiraPRPG22}.
Similarly, due to mounting regulatory pressure,
policymakers are interested in understanding the causal effect of introducing a deferring system, in particular, within a high-stake decision-making process \citep{DBLP:journals/ethicsit/AlvarezCEFFFGMPLRSSZR24}.

Consider the following two examples: 
\textit{(Ex1)} an online platform introduces a new deferring system to moderate its content for hate speech, meaning most content moderation is still automated, but a small part is now handled by humans; 
\textit{(Ex2)} a hospital introduces a new deferring system for diagnosing a disease, meaning that medical doctors will still handle part of the patients, while an ML model will diagnose the remaining cases.
After some months, the stakeholders of the online platform in \textit{Ex1} may ask the developers of the deferring system to quantify the causal effects of \textit{deferring to humans} instead of automatic content moderation. 
Similarly, the stakeholders of the hospital in \textit{Ex2} may ask for the causal effects of \textit{deferring to the ML model} instead of full human decision-making.
Both examples require a causal inference approach because the goal is to estimate the causal effect of a variable (the introduction of a deferring system) on another one (the predictive performance)~\citep{Pearl2000}.

In this work, we link deferring systems with the causal inference framework of \emph{potential outcomes} \citep{rubin1974EstimatingCausalEffects} by mapping concepts from the former to the latter. We distinguish two scenarios. In the first one, we can access the ML model predictions for both deferred and non-deferred instances, and the human predictions only for deferred ones. 
\textit{Ex1} belongs to such a scenario. In this context, various causal quantities \textit{of deferring to humans} can be readily identified and estimated. In the second scenario described by \textit{Ex2}, we can access the ML model predictions only for the non-deferred instances and the human predictions only for the deferred ones. In this context, we rely on \emph{Regression Discontinuity} (RD) design \citep{thistlethwaite1960RegressiondiscontinuityAnalysisAlternative} to identify and estimate a local causal effect, where local refers to the boundary of the deferring decision.
Such a local causal effect covers both the causal effect \textit{of deferring to humans} and the one \textit{of deferring to the ML model}, as they are one the opposite of the other.
\textit{Ex2} belongs to such a scenario. 

Our contributions are threefold:
$(i)$ we frame the evaluation of deferring systems as a causal inference problem using the potential outcomes framework; 
$(ii)$ we investigate two scenarios and for each show which causal effects can be identified and estimated; and
$(iii)$ we evaluate the proposed approach on five datasets -- a synthetic one and four real-world ones -- using seven deferring systems from the literature. 
We introduce causal inference and deferring systems in Section \ref{sec:background}. We bridge the two frameworks and investigate the two scenarios of causal estimation in Section \ref{sec:methodology}. We report experiments in Section \ref{sec:exp}. We conclude in Section~\ref{sec:conclusions}.

%%%%%%%%%%%%%%%%%%%%%%%%%%%%%%%%%%%%%%%%%%%%%%%%%%%%%%%%%%%%%%
\section{BACKGROUND}\label{sec:background}
%%%%%%%%%%%%%%%%%%%%%%%%%%%%%%%%%%%%%%%%%%%%%%%%%%%%%%%%%%%%%%

\subsection{Causal Inference} 
\label{sec:background-causal}
The core task of causal inference is to estimate the \textit{causal effect} of a binary \textit{treatment} random variable $D \in\{0,1\}$ on another discrete or continuous \textit{outcome} random variable $O\in\mathcal{O}$.
Let us consider a random sample %$\mathcal{D}_n=
$\{D_i, O_i\}_{i=1}^n$ of i.i.d.~variables, where the subscript $i$ denotes a specific instance/unit~$i$. We denote realizations of such random variables with lowercase letters.
A formal definition of a causal effect is given by the Neyman-Rubin causal framework \citep{neyman1923applications,rubin1974EstimatingCausalEffects} through the notion of \textit{potential outcomes}. 
A potential outcome $O(d)\in\mathcal{O},d\in\{0,1\}$ is a random variable representing the value that the outcome variable $O$ would take when the treatment variable is set to $d$. Accordingly, the (individual) causal effect of $D$ on $O$ for unit $i$ is defined as $\tau_{i}=O_i(1)-O_i(0)$.\footnote{
We make the \textit{stable unit treatment value assumption} (SUTVA) \citep{rubin1978BayesianInferenceCausal}. It requires each unit's potential outcome not to depend on the treatment assignment of other units, ruling out interference between units.}

If we were able to observe the joint distribution of $(O(0), O(1))$, then the causal effect of each unit could be readily estimated from a dataset of observations. However, for each unit $i$, only one among $O_i(1)$ and $O_i(0)$ can be typically observed. This is called the ``fundamental problem of causal inference" \citep{holland1986StatisticsCausalInference}. It occurs since the observed outcome $O_i$ and the potential outcomes are related by $O_i = D_i \cdot O_i(1) + (1-D_i)\cdot O_i(0)$. 
In other words, if a unit $i$ is assigned to the treatment ($D_i=1$), then the potential outcome $O_i(0)$ is counterfactual in nature, and we would not observe it. Symmetrically, $O_i(1)$ is counterfactual for units not assigned to treatment ($D_i=0$). For this reason, researchers are often interested in less granular causal quantities, such as:
\begin{align}\label{eq:causalestimands}
\begin{split}
    \tau_{\mathtt{ATE}}:= \mathbb{E}[O(1)-O(0)], \qquad \text{and}\\
    \tau_{\mathtt{ATT}}:= \mathbb{E}[O(1)-O(0)\mid D=1],
\end{split}
\end{align}
known as the \textit{average treatment effect} (ATE) and the \textit{average treatment effect on the treated} (ATT), respectively.\footnote{The subscripts of $O_i$ and $D_i$ can be omitted in the expectation since variables are\textit{ i.i.d.}} 
Despite being more general than the individual causal effect, the causal estimands in \eqref{eq:causalestimands} cannot be estimated from a dataset of observations unless some assumptions are imposed, as the distribution of $(D,O(0),O(1))$ is (\textit{i}) unknown and (\textit{ii}) generally impossible to learn from the data because of the fundamental problem of causal inference. 
Several methodologies have been proposed to use context-dependent knowledge to model $(D,O(0),O(1))$. See \cite{abadie2018EconometricMethodsProgram} for a recent review.

The RD design~\citep{thistlethwaite1960RegressiondiscontinuityAnalysisAlternative} is one of such methodologies. 
In the canonical RD design, units are assigned a score $V\in\mathbb{R}$, known as \textit{running variable}, and ranked according to it. A unit $i$ whose running variable $V_i$ is greater or equal than a \textit{cutoff} value $\xi$ is assigned to treatment, otherwise it does not receive the treatment. It follows that the treatment assignment is known, deterministic, and can be described by $D_i=\indicator\{V_i\geq \xi\}$. This knowledge of the assignment process can be exploited to identify and estimate causal effects. Indeed, if we can assume that units in the vicinity of the cutoff are similar, then the RD design can be used to identify:
\[\tau_{\mathtt{RD}}:= \mathbb{E}[O(1) - O(0)\mid V=\xi].\]
This quantity can be interpreted as a version of $\tau_{\mathtt{ATT}}$ ``local" at the cutoff.  The above heuristics was formalized by \citet{hahn2001IdentificationEstimationTreatment} in terms of potential outcomes through the following assumption. 
\begin{assumption}[RD-continuity]
\label{ass:continuity}
    The expected potential outcomes are continuous at the cutoff, namely, there exist:
    \[\lim_{v\to \xi} \mathbb{E}[O(0)\mid V=v] \qquad\text{and}\qquad \lim_{v\to \xi}\mathbb{E}[O(1)\mid V=v]. \]
\end{assumption}
% In words, 
Assumption \ref{ass:continuity} requires the average potential outcomes not to change abruptly in a small neighbourhood around the cutoff; hence their left and right limits exist and are equal. Under Assumption \ref{ass:continuity}, $\tau_{\mathtt{RD}}$ can be identified from the data, as the next theorem shows.
\begin{theorem}[Theorem 3 from \cite{hahn2001IdentificationEstimationTreatment}]\label{thm:rdidentifiability}
    Let Assumption \ref{ass:continuity} hold. Then:
        \[\lim_{v\to \xi_+}\mathbb{E}[O\mid V=v]-\lim_{v\to \xi_-}\mathbb{E}[O\mid V=v] = \tau_{\mathtt{RD}}.\]
\end{theorem}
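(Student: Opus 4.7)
The plan is to exploit the sharp, deterministic treatment assignment rule $D_i = \indicator\{V_i \geq \xi\}$ to rewrite the observed outcome $O$ in terms of the potential outcomes on each side of the cutoff, and then invoke Assumption \ref{ass:continuity} to push the limits through the conditional expectations.

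First I would unpack the relationship between $O$ and the potential outcomes under the sharp RD rule. Since $D_i = \indicator\{V_i \geq \xi\}$ is a deterministic function of $V$, conditioning on $V = v$ with $v \geq \xi$ pins down $D = 1$, so that $O = O(1)$ almost surely in that event; symmetrically, for $v < \xi$ we have $D = 0$ and $O = O(0)$ almost surely. Therefore, for $v$ strictly above the cutoff, $\mathbb{E}[O \mid V = v] = \mathbb{E}[O(1) \mid V = v]$, and for $v$ strictly below, $\mathbb{E}[O \mid V = v] = \mathbb{E}[O(0) \mid V = v]$.

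Next I would take the one-sided limits as $v \to \xi$. Assumption \ref{ass:continuity} guarantees that $v \mapsto \mathbb{E}[O(1) \mid V = v]$ and $v \mapsto \mathbb{E}[O(0) \mid V = v]$ are continuous at $\xi$, so the right limit of the first and the left limit of the second both exist and equal their values at $V = \xi$. Combining with the previous step:
\[
\lim_{v \to \xi_+} \mathbb{E}[O \mid V = v] = \mathbb{E}[O(1) \mid V = \xi], \qquad
\lim_{v \to \xi_-} \mathbb{E}[O \mid V = v] = \mathbb{E}[O(0) \mid V = \xi].
\]
Subtracting and applying linearity of conditional expectation yields $\mathbb{E}[O(1) - O(0) \mid V = \xi] = \tau_{\mathtt{RD}}$, which is the claim.

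The only real subtlety is the first step: one must justify that conditioning on $V$ alone is enough to replace $O$ with the appropriate potential outcome. This hinges on the determinism of the assignment rule (so $D$ is $\sigma(V)$-measurable), which removes any need for an unconfoundedness-style assumption here; once that is made explicit, the remainder is a direct application of continuity. No further regularity on $V$ (e.g., a positive density at $\xi$) is required for this identification statement, although it would be needed at the estimation stage.
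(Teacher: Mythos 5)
Your proof is correct and is the standard sharp-RD identification argument; note that the paper itself does not prove this statement but imports it verbatim as Theorem~3 of \cite{hahn2001IdentificationEstimationTreatment}, so there is no in-paper proof to diverge from. The two key steps you isolate --- that $D$ is $\sigma(V)$-measurable under the sharp assignment rule, so $\mathbb{E}[O\mid V=v]$ coincides with $\mathbb{E}[O(1)\mid V=v]$ above the cutoff and with $\mathbb{E}[O(0)\mid V=v]$ below it, and that Assumption~\ref{ass:continuity} then lets the one-sided limits converge to the conditional expectations at $V=\xi$ --- are exactly the content of the original result. One small caution: as literally written, Assumption~\ref{ass:continuity} only asserts that the limits \emph{exist}; identifying $\tau_{\mathtt{RD}}$ at $V=\xi$ additionally requires that each limit equal the corresponding conditional expectation at the cutoff (i.e., genuine continuity), which is what the assumption's name and the surrounding discussion intend and what you correctly invoke, so your proof is fine but it is worth making that reading explicit.
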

Theorem \ref{thm:rdidentifiability} is ``local'' in nature, as it shows that the average treatment effect on the treated can be identified for a specific sub-population of units,~namely those with $V=\xi$. 

\subsection{Deferring Systems}
\label{sec:backgroundDS}
Let $\mathcal{X}\subseteq\mathbb{R}^q$ be a $q$-dimensional input space, $\mathcal{Y} = \{ 1, \ldots, m\}$ be the target space and  $P(\mathbf{X}, Y)$ be the probability distribution over $\mathcal{X}\times \mathcal{Y}$. 
Given a hypothesis space $\mathcal{F}$ of functions that map $\mathcal{X}$ to $\mathcal{Y}$, the goal of supervised learning is to find the hypothesis $f\in\mathcal{F}$ that minimizes the \textit{risk}:
\begin{equation*}
    R(f) =
\mathbb{E}[l(f(\mathbf{X}),Y)] = \int_{\mathcal{X}\times\mathcal{Y}}l(f(\mathbf{x}),y)d\mathcal{P}(\mathbf{x},y) ,
\end{equation*}
where  $l:\mathcal{Y}\times \mathcal{Y} \to \mathbb{R}$ is a user-specified loss function and $\mathcal{P}$ is the probability measure linked to the joint distribution $P$ over the space $\mathcal{X}\times\mathcal{Y}$.
Here, $f$ represents a ML model (i.e.,~a predictor).
Because $P(\mathbf{X}, Y)$ is generally unknown, it is typically assumed that we have access to a set of realizations, called a \textit{training set}, of an \textit{i.i.d.}~random sample over $P(\mathbf{X}, Y)$.
The training set is used to learn a predictor $\hat{f}$, 
such that 
$\hat{f} \in \argmin_{f\in \mathcal{F}} \widehat{R}(f)$, with $\widehat{R}(f)$ denoting the empirical counterpart of the risk $R(f)$ over the training set.

Since the predictor $\hat{f}$ can make mistakes, we can extend the above canonical setting by allowing the ML model to defer difficult cases to another predictor. We consider a human expert as another predictor $h:\mathcal{Z}\to\mathcal{Y}$, where $\mathcal{Z}$ is possibly a higher dimensional space than $\mathcal{X}$. To keep the notation simple, we consider the case where $\mathcal{Z}=\mathcal{X}$. The mechanism that determines who provides the prediction is called the \textit{policy function} (or rejector/deferring strategy) and can be formally defined as a (binary) mapping $g:\mathcal{X}\to\{0,1\}$. 
We define the \textit{deferring system} $\vartheta$, also called the~\textit{the human-AI team}, as a triplet $(f,g,h)$, such that:
\begin{equation*}
    \vartheta(\mathbf{x}) = (f,g, h)(\mathbf{x}) = \begin{cases}
        f(\mathbf{x}) \quad\text{ if }\quad g(\mathbf{x})=0\\
        h(\mathbf{x}) \quad\text{ if }\quad g(\mathbf{x})=1
    \end{cases}
\end{equation*}
meaning, if $g(\mathbf{x})=0$, the prediction is provided by the ML model, while if $g(\mathbf{x})=1$, the human expert takes care of the prediction. 
We assume a \emph{single} human expert to defer the prediction to, thus excluding generalizations that pick which expert to defer to \citep{DBLP:conf/aistats/VermaBN23,DBLP:conf/nips/MaoMM023}.
Let $\mathcal{G}$ be the set of all the policy functions and $\mathcal{L}(f,g)$ the expected risk of the whole deferring system, namely:
\begin{align}
\begin{split}
    \mathcal{L}(f,g) =    \int_{\mathcal{X}\times\mathcal{Y}}l_{ML}\left(f(\mathbf{x}),y\right)\left(1-g(\mathbf{x})\right)d\mathcal{P}(\mathbf{x},y) + \\\int_{\mathcal{X}\times\mathcal{Y}}l_{H}(h(\mathbf{x}),y)g(\mathbf{x})d\mathcal{P}(\mathbf{x},y),
\end{split}
    \label{eq:loss}
\end{align}
with $l_{ML}$ (resp., $l_{H}$) referring to the loss associated with the ML model (resp., human expert). The goal becomes finding the best $f\in\mathcal{F}$ and $g\in\mathcal{G}$ such that:
\begin{equation*}
    \min_{f\in\mathcal{F}, g\in\mathcal{G}}\mathcal{L}(f,g) \quad \text{s.t.}\quad \mathbb{E}[g(\mathbf{X})]\leq 1-c, 
\end{equation*}
where $c\in[0,1]$ is a \textit{target coverage}, i.e.,~a user-specified minimum fraction of instances for which the ML model is selected to make predictions.

Most methods design the deferring strategy through a \textit{reject score} function $k:\mathcal{X}\to\mathbb{R}$, which estimates whether the human expert prediction is more likely to be correct than the one of the ML model~\citep{DBLP:conf/aistats/MozannarLWSDS23}. 
High values of $k(\mathbf{x})$ correspond to cases where the human expert is preferable, i.e.,~it is more likely to provide a correct prediction. Hence, we can set a threshold $\ok$ over $k(\mathbf{x})$ to define the policy function as $g(\mathbf{x}) = \indicator\{k(\mathbf{x})\geq\ok\}$. \citet{DBLP:conf/nips/OkatiDG21} show that such a thresholding strategy is optimal. In practice, one can estimate such a threshold in various ways. For instance, if there are no coverage constraints, a linear search procedure can be run by selecting the $\ok$ that maximizes accuracy over a validation set  \citep{DBLP:conf/aistats/MozannarLWSDS23}. Otherwise, one can consider a \textit{coverage-calibration} procedure by setting $\ok$ as the $c^{\mathit{th}}$-percentile of the reject score values over a validation set \citep{Benchmark2024}, as shown in Figure \ref{fig:coverage_cal_example}. To highlight the relationship between $\ok$ and $c$, we denote the estimated threshold for a target coverage $c$ as $\ok_c$, i.e.,~$\ok_c$ is such that $\mathbb{E}[\indicator\{k(\mathbf{X})\geq \ok_c\}]=(1-c)$.
%%%%%%%%%%%%%%%%%%%%%%%%%%%%%%%%%%%%%%%%%%%%%%%%%%%%%%%%%%%%%%
\subsection{Related Work}\label{sec:relatedwork}
%%%%%%%%%%%%%%%%%%%%%%%%%%%%%%%%%%%%%%%%%%%%%%%%%%%%%%%%%%%%%%
By viewing the introduction of human-AI teams as an intervention on a ML-based or on a human-based decision flow, our work bridges causal inference for policy evaluation with deferring systems. 
To the best of our knowledge, the only related work is \citet{DBLP:conf/nips/ChoeGR23}, which estimates the accuracy of 
abstaining classifiers on the abstained instances under the assumption that the abstention policy is stochastic. Such an assumption is impractical in the context of deferring systems. Further, abstaining classifiers do not account for deferring to humans.\,Our work addresses both these shortcomings. See Appendix \ref{sec:appRelatedWork} for additional related work.

\paragraph{Policy Evaluation.}
\noindent
Causal inference methods are used to evaluate the effects of treatments/policies, and inform policymakers~\citep{abadie2018EconometricMethodsProgram}.
Randomized control trials (RCT), i.e.,~experiments that assign units to treatment randomly, are the gold standard for inferring average causal effects (such as $\tau_{\mathtt{ATE}}$) in many fields including healthcare \citep{stolberg2004randomized}, education \citep{carlana2022}, 
and finance \citep{banerjee2015miracle}. When relying on RCTs is not possible, e.g., it is too costly or unethical, one can resort to other techniques using observational data \citep{angrist2009MostlyHarmlessEconometrics}. 
The RD design has been used to assess the effectiveness of treatments in several fields, such as healthcare \citep{boon2021regression, cattaneo2023guide}, criminal behavior \citep*{pinotti2017ClickingHeavenDoor}, education \citep*{angrist1999UsingMaimonidesRule, duflo2011PeerEffectsTeacher}, public economics  \citep*{lalive2008HowExtendedBenefits, battistin2009RetirementConsumptionPuzzle, coviello2014PublicityRequirementsPublic}, and corporate finance \citep*{flammer2015DoesCorporateSocial}.

\paragraph{Deferring Systems Applications.}
\noindent
In recent years, deferring systems have been deployed in several domains~\citep{RP25}.
For instance, \citet{DBLP:conf/sdm/PlasMVD23} implement a deferring system for sleep stage scoring, which can be used to allow physicians to focus on critical patients. \citet{Strong25} present a pilot for disorder classification using a Large Language Model as the deferring system.
\citet{DBLP:conf/bigdataconf/CianciGGKMPRR23} adopt selective classification \citep{DBLP:journals/jmlr/El-YanivW10} for uncertainty self-assessment of a credit scoring ML model, with the purpose of informing the human decision maker. \citet{DBLP:conf/aaai/BondiKSCBCPD22} study a deferring system for evaluating the presence of animals in photo traps, showing that the performance of the deferring system is influenced by how the deferral choice is communicated to humans. \citet{DBLP:conf/iui/HemmerWSVVS23} present a user study to evaluate how deferring impacts human performance in an image classification task.
We refer to \citet{punzi2024ai} for a recent survey on hybrid decision-making.

%%%%%%%%%%%%%%%%%%%%%%%%%%%%%%%%%%%%%%%%%%%%%%%%%%%%%%%%%%%%%%
\section{EVALUATING DEFERRING SYSTEMS}\label{sec:methodology}
%%%%%%%%%%%%%%%%%%%%%%%%%%%%%%%%%%%%%%%%%%%%%%%%%%%%%%%%%%%%%%
\begin{table*}[!th]
    \centering
    \caption{Deferring systems under the Potential Outcomes lens.}
\resizebox{.7\textwidth}{!}{
    \begin{tabular}{ccc}
      & \textbf{POTENTIAL OUTCOMES}  & \textbf{LtD} \\
      \midrule
     \textit{running variable} & $V_i$ & $K_i$ \\
    \textit{cutoff} & $\xi$ & $\ok_c$\\
     \textit{treatment} & $D_i$ & $G_i$ \\
     \textit{outcome} & $O_i$  & $T_i$ \\
     \textit{potential outcomes} & $O_i(d),d\in\{0,1\}$& $T_i(g),g\in\{0,1\}$\\
     $\tau_{i}$
     &$O_i(1)-O_i(0)$ & $T_i(1)-T_i(0)$\\
    \end{tabular}
    }
    \label{tab: RD to LtD mapping}
\end{table*}
\begin{figure*}[t!]
\begin{minipage}[t]{.315\textwidth}
    \includegraphics[scale=.25]{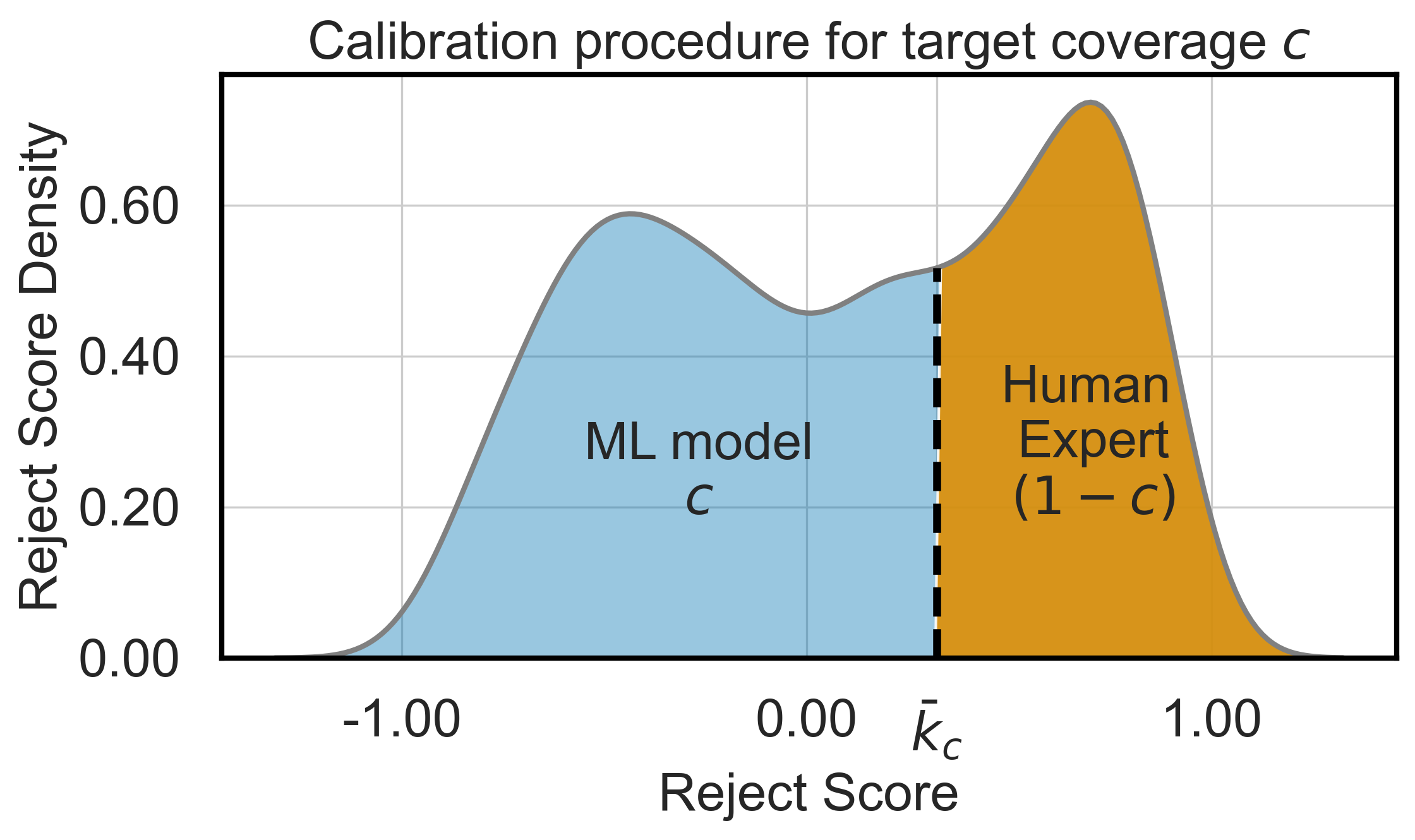}
    \caption{
    In blue, the
    $({c})\%$ of instances 
    assigned to the ML model;
    in orange, the $(1-{c})\%$ instances assigned to the
    the human.
    }
    \label{fig:coverage_cal_example}
\end{minipage}
\hfill
\begin{minipage}[t]{.325\textwidth}
    \centering
    \includegraphics[scale=.25]{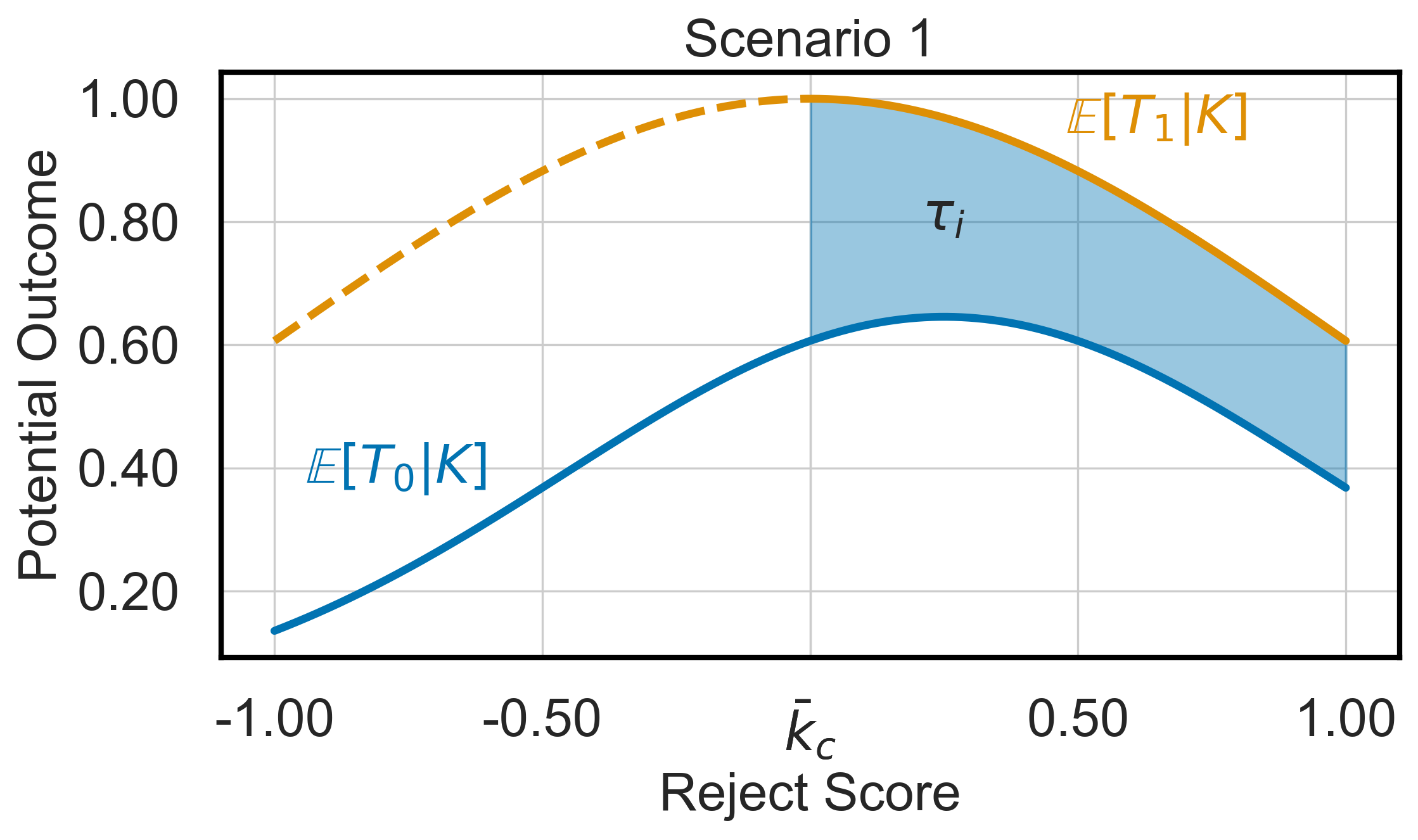}
    \caption{Scenario~\ref{scenario1} assumptions: thick (dashed) lines are observed (unobserved) values. The coloured area represents where the effects can be estimated.
}
    \label{fig:atdIdentification}
\end{minipage}    
\hfill
\begin{minipage}[t]{.315\textwidth}
    \centering
    \includegraphics[scale=.25]{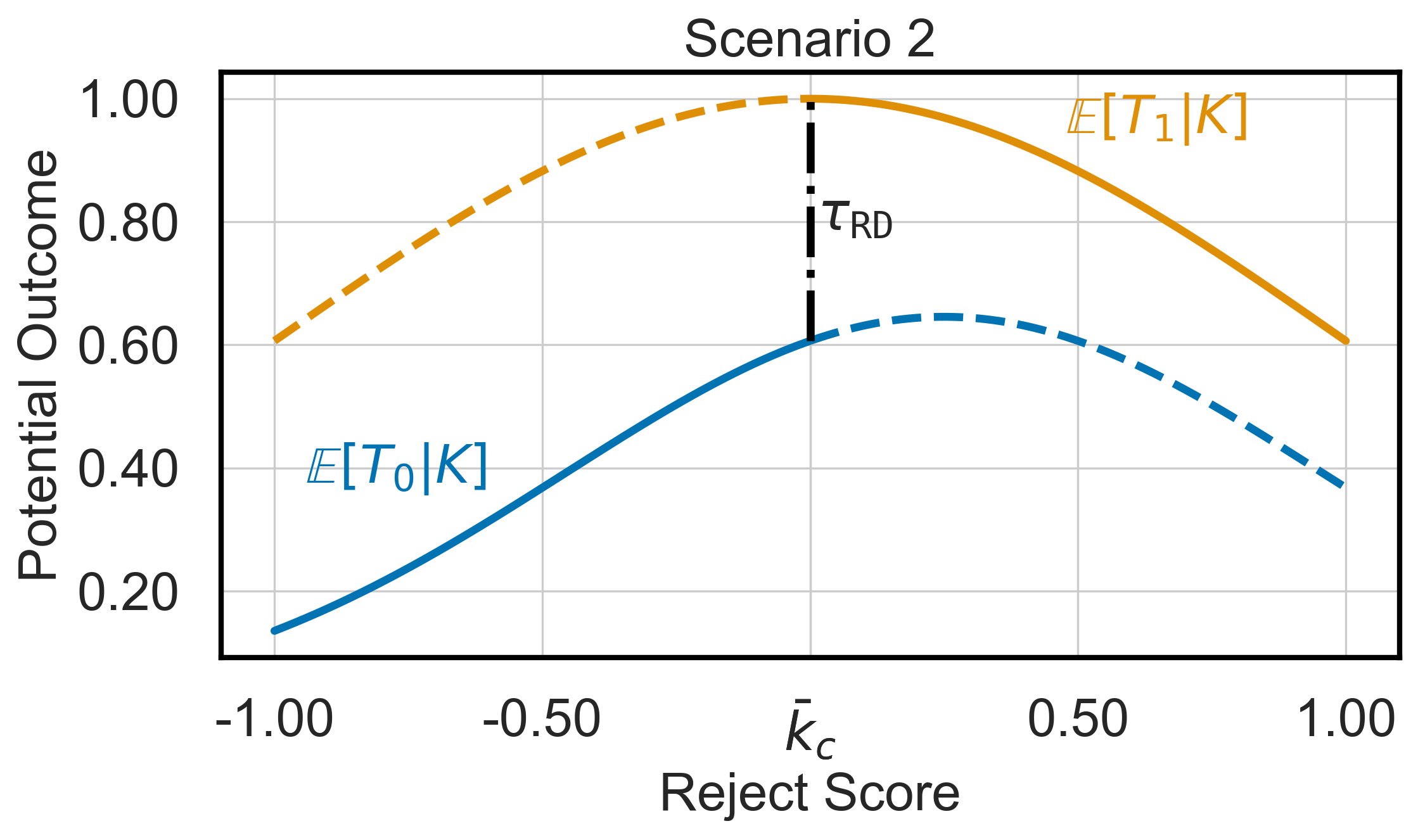}
    \caption{Scenario~\ref{scenario2} assumptions: thick (dashed) lines are observed (unobserved) values.~We can estimate $\trdd$ at the cutoff value.
    }
    \label{fig:rddIdentification}
\end{minipage}    
\end{figure*}

\noindent
\textbf{Problem Statement.}
We consider the problem of measuring the causal contribution, in terms of predictive performance, of deferring the prediction to a human expert in a given deferring system based on a reject score function. We assume a given set of realizations~$\mathcal{D}_n=\{(\mathbf{x}_i, y_i)\}_{i=1}^n$, called a \textit{test set}, of an \textit{i.i.d.}~random sample $\{(\mathbf{X}_i, Y_i)\}_{i=1}^n$ over  $P(\mathbf{X}, Y)$, and we define the test sample accuracy of a generic predictor $m$ as $\Acc_m := \frac{1}{n}\sum_{i=1}^n\indicator\{m(\mathbf{x}_i)=y_i\}$.

\noindent
\paragraph{Methodology.}
To tackle the problem above, we bridge deferring systems with the potential outcomes framework. The key observation is that the reject score maps to the running variable, and the predictive performance of the ML model and of the human expert map to the potential outcomes. 

Table~\ref{tab: RD to LtD mapping} links each variable of a deferring system to the potential outcomes framework. We have that: $(i)$ \emph{the reject score} $K_i=k(\mathbf{X}_i)$ is the running variable; $(ii)$ the \emph{threshold}~$\ok_c \in\mathcal{K}$ is the cutoff; $(iii)$ the  \emph{policy function} $G_i=\indicator\{K_i\geq \ok_c\}$ is the treatment assignment: if $G_i=1$, the human expert provides the prediction $h(\mathbf{X}_i)$, otherwise the ML model provides the prediction $f(\mathbf{X}_i)$; $(iv)$ \emph{the outcome} is $T_i=u(\vartheta(\mathbf{X}_i, Y_i))$, where $u:\mathcal{Y}\times\mathcal{Y}\to\mathbb{R}$ is a function taking as inputs the deferring system prediction $\vartheta(\mathbf{X}_i)$ and its target variable $Y_i$, e.g.,~ if $u(\vartheta(\mathbf{X}_i), Y_i)=\indicator\{\vartheta(\mathbf{X}_i)=Y_i\}$, the outcome is the correctness of the prediction; $(v)$ $T_i(0) = u\left(f(\mathbf{X}_i),Y_i\right)$ and $T_i(1)= u\left(h(\mathbf{X}_i),Y_i\right)$ are \emph{the potential outcomes}, e.g.,~if $u(f(\mathbf{X}_i, Y_i))=\indicator\{f(\mathbf{X}_i)=Y_i\}$ and if $u(h(\mathbf{X}_i, Y_i))=\indicator\{h(\mathbf{X}_i)=Y_i\}$, then the potential outcome are the correctness of the ML model prediction and of the human prediction, respectively; finally, $(vi)$ \emph{the individual causal effect} $\tau_{i}$ is the difference between $T_i(1)$ and $T_i(0)$. Notice that the outcome and the potential outcomes satisfy the expected condition $T_i=G_i \cdot T_i(1)+(1-G_i) \cdot T_i(0)$.

We distinguish two scenarios that allow for the identification of causal effects under Table~\ref{tab: RD to LtD mapping}. 
In both scenarios, we assume that the human predictions $h(\mathbf{X}_i)$ can be accessed only for the deferred instances, i.e.,~if $G_i=1$. 
In the rest of the paper, we consider $u\left(\vartheta(\mathbf{X}_i), Y_i\right)=\indicator\{\vartheta(\mathbf{X}_i) = Y_i\}$ for a direct interpretation of our results in terms of accuracy, but these results can be directly extended to other functions $u$.

\begin{scenario}
\label{scenario1}
    The ML model predictions $f(\mathbf{X}_i)$ can be accessed for \textbf{the whole random sample}.
\end{scenario}
This scenario covers cases in which the ML model can be called without any cost or side effects.
E.g., when the development team runs an internal evaluation of the deferring system. Here, we can identify the causal effects \textit{of deferring to the human} (Section \ref{sec:s1}). 
\textit{Ex1} from Section~\ref{sec:introduction} falls under this scenario. 
\begin{scenario}
\label{scenario2}
    The ML model predictions $f(\mathbf{X}_i)$ can be accessed \textbf{only for the non-deferred instances}, i.e.,~if $G_i=0$.
\end{scenario}
This scenario covers cases in which model invocation is costly, may have side effects, or discloses sensitive data.
E.g., when in the owner of the deferring system is reluctant to share the deferred ML predictions during an external audit.\footnote{Since these predictions are not the system's actual output, releasing them might not be legally binding.}
Here, we can identify the causal effects \textit{of deferring to the human} locally to the deferring threshold (Section \ref{sec:s2}). Moreover, this scenario also covers cases in which the intervention to be evaluated is the introduction of the ML model within a human decision-making process, as in \textit{Ex2} from Section~\ref{sec:introduction}. Importantly, in \textit{Ex2}, we are not interested in the causal effect of deferring from the ML model to the human expert, but rather \textit{of deferring from the human expert to the ML model}. Therefore, Scenario 1 does not apply if we reverse the role of the ML model and the human expert because we cannot assume having the human expert's predictions for the cases assigned to the ML model. However, since the local causal effect of deferring from the human expert to the ML model is the opposite of the one of deferring from the ML model to the human expert, we can rely on Scenario 2 for estimating the effect.

\subsection{Scenario 1: deferring systems as an almost perfect causal inference design}\label{sec:s1}

In this scenario, we are in \textit{the ideal situation} in which both potential outcomes are observed for the deferred instances ($G_i=1)$, as both the ML model $f(\mathbf{X}_i)$ and the human $h(\mathbf{X}_i)$ predictions are available. Recalling that $n$ is the size of the test set $\mathcal{D}_n$, the following holds.

\begin{proposition}
\label{pro:ITED}
    Let Scenario \ref{scenario1} hold. Then, for each $i\in[1,n]$ such that 
    $G_i=1$:
    \[\tau_{i} = T_i(1)-T_i(0) =\indicator\{h(\mathbf{X}_i)=Y_i\} - \indicator\{f(\mathbf{X}_i)=Y_i\}.\]
\end{proposition}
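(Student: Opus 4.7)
The plan is to verify the claim by direct substitution into the definitions recorded in Table~\ref{tab: RD to LtD mapping} and then to check that, under the hypothesis of the proposition, the right-hand side is not merely a formal identity but an actually observable quantity. Concretely, the mapping fixes $T_i(0) = u(f(\mathbf{X}_i), Y_i)$ and $T_i(1) = u(h(\mathbf{X}_i), Y_i)$, and for the remainder of the section $u$ is pinned to the correctness indicator $u(\hat{y}, y) = \indicator\{\hat{y} = y\}$. Plugging this choice in gives $T_i(0) = \indicator\{f(\mathbf{X}_i) = Y_i\}$ and $T_i(1) = \indicator\{h(\mathbf{X}_i) = Y_i\}$, so by the definition of the individual causal effect $\tau_i = T_i(1) - T_i(0)$, the algebraic equality stated in the proposition follows immediately.

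The substantive step is to justify that, for a deferred unit, both terms on the right-hand side are realized from observable data rather than being counterfactual. First, Scenario~\ref{scenario1} guarantees access to $f(\mathbf{X}_i)$ on the whole random sample, which in particular includes the subsample $\{i : G_i = 1\}$; consequently $\indicator\{f(\mathbf{X}_i) = Y_i\}$ can be computed for deferred units. Second, the standing assumption of the paper is that human predictions $h(\mathbf{X}_i)$ are observed precisely for the deferred instances, so $\indicator\{h(\mathbf{X}_i) = Y_i\}$ is likewise available whenever $G_i = 1$. I would also note the consistency relation $T_i = G_i \cdot T_i(1) + (1-G_i) \cdot T_i(0)$ already recorded in the text, which connects the observed outcome to the two potential outcomes and confirms that for $G_i = 1$ the recovered $T_i(1)$ agrees with the realized $T_i$.

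There is no genuine obstacle, since the proposition is essentially a bookkeeping statement: it records that deferring systems under Scenario~\ref{scenario1} evade the fundamental problem of causal inference on the treated subsample. The only thing that requires care in presentation is to emphasize \emph{why} the problem is sidestepped rather than resolved by some untestable assumption; the reason is that $f$ is a deterministic predictor which can be re-evaluated at every $\mathbf{X}_i$ at no epistemic cost. I would close with a one-line remark to this effect, pointing forward to the aggregation into $\tatd$ and $\tcatd$, which is where the content of the section actually lies.
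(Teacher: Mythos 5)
Your proposal is correct and follows essentially the same route as the paper's own proof: the identity is a direct consequence of the definitions $T_i(0)=\indicator\{f(\mathbf{X}_i)=Y_i\}$ and $T_i(1)=\indicator\{h(\mathbf{X}_i)=Y_i\}$, and the substantive point is that both are observable for a deferred unit because Scenario~\ref{scenario1} supplies $f(\mathbf{X}_i)$ on the whole sample while $G_i=1$ supplies $h(\mathbf{X}_i)$. Your additional remarks on the consistency relation and on why the fundamental problem of causal inference is sidestepped are harmless elaborations of the same argument.
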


All the proofs are in Appendix \ref{sec:appendixProof}.
Because we can compute the most granular causal effect on deferred instances, $\tau_{i}$, we can also retrieve less granular quantities (see Figure \ref{fig:atdIdentification}). For instance, if we average the $\tau_{i}$ over the population of deferred units, we obtain the \textit{average treatment effect on the %treated 
deferred}, $\tatd$, which is the deferring systems' equivalent of $\tau_{\mathtt{ATT}}$ in \eqref{eq:causalestimands}. We identify this causal estimand as follows.
\begin{proposition}
\label{pro:ATD}
    Let Scenario \ref{scenario1} hold. Then: 
    \begin{align*}
        \tatd = \mathbb{E}[T(1)-T(0)|G=1]= \\
        \mathbb{E}\left[\indicator\{h(\mathbf{X})=Y\}\mid G=1\right] - \mathbb{E}\left[ \indicator\{f(\mathbf{X})=Y\}\mid G=1\right].
    \end{align*}
\end{proposition}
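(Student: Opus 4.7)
The plan is to derive Proposition \ref{pro:ATD} as the conditional-expectation counterpart of Proposition \ref{pro:ITED}, using linearity of expectation and the explicit form of the two potential outcomes.

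First, I would expand the definition of $\tatd$ by linearity:
\[
\tatd = \mathbb{E}[T(1)-T(0)\mid G=1] = \mathbb{E}[T(1)\mid G=1] - \mathbb{E}[T(0)\mid G=1].
\]
This is legitimate as long as the individual expectations exist, which follows from the fact that $T(0),T(1)\in\{0,1\}$ under the chosen $u(\cdot,\cdot)=\indicator\{\cdot=\cdot\}$.

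Next, I would substitute the explicit expressions for the two potential outcomes from the mapping in Table~\ref{tab: RD to LtD mapping}, namely $T(1)=\indicator\{h(\mathbf{X})=Y\}$ and $T(0)=\indicator\{f(\mathbf{X})=Y\}$. Plugging these into the two conditional expectations directly yields the claimed identity. Equivalently, one can take conditional expectations $\mathbb{E}[\,\cdot \mid G=1]$ on both sides of the pointwise identity established in Proposition \ref{pro:ITED} and invoke linearity; the two routes give the same result.

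The only step that deserves a comment (rather than a true obstacle) is why the right-hand side is not merely a formal rewriting but an \emph{identifiable} quantity from the test sample. For the first term, $T(1)$ coincides with the actually realized outcome on the event $\{G=1\}$, since the human provides the prediction whenever $G_i=1$; hence $\indicator\{h(\mathbf{X})=Y\}$ is observed on this subpopulation. For the second term, $\indicator\{f(\mathbf{X})=Y\}$ is the counterfactual ML outcome on deferred units, which would be unobservable in general, but Scenario~\ref{scenario1} assumes access to $f(\mathbf{X}_i)$ for the whole random sample, so this quantity is also observed on $\{G=1\}$. Thus both conditional expectations can be consistently estimated by their empirical analogues restricted to $\{i:G_i=1\}$, completing the identification argument.
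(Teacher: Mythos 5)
Your proposal is correct and follows essentially the same route as the paper's own proof: decompose $\tatd$ by linearity of conditional expectation, substitute the definitions $T(1)=\indicator\{h(\mathbf{X})=Y\}$ and $T(0)=\indicator\{f(\mathbf{X})=Y\}$, and note that both conditional expectations are observable on $\{G=1\}$ under Scenario~\ref{scenario1}. Your added remark spelling out \emph{why} each term is observed (the human outcome because the unit is deferred, the ML outcome because Scenario~\ref{scenario1} grants access to $f(\mathbf{X}_i)$ on the whole sample) is a slightly more explicit version of the paper's closing sentence, not a different argument.
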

$\tatd$ allows to measure the (average) effect on accuracy due to the ``intervention" of deferring to a human the prediction for the deferred instances. 
Intuitively,
$\tatd$ estimates what would be the average increase in accuracy for deferred instances if the human predicts instead of the ML model. Hence, $\tatd$ motivates introducing a deferring system to stakeholders. 
Policymakers can use it to assess the impact of such systems. 
We can estimate $\tatd$ through a difference-in-means estimator of the form:
\begin{align*}
    \htatd = \frac{1}{n_1}\sum_{i\in[1,n]:g(\mathbf{x}_i)=1} \left[t_i(1) - t_i(0)\right] = \\
    \frac{1}{n_1}\sum_{i\in[1,n]:g(\mathbf{x}_i)=1} \left[\indicator\{h(\mathbf{x}_i)=y_i\} - \indicator\{f(\mathbf{x}_i)=y_i\}\right],
\end{align*}
where $n_1 := |\{i\in[1,n]:g(\mathbf{x}_i)=1\}|$ is the number of deferred instances in the test set.

The next proposition highlights that $(i)$ $\htatd$ identifies a causal quantity of interest, and $(ii)$ $\htatd$ differs from the simple difference, typically used in the literature, between the accuracy of the deferring system and the one of the ML model $\hat{\tau}_{\Delta} := \Acc_\vartheta- \Acc_f$.

\begin{proposition}
\label{pro:Inconsistency}
Let Scenario 1 hold. Then:
\\
$(i)$ as $n\to\infty$,
$\htatd \cprob \mathbb{E}[T(1) -T(0)\mid G=1]=\tatd$
$(ii)$ for each $n >0$,
$\hat{\tau}_{\Delta} = \frac{n_1}{n}\htatd.$
\end{proposition}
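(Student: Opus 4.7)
The plan is to dispatch part $(ii)$ first by a direct algebraic decomposition, then tackle part $(i)$ via a law-of-large-numbers argument framed as a ratio of sample averages.

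For part $(ii)$, I would start from the definitions and write
\[
\hat{\tau}_{\Delta} = \Acc_\vartheta - \Acc_f = \frac{1}{n}\sum_{i=1}^n\bigl[\indicator\{\vartheta(\mathbf{x}_i)=y_i\} - \indicator\{f(\mathbf{x}_i)=y_i\}\bigr],
\]
and split the sum according to whether $g(\mathbf{x}_i)=0$ or $g(\mathbf{x}_i)=1$. On the non-deferred subset $\vartheta(\mathbf{x}_i)=f(\mathbf{x}_i)$ by construction of the deferring system, so those summands vanish. The remaining sum over the $n_1$ deferred units is exactly $n \hat{\tau}_{\Delta}$ on the left-hand side and exactly $n_1 \htatd$ on the right-hand side, thanks to the closed-form expression for $\tau_i$ on deferred instances in Proposition~\ref{pro:ITED}. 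Rearranging yields $\hat{\tau}_\Delta = (n_1/n)\htatd$.

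For part $(i)$, I would rewrite the estimator as a ratio of full-sample means,
\[
\htatd = \frac{\frac{1}{n}\sum_{i=1}^n G_i\bigl[T_i(1) - T_i(0)\bigr]}{\frac{1}{n}\sum_{i=1}^n G_i},
\]
which is well-defined whenever $n_1>0$. Under Scenario~\ref{scenario1}, Proposition~\ref{pro:ITED} ensures the numerator summands $G_i[T_i(1)-T_i(0)]$ are observable indicators, and in particular bounded i.i.d.\ random variables. The weak law of large numbers then delivers convergence in probability of the numerator to $\mathbb{E}[G(T(1)-T(0))]$ and of the denominator to $\mathbb{E}[G]=P(G=1)$. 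Assuming $P(G=1)>0$ (else deferral never occurs and the estimand is vacuous), Slutsky's theorem yields convergence of the ratio to $\mathbb{E}[G(T(1)-T(0))]/P(G=1)$, which by the definition of conditional expectation equals $\mathbb{E}[T(1)-T(0)\mid G=1] = \tatd$.

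The main obstacle here is conceptual rather than technical. One must carefully justify that in Scenario~\ref{scenario1} both potential outcomes $T_i(1)$ and $T_i(0)$ are genuinely observed on the deferred subsample, so that $\htatd$ is a \emph{bona fide} sample quantity and not a counterfactual one; this is precisely what Proposition~\ref{pro:ITED} secures. Once that identification step is in place, the remaining arguments reduce to standard i.i.d.\ limit theorems applied to bounded sequences of indicators, and the algebraic identity in part $(ii)$ is routine.
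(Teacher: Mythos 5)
Your proposal is correct and takes essentially the same route as the paper's proof: part $(ii)$ by splitting the accuracy difference over deferred and non-deferred units (where the latter cancel since $\vartheta=f$ there), and part $(i)$ by writing $\htatd$ as a ratio of sample means, applying the weak law of large numbers to numerator and denominator, and using $\mathbb{E}[G(T(1)-T(0))]=\mathbb{P}[G=1]\,\mathbb{E}[T(1)-T(0)\mid G=1]$. Your explicit mention of Slutsky's theorem and the caveat $\mathbb{P}[G=1]>0$ only makes precise what the paper leaves implicit.
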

A consequence of Proposition \ref{pro:Inconsistency} is that $\hat{\tau}_{\Delta}$ is an inconsistent estimator for the causal effect $\tatd$ unless all units are deferred. Another consequence is that we can re-weight $\hat{\tau}_{\Delta}$ by $n/n_1$ to obtain a consistent estimator. We discuss this further in Appendix \ref{sec:furtherDis}.

To conclude, we highlight that under Scenario \ref{scenario1}, any aggregated metrics of the individual causal effects on the deferred can be estimated. For instance, we could estimate the \textit{conditional average treatment effects on the deferred} $\tcatd(\tilde{\mathbf{x}}) = \mathbb{E}[T(1) -T(0)|G=1, \mathbf{X}=\tilde{\mathbf{x}}]$ by further conditioning on a specific set of features $\mathbf{X}=\tilde{\mathbf{x}}$. Similarly to $\tatd$, $\tcatd$ can be estimated by considering the difference in means estimator $\htcatd$:
\[\htcatd(\tilde{\mathbf{x}}) =
\frac{1}{n_{1,\tilde{x}}}\sum_{i\in[1,n]:g(\mathbf{x}_i)=1, \mathbf{x}_i = \tilde{\mathbf{x}}} \left[t_i(1) - t_i(0)\right],
\] 
where $n_{1,\tilde{x}}:=| \{ i\in[1,n]: (g(\mathbf{x}_i) =1) \land (\mathbf{x}_i = \tilde{\mathbf{x}})\} |$.

\subsection{Scenario 2: deferring systems as an RD design}\label{sec:s2}

If we cannot access the ML model predictions for the deferred instances, we can exploit the fact that \textit{deferring systems can be interpreted as an RD design} to compute a local version of $\tau_{\mathtt{ATD}}$. In this scenario, $\tau_{\mathtt{RD}}$ answers to the question: for a fixed coverage value $c$ and the corresponding reject score threshold $\ok_c$, what would be the increase in accuracy if we let the human expert predict instead of the ML model for the instances with reject score close to $\ok_c$? Such an interpretation is possible if Assumption \ref{ass:continuity} holds. If we have reasons to believe that small changes to the reject-score threshold $\ok_c$ do not abruptly change the expected predictive accuracy of the human expert and of the ML model, then Assumption \ref{ass:continuity} is satisfied. 
\begin{proposition}
\label{thm:ltdidentifiability}
    Let Scenario \ref{scenario2} hold and let Assumption \ref{ass:continuity} be satisfied for the deferring system. Then:
    \[\lim_{k\to \ok_c^+}\mathbb{E}[T\mid K=k]-\lim_{k\to \ok_c^-}\mathbb{E}[T\mid K=k] = \trdd,\]
    where
    $\trdd := \mathbb{E}[T(1) - T(0) \mid K = \ok_c]$.
\end{proposition}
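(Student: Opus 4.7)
The plan is to reduce the claim to a direct application of Theorem~\ref{thm:rdidentifiability} via the correspondence recorded in Table~\ref{tab: RD to LtD mapping}. The observation driving the proof is that, even though Scenario~\ref{scenario2} only gives us $f(\mathbf{X}_i)$ on non-deferred units and $h(\mathbf{X}_i)$ on deferred units, the observed outcome $T_i=G_i\cdot T_i(1)+(1-G_i)\cdot T_i(0)$ is always computable because the side of the cutoff determines which potential outcome we need. In particular, the policy $G_i=\indicator\{K_i\geq\ok_c\}$ is a deterministic sharp-cutoff treatment assignment in $K$ with threshold $\ok_c$, so the setup is formally the canonical sharp RD design with $V \leftrightarrow K$, $\xi \leftrightarrow \ok_c$, $D \leftrightarrow G$, and $O \leftrightarrow T$.

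The key steps, in order, are as follows. First, I would decompose $\mathbb{E}[T\mid K=k]$ using the consistency relation. For $k\geq \ok_c$ we have $G=1$ almost surely, hence $\mathbb{E}[T\mid K=k]=\mathbb{E}[T(1)\mid K=k]$; for $k<\ok_c$ we have $G=0$ almost surely, hence $\mathbb{E}[T\mid K=k]=\mathbb{E}[T(0)\mid K=k]$. Second, I would invoke Assumption~\ref{ass:continuity}, which in our notation states that $k\mapsto\mathbb{E}[T(g)\mid K=k]$ is continuous at $\ok_c$ for $g\in\{0,1\}$, to pass the limits inside:
\begin{align*}
\lim_{k\to \ok_c^+}\mathbb{E}[T\mid K=k] &= \lim_{k\to \ok_c^+}\mathbb{E}[T(1)\mid K=k] = \mathbb{E}[T(1)\mid K=\ok_c],\\
\lim_{k\to \ok_c^-}\mathbb{E}[T\mid K=k] &= \lim_{k\to \ok_c^-}\mathbb{E}[T(0)\mid K=k] = \mathbb{E}[T(0)\mid K=\ok_c].
\end{align*}
Third, I would subtract these two equalities to recover
\[
\lim_{k\to \ok_c^+}\mathbb{E}[T\mid K=k]-\lim_{k\to \ok_c^-}\mathbb{E}[T\mid K=k]=\mathbb{E}[T(1)-T(0)\mid K=\ok_c]=\trdd.
\]

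There is no real obstacle: the substance is entirely in Theorem~\ref{thm:rdidentifiability}, and the work consists in checking that the mapping in Table~\ref{tab: RD to LtD mapping} genuinely preserves the hypotheses of that theorem. The only subtlety worth flagging in the write-up is that Scenario~\ref{scenario2}'s restricted data availability does \emph{not} weaken the argument: the observed $T_i$ on each side of $\ok_c$ already coincides with the relevant potential outcome, so the one-sided limits of $\mathbb{E}[T\mid K=k]$ are identifiable from data even though the joint distribution of $(T(0),T(1))$ is not. The continuity assumption is what then converts these two one-sided identifiable limits into the local causal contrast $\trdd$ at $K=\ok_c$.
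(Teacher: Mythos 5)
Your proof is correct and follows the same route as the paper, whose entire proof is the one-line observation that the claim is an instance of Theorem~\ref{thm:rdidentifiability} under the mapping of Table~\ref{tab: RD to LtD mapping}. You additionally unpack why that instance applies (the sharp-cutoff decomposition of $\mathbb{E}[T\mid K=k]$ on each side of $\ok_c$ and the passage to the limit under Assumption~\ref{ass:continuity}), which is essentially a re-derivation of the cited theorem's argument and is entirely consistent with the paper's intent.
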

Proposition~\ref{thm:ltdidentifiability} allows to evaluate the causal effect of deferring to a human in a decision flow even if we do not have access to ML predictions for the deferred instances. Indeed, $\trdd$ readily quantifies the gain in predictive accuracy of having the human expert predicting in place of the ML model at the cutoff (see Figure \ref{fig:rddIdentification}).
The local nature of $\trdd$ motivates the use of local non-parametric polynomial kernel regression\footnote{Local polynomial kernel regressions \citep{fan1996LocalPolynomialModelling} fit a $p$-th order polynomial locally at $k$ via weighted least squares, where the weight of each instance is determined by the shape of the kernel and is non-increasing in the distance between $k$ and $k(\mathbf{x}_i)$.} to estimate $\mathbb{E}[T\mid K=\ok_c]$ from the left and the right of the cutoff, thus obtaining an estimator $\htrdd$ of $\trdd$. 
The $\trdd$ can also be computed under Scenario \ref{scenario1}. In Appendix~\ref{sec:appendixCaveats}, we discuss additional caveats, including how to set the optimal coverage, how to check if Assumption \ref{ass:continuity} holds, and the uncertainty due to the ML model estimation.

\section{EXPERIMENTAL EVALUATION}\label{sec:exp}

In this section, we address three questions: 

\textbf{Q1}: \textit{For Scenario \ref{scenario1}, what is the causal effect on predictive accuracy of introducing a deferring system?} 

\textbf{Q2}: \textit{What other causal effects can be computed under Scenario \ref{scenario1}?}

\textbf{Q3}: \textit{For Scenario \ref{scenario2}, what is the causal effect on predictive accuracy of introducing a deferring system?}

We consider both synthetic and real data, detailing all results in Appendix \ref{sec:appendixExtraRes}.
The experimental software is available at \url{https://github.com/andrepugni/PODS}. 
We report hardware specifications and carbon footprint in Appendix \ref{sec:appendixHardware}.

\subsection{Experimental settings}
\label{sec:experimentSettings}

\paragraph{Data.}
We generate synthetic data using the procedure from \citet{DBLP:conf/aistats/MozannarLWSDS23}. Such a procedure generates samples containing (\textit{i}) instances for which the human expert performs better than the ML model, and (\textit{ii}) instances for which the ML model is better than the human expert.
Regarding the real data, 
we consider four datasets used in the LtD literature:
\texttt{cifar10h} \citep{DBLP:journals/nature/BattledayPG2020}, a hard-labelled version of
\texttt{galaxyzoo} \citep{galaxy-zoo-the-galaxy-challenge}, 
\texttt{hatespeech} \citep{DBLP:conf/icwsm/DavidsonWMW17}, and 
\texttt{xray-airspace} \citep{DBLP:conf/cvpr/WangPLLBS17,majkowska2020chest}. 
We provide data characteristics and applied pre-processing in Appendix~\ref{sec:appendixData}.

\paragraph{Baselines.}
We consider several deferring systems, including: 
\textit{Selective Prediction} (\SP{}) \citep{DBLP:conf/nips/GeifmanE17}, \textit{Compare Confidence} (\CC{}) \citep{Raghu2019}, \textit{Differentiable Triage} (\DT{}) \citep{DBLP:conf/nips/OkatiDG21}, \textit{Cross-Entropy Surrogate} (\LCE) \citep{DBLP:conf/icml/MozannarS20}, \textit{One Vs All} (\OVA{}) \citep{DBLP:conf/icml/VermaN22},  \textit{Realizable Surrogate} (\RS{}) \citep{DBLP:conf/aistats/MozannarLWSDS23} and \textit{Asymmetric SoftMax} (\ASM{}) \citep{DBLP:conf/nips/CaoM0W023}.
We provide details for the baselines and the hyper-parameter choice in Appendices~\ref{sec:appendixBaselines}, \ref{sec:appHyper}.

\paragraph{General setup.}
\label{sec:experimentsGS}
For all the experiments, we consider the following steps: 
$(i)$ we randomly split the dataset in training, validation, and test set, according to a $70\%,10\%,20\%$ proportion; 
$(ii)$ we train the deferring system on the training set; 
$(iii)$ we estimate different cutoff values $\ok_c$ over the validation set, considering each of the following target coverages~$c\in\{.10, .20, .30, .40, .50, .60, .70, .80, .90\}$; $(iv)$ for each cutoff value $\ok_c$, we estimate on the test set the deferring system accuracy as well as $\tatd$ and $\trdd$. We consider a single training, validation, and test split to adhere to real-world applications where $(i)$ the deferring system is considered as given (we do not need to validate its predictive accuracy with multiple splits), and $(ii)$ the goal is to estimate the causal effect on that specific test sample (hence, we do not have multiple splits).
The estimated $\htatd$ and $\htcatd$ are computed through a difference in means estimator and $\htrdd$ is obtained using the default implementation provided in the \texttt{rdrobust} package \citep{calonico2017RdrobustSoftwareRegressiondiscontinuity}, i.e.,~a local linear kernel regression with optimal bandwidth \citep{calonico2020optimal}. 
To assess the statistical significance of the results, we report the 95\% confidence intervals and the corresponding $p$-values ($pv$) associated with $\htatd$ and $\htrdd$ when testing the null hypotheses\footnote{We correct for multiple testing through the Bonferroni correction for all the tested hypotheses (five datasets, ten coverages under Scenario 1 and nine under Scenario 2, seven methods): to achieve significance at a family-wise error rate of $\alpha=.05$, the $p$-value must be smaller than $\num{7.52e-5}$.} of $\tatd=0$ and $\trdd=0$, respectively.

\subsection{Experimental results}
\label{sec:experiments}

Figure \ref{fig:experiments} shows the experimental results.
For each dataset, we analyze the best deferring system in terms of accuracy, as shown in the first row of Figure~\ref{fig:S1res}. We provide the experimental results for the other baselines in Appendix~\ref{sec:appendixExtraRes}.

\label{sec:expQ1}

\begin{figure*}[t]
\begin{subfigure}[t]{.75\textwidth}
    \includegraphics[scale=.2]{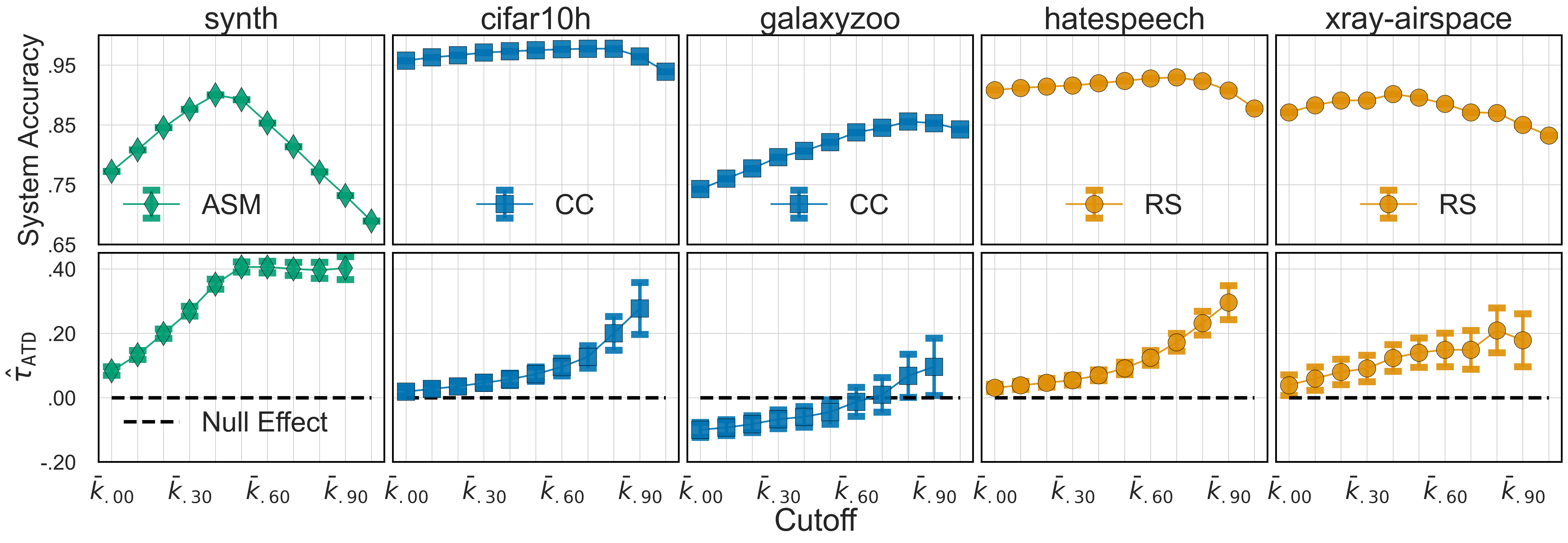}
    \caption{Best baseline's accuracy (top row) and estimated $\htatd$ (bottom row) when varying $\ok_c$.}
    \label{fig:S1res}
\end{subfigure}
\hfill
\begin{subfigure}[t]{.23\textwidth}
\includegraphics[scale=.20]{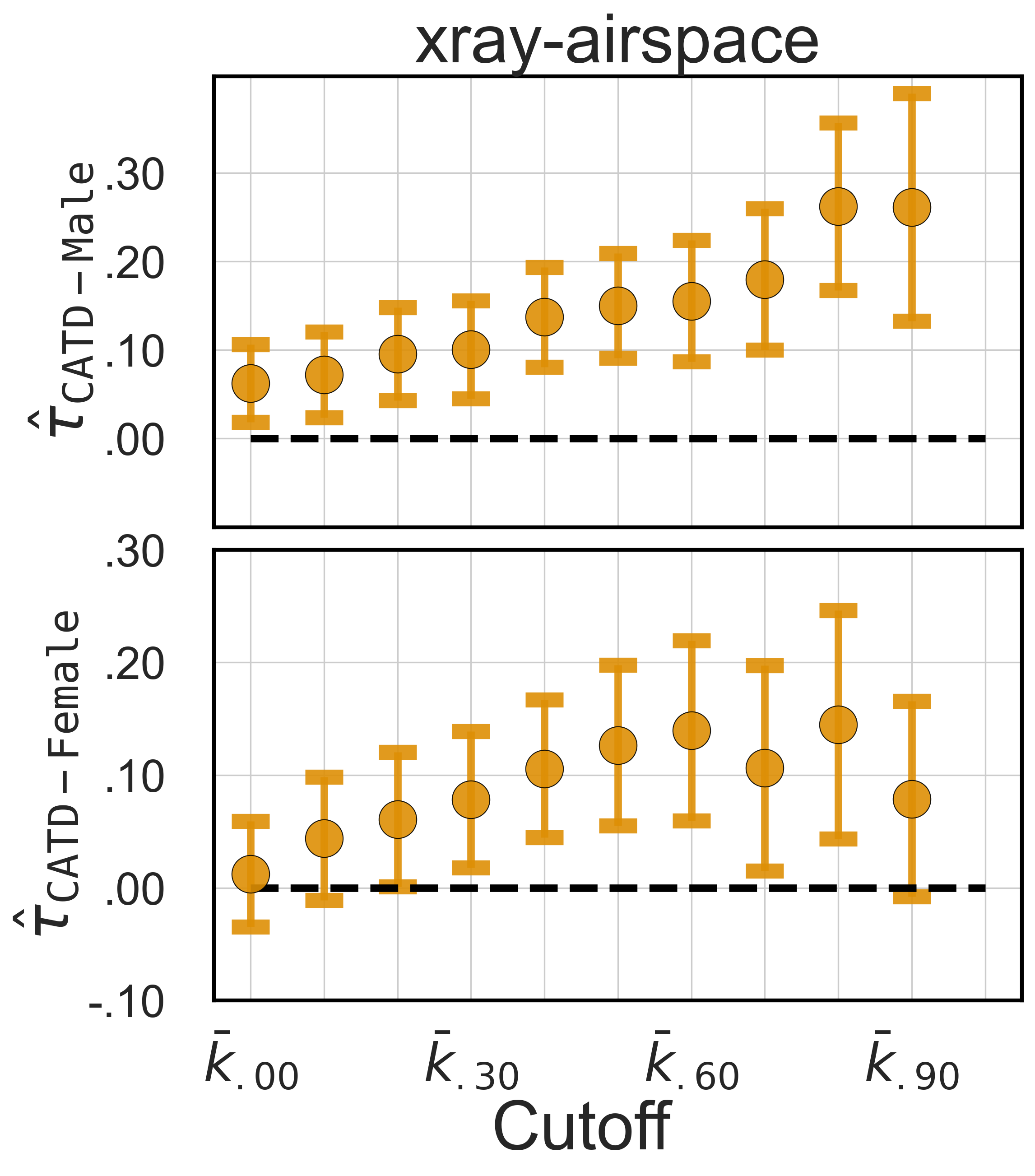}
        \caption{$\htcatd$ by gender.}
        
    \label{fig:S1CATE}
\end{subfigure}
\begin{subfigure}[t]{.24\textwidth}
\centering
\includegraphics[scale=.2]{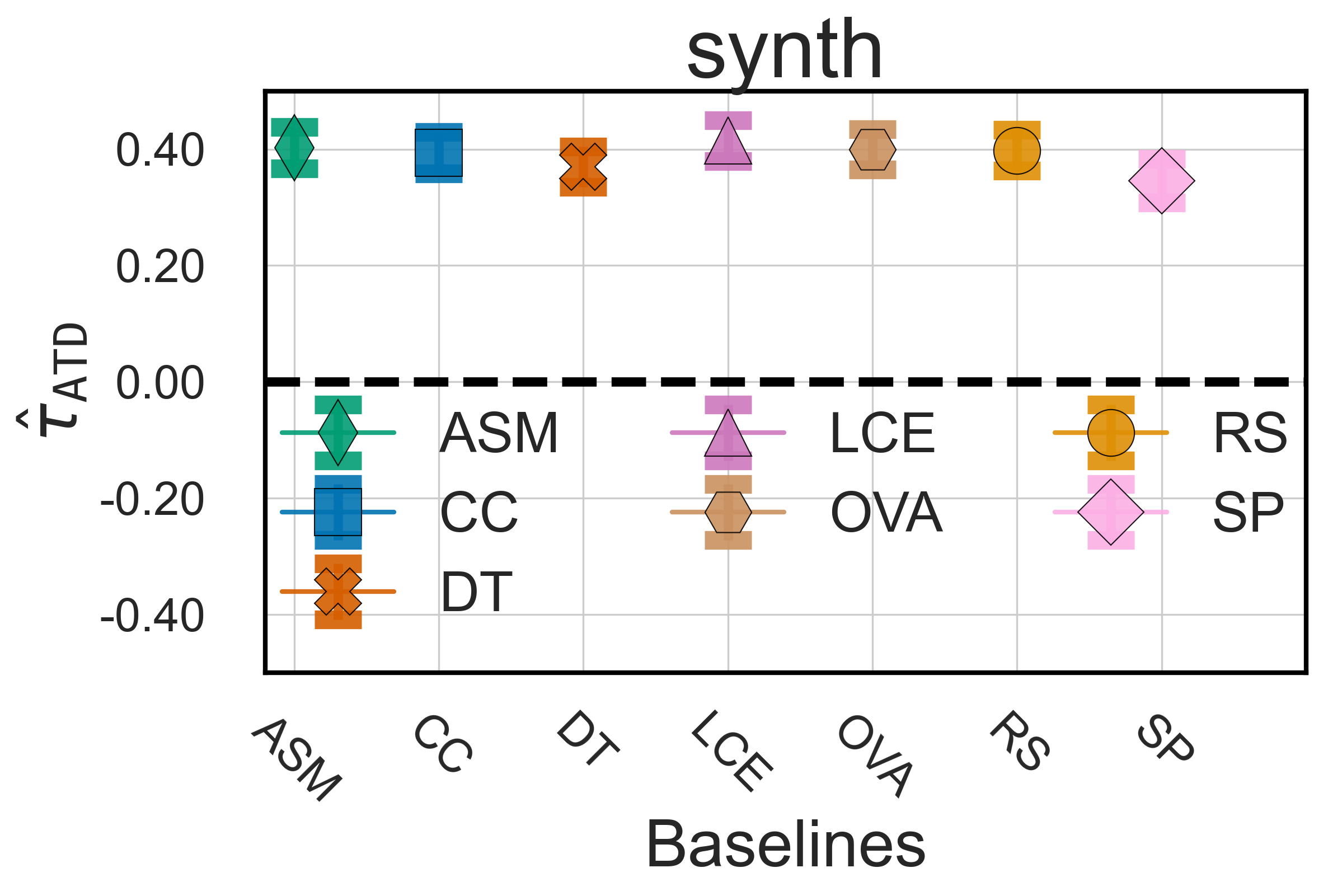}
        \caption{$\htatd$ at $c=.90$ for \texttt{synth}.}
        \label{fig:S1comparison}
\end{subfigure}
\hfill
\begin{subfigure}[t]{.75\textwidth}
    \includegraphics[scale=.20]{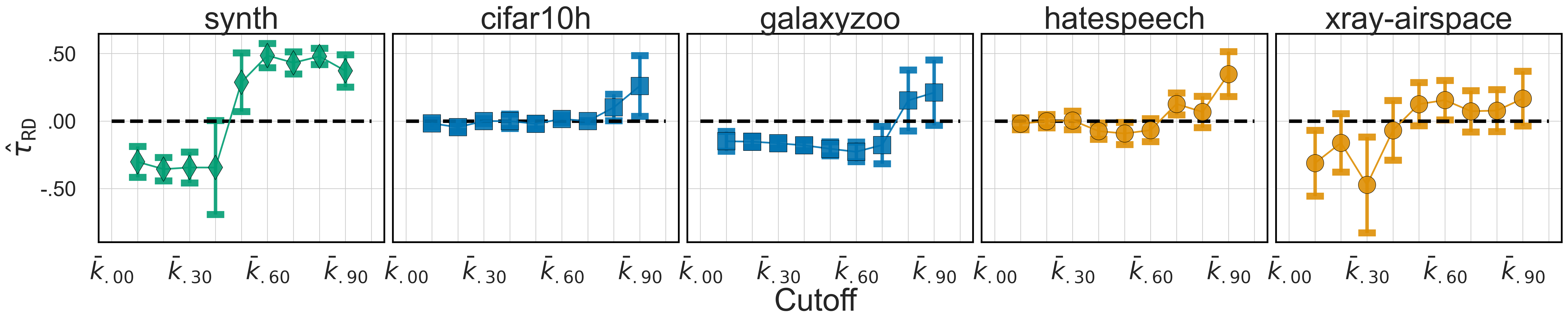}
    \caption{Estimated $\htrdd$ when varying cutoff  $\ok_c$ for the best baseline.}
    \label{fig:S2res}
\end{subfigure}
\caption{Experimental results: Figure \ref{fig:S1res} reports system accuracy and $\htatd$ (Scenario 1); Figure \ref{fig:S1CATE} reports estimated $\htcatd$ when conditioning on the gender of the patient on the \texttt{xray-airspace} dataset; Figure \ref{fig:S1comparison} compares $\htatd$ over multiple baselines on \texttt{synth}; Figure \ref{fig:S2res} reports $\htrdd$ (Scenario 2).}
\label{fig:experiments}
\end{figure*}

\paragraph{Q1: causal effects under Scenario 1.}

The second row of Figure \ref{fig:S1res} shows $\htatd$'s and their confidence intervals when varying the cutoff $\ok_c$. The black horizontal line denotes the null effect. 

For \texttt{synth}, the plot confirms the effectiveness of \ASM{}, with an increasing trend in the estimated causal effects, ranging from~$\approx.095$~$(pv\approx\num{6.98e-26})$ at zero coverage to $\approx.417$ ($pv\approx\num{7.32e-58}$) at $c=.90$. 
We see similar patterns for real data: regarding \texttt{cifar10h}, the $\htatd$'s are positive and increasing with $\ok_c$: the values range from $\approx.019$ at $c=0$  to $\approx .265$ $(pv\approx\num{2.83e-10})$ at $c=.90$. 
Also, for \texttt{hatespeech}, the causal effects monotonically increase with the coverage, with $\htatd$ ranging from $\approx.019$ $(pv\approx\num{9.54e-18})$ at $c=0$ up to $\approx.295$ $(pv\approx\num{7.40e-28})$ at $c=.90$. 
We observe an overall positive effect for the \texttt{xray-airspace} dataset as well, with the highest $\htatd$ achieved at $c=.80$ ($\approx.209$, $pv\approx\num{4.29e-9}$). All the estimates for those datasets significantly differ from zero, supporting the effectiveness of deferring.
The \texttt{galaxyzoo} dataset is an exception, with $\htatd$ taking negative values for the coverages below $c=.70$. Interestingly, despite an increase in the system accuracy for $c=.80$ and $c=.90$, the $\htatd$ are not statistically different from zero, with the $\htatd$ of $\approx.068$ $(pv\approx\num{4.69e-2})$ and $\approx.096$ $(pv\approx\num{3.56e-2})$ respectively). 
Hence, introducing a deferring system would not improve over a fully automated setting.

We point out that this causal effect cannot be quantified by examining the accuracy of the deferring system (see Proposition \ref{pro:Inconsistency}\footnote{Notice that $\hat{\tau}_{\Delta}$ in Proposition \ref{pro:Inconsistency} turns out to be the accuracy of Figure \ref{fig:S1res} minus a constant value, precisely the accuracy at $c = 1.0$.}), as normally done by previous works. In fact, the shape of the top and bottom plots in Figure \ref{fig:S1res} clearly differ.

To conclude, Figure \ref{fig:S1comparison}) compares the estimated $\htatd$'s for a fixed target coverage ($c = .90$) over multiple deferring systems.

\paragraph{Q2:\,conditional causal effects under Scenario\,1.} 
When designing a deferring system, we aim at not introducing new forms of bias or unfairness w.r.t. protected-by-law groups \citep{Ruggieri2023}. 
Since under Scenario 1, we can estimate the individual \textit{causal} effects for the deferred units, we can also study the causal effect on predictive accuracy of deferring for instances belonging to different social groups. In particular,~if $\tcatd$ is negative for the deferred instances belonging to a protected group, we can conclude that introducing the deferring system has negatively affected the predictive performance for such instances. Such an effect is causal, i.e., it can be attributed to the adoption of the deferring strategy.

In Figure \ref{fig:S1CATE}, we report such causal effects for the \texttt{xray-airspace} dataset. We consider gender as the protected attribute and compute the heterogeneous causal effect $\htcatd$ for male and female patients. As shown in the top plot, when looking at male patients, the estimated $\htcatd$ grows with the cutoff, and statistically significant effects can be observed for $c>.30$, reaching a maximum of $\approx.262$ ($pv-\num{5.83e-8}$) at $c=.80$. On the other hand, when looking at female patients (bottom plot), we see smaller positive effects of introducing a deferring system, and such effects are not statistically different from zero. This implies that introducing a deferring system benefits male patients but not female patients. We plan to further explore this fairness angle in our proposed causal evaluation framework in future work, since causal claims are essential e.g.,~from a legal perspective \cite{Bathaee2018}.

\paragraph{Q3: causal effects under Scenario 2.}
Under Scenario \ref{scenario2}, we expect the local causal effects $\trdd$ to be negative for instances where the ML model locally performs better than the human; positive for instances where the human locally outperforms the ML model; and not significant for those where the ML model and the human locally perform equally. In practice, we should see the estimated $\htrdd$ close to zero for the cutoffs that maximize the overall deferring system accuracy.

Figure \ref{fig:S2res} reports the estimated $\htrdd$'s. For \texttt{synth}, we see significant negative effects for $c\leq.30$, with the lowest $\htrdd$ of $\approx-.356$ ($pv\approx\num{9.60e-16}$) at $c=.20$; non-significant effects for $c$ between $.40$ and $.50$ and significant positive effects for $c\geq.60$, with a $\htrdd$ peaking to $\approx.485$ ($pv\num{3.07e-26}$) at $c=.60$. This aligns with the expected optimal behaviour, as the system accuracy is maximal between $c=.40$ and $c=.50$.

When looking at real datasets, we see many statistically non-significant effects (see Tables \ref{tab: synth-results}--\ref{tab: xray-airspace-results}~in the Appendix). This might be due to two factors: $(i)$ the test size is small, limiting the statistical power of the local polynomial regressions; $(ii)$ there are small differences in system accuracy at the variation of the cutoff (see, e.g., \texttt{cifar10h} and \texttt{hatespeech} in Figure \ref{fig:S1res}), hence locally to the deferring boundary the difference between the ML model and the human predictor is expected to be small. A few exceptions can be noticed. For \texttt{galaxyzoo}, the ML model performs better than the human expert on this task, translating into a few negative and statistically significant $\htrdd$ coefficients. For \texttt{hatespeech}, we see a statistically significant effect $\approx.348~(pv\approx \num{4.1e-5})$ for \RS{} at $\ok_c=.90$. Thus, the causal effect of deferring to the human expert on instances around the cutoff $\ok_{.90}$ is positive, supporting the effectiveness of the deferring strategy. 

Finally, when comparing Figure \ref{fig:S2res} to the top row of Figure \ref{fig:S1res}, we notice that $\htrdd$ is negative whenever the system accuracy is increasing, it is positive when the system accuracy drops, and it is near zero when the system accuracy flattens. This happens because $\htrdd$ is the opposite of the local change in accuracy when we slightly increase the cutoff and, thus, defer less.

\section{CONCLUSIONS}
\label{sec:conclusions}

We tackled the evaluation of the predictive performance of deferring systems from a causal perspective. Our link with the potential outcomes and with the RD design frameworks directly allows for identifying the causal effects of the deferring strategy. 
Experiments on synthetic and real datasets provided practical guidance on how to reason about the causal estimation problem.

\paragraph{Limitations and broader impact.}
\label{sec:limitations}
We assumed access to the reject scores $k(\mathbf{x})$ of instances. This is not strictly required under Scenario 1, for which only the information on whether an instance is deferred or not is required. Conversely, identifying $\trdd$ under Scenario~2 requires at least knowing the ranking induced by the reject score.
Moreover, Assumption \ref{ass:continuity} must hold to identify $\trdd$. However, such an assumption is not directly verifiable and can only be falsified. We discuss this further in Appendix~\ref{sec:appendixCaveats} and show how to falsify Assumption \ref{ass:continuity} in Appendix \ref{sec:appendixValidate}.
From a broader perspective, our evaluation framework helps to better quantify the impact of deferring systems and deploy safer ones, especially in high-risk contexts.

\paragraph{Future work.}
We plan to extend our approach $(i)$ to deferring strategies that consider multiple human experts~\citep{DBLP:conf/nips/MaoMM023,DBLP:conf/aistats/VermaBN23}; $(ii)$ to other forms of human-AI teaming, e.g., through conformal prediction~\citep{detoni2024towards} $(iii)$; to support interpretable and fair mechanisms~\citep{DBLP:conf/pkdd/LendersPPCPG24}, and $(iv)$ to account for the influence that deferring has on human behaviour, e.g.,~in strategic classification~\citep{DBLP:conf/innovations/HardtMPW16}. Finally, $(v)$ exploiting the connection between learning to defer and the policy evaluation literature \citep{DBLP:conf/icml/DudikLL11,DBLP:conf/nips/BennettK19}, we could move from policy evaluation towards (online) learning policies that maximize the causal effect of deferring.
\section*{Acknowledgments}
A. Pugnana and S. Ruggieri have received funding by PNRR - M4C2 - Investimento 1.3, Partenariato Esteso PE00000013 - ``FAIR - Future Artificial Intelligence Research" - Spoke 1 ``Human-centered AI", funded by the European Commission under the NextGeneration EU programme.
This work was also funded by the European Union under Grant Agreement no. 101120763 - TANGO. Views and opinions expressed are however those of the author(s) only and do not necessarily reflect those of the European Union or the European Health and Digital Executive Agency (HaDEA). Neither the European Union nor the granting authority can be held responsible for them. 

\bibliographystyle{apalike}
\bibliography{biblio}

%%%%%%%%%%%%%%%%%%%%%%%%%%%%%%%%%%%%%%%%%%%%%%%%%%%%%%%%%%%%

%%%%%%%%%%%%%%%%%%%%%%%%%%%%%%%%%%%%%%%%%%%%%%%%%%%%%%%%%%%%
\section*{Checklist}
 \begin{enumerate}

 \item For all models and algorithms presented, check if you include:
 \begin{enumerate}
   \item A clear description of the mathematical setting, assumptions, algorithm, and/or model. [Yes] See Table \ref{tab:exp_details}.
   \item An analysis of the properties and complexity (time, space, sample size) of any algorithm. [Yes] We include the time and carbon footprint required in Section~\ref{sec:appendixHardware}, while for sample size, we refer to Table~\ref{tab:exp_details}.
   \item (Optional) Anonymized source code, with specification of all dependencies, including external libraries. [Yes] The code can be found at \url{https://github.com/andrepugni/PODS}.
 \end{enumerate}

 \item For any theoretical claim, check if you include:
 \begin{enumerate}
   \item Statements of the full set of assumptions of all theoretical results. [Yes] All the assumptions are made in the main text.
   \item Complete proofs of all theoretical results. [Yes] Proofs are in Appendix \ref{sec:appendixProof} for space reasons.
   \item Clear explanations of any assumptions. [Yes] We detail the two scenarios in the main paper. See Section \ref{sec:methodology}.
 \end{enumerate}

 \item For all figures and tables that present empirical results, check if you include:
 \begin{enumerate}
   \item The code, data, and instructions needed to reproduce the main experimental results (either in the supplemental material or as a URL). [Yes] The code can be found at \url{https://github.com/andrepugni/PODS}. 
   \item All the training details (e.g., data splits, hyperparameters, how they were chosen). [Yes] See Section \ref{sec:experimentSettings} in the main paper and Sections\ref{sec:appHyper} and \ref{sec:appendixData} in the Appendix.
     \item A clear definition of the specific measure or statistics and error bars (e.g., with respect to the random seed after running experiments multiple times). [Yes] We detail these measures in Section \ref{sec:experimentSettings}.
     \item A description of the computing infrastructure used. (e.g., type of GPUs, internal cluster, or cloud provider). [Yes] See \ref{sec:appendixHardware}.
 \end{enumerate}

 \item If you are using existing assets (e.g., code, data, models) or curating/releasing new assets, check if you include:
 \begin{enumerate}
   \item Citations of the creator If your work uses existing assets. [Yes]
   \item The license information of the assets, if applicable. [Yes]
   \item New assets either in the supplemental material or as a URL, if applicable. [Yes] The code can be found at \url{https://github.com/andrepugni/PODS}.
   \item Information about consent from data providers/curators. [Not Applicable]
   \item Discussion of sensible content if applicable, e.g., personally identifiable information or offensive content. [Not Applicable]
 \end{enumerate}

 \item If you used crowdsourcing or conducted research with human subjects, check if you include:
 \begin{enumerate}
   \item The full text of instructions given to participants and screenshots. [Not Applicable]
   \item Descriptions of potential participant risks, with links to Institutional Review Board (IRB) approvals if applicable. [Not Applicable]
   \item The estimated hourly wage paid to participants and the total amount spent on participant compensation. [Not Applicable]
 \end{enumerate}

 \end{enumerate}

\appendix
\clearpage
\onecolumn

\section{EXTENDED RELATED WORK}
\label{sec:appRelatedWork}

\paragraph{Abstaining systems.}

The idea to allow ML models to abstain (a.k.a. to reject) from predicting dates back to the 1970s, with the seminal work by \citet{DBLP:journals/tit/Chow70}.

In the literature, two kinds of rejections have been considered: novelty rejection and ambiguity rejection.
The former provides methods that abstain when the instances are far away from the training data distribution; the latter abstains on instances close to the decision boundary of the classifier~\citep{hendrickx_machine_2024}. 

Novelty rejection is highly sought if a shift between the training and the test set distributions can occur \citep{DBLP:conf/sdm/PlasMVD23}.
Multiple approaches have been proposed for building novelty rejectors. For instance, one can estimate the marginal density of the training distribution and reject an instance if its probability is below a certain threshold \citep{DBLP:conf/icml/NalisnickMTGL19,DBLP:conf/ijcai/WangY20a}.
Another approach relies on a one-class classification model that predicts as novel the instances falling out of the region learnt from the training set~\citep{coenen2020probability}.
Alternatively, one can provide a score representing the novelty of an instance and abstain when such a score is above a certain level~\citep{DBLP:conf/iclr/LiangLS18,kuhne2021securing,perini2023unsupervised,van2023novel}.

Regarding ambiguity rejection, two main approaches have emerged in the literature: Learning to Reject (LtR) \citep{DBLP:journals/tit/Chow70} and Selective Prediction (SP) \citep{DBLP:journals/jmlr/El-YanivW10}.

LtR - based on the original work by ~\citet{DBLP:journals/tit/Chow70} - aims at learning a pair (classifier, rejector) such that the rejector determines when the classifier predicts, limiting the predictions to the region where the classifier is likely correct \citep{cortes2023theory}.
The LtR methods learn the trade-off between abstention and prediction through a parameter $a$, representing the cost of rejection~\citep{Herbei06,DBLP:conf/nips/CortesDM16,DBLP:journals/prl/Tortorella05,DBLP:conf/isbi/CondessaBCOK13}.

On the other hand, SP methods rely on confidence functions, identifying instances where the classifier is more prone to make mistakes \citep{DBLP:journals/jmlr/El-YanivW10}. Confidence values allow one to trade off coverage $c$ for selective risk, namely the risk over those instances for which a prediction is provided.
Such a trade-off can be used to frame the learning problem in two ways: either we maximize coverage given a minimal target risk we want to ensure (bounded-improvement problem), or we minimize the selective risk given a target coverage (bounded-abstention problem) \citep{DBLP:journals/jmlr/FrancPV23}.

From a practical perspective, both model-agnostic methods (e.g.,~\citep{denis2020consistency,PugnanaRuggieri2023a,PugnanaRuggieri2023b}) and model-specific ones (e.g.,~using Deep Neural Network architectures \citep{DBLP:conf/nips/GeifmanE17,DBLP:conf/icml/GeifmanE19,DBLP:conf/nips/CorbiereTBCP19,DBLP:conf/nips/Huang0020,feng2023towards}) have been proposed to solve the selective prediction task. For an extensive characterization of deep-neural-network (DNN)-based approaches, we refer to \cite{Benchmark2024}, where the authors empirically compare existing DNN-based approaches, categorizing them depending on how they abstain.

An in-depth theoretical analysis for both LtR and SP can be found in \cite{DBLP:journals/jmlr/FrancPV23}, where the authors show that both frameworks share similar optimal strategies, and a recent survey covering abstaining systems can be found in \cite{hendrickx_machine_2024}, where the authors provide an overall taxonomy of existing approaches.

\citet{DBLP:conf/nips/ChoeGR23} consider the evaluation of abstaining classifiers, in a scenario where the predictions are missing or unaccessible for the rejected instances. They define the \textit{counterfactual score} as the expected accuracy of the classifier had it not been given the option to abstain. A double-ML approach~\citep{chernozhukov2018DoubleDebiasedMachine} is used to estimate the counterfactual score.
In order to exploit results of inference 
under missing data, the paper assumes a stochastic abstention policy, which is impractical/unethical in the context of deferring systems: instances are not deferred to a human expert at random.

\paragraph{Deferring systems.} Learning to Defer (LtD) - as framed by~\citet{DBLP:conf/nips/MadrasPZ18} - is a generalization of LtR, where rather than incurring a rejection cost, the system can defer instances to human expert(s).
Compared with LtR and SP, one of the main differences is that the expert's predictions might be wrong under the LtD framework. 

From a theoretical perspective, \citet{DBLP:conf/aaai/DeKGG20} show that the problem of learning to defer when choosing a ridge regression as a base predictor is NP-hard. By reformulating the problem using submodular functions, they devise a greedy algorithm with some theoretical guarantees. Similar results also hold for the classification setting when considering margin-based classifiers, as shown in \citep{DBLP:conf/aaai/DeOZR21}.
\citet{DBLP:conf/nips/OkatiDG21} formally characterises the scenarios where a predictive model can take advantage of including humans in the loop. They show that standard ML models trained to predict over all the instances may be suboptimal when it comes to LtD, proposing a deterministic threshold rule to determine when the ML model or the human has to predict.

\noindent
Due to the difficulties in directly optimizing (\ref{eq:loss}), several approaches provide surrogate losses to learn predictors that can defer to experts:
\citet{DBLP:conf/icml/MozannarS20} propose a method that jointly learns both the rejector and the ML predictor with some generalization bounds;
\citet{DBLP:conf/icml/VermaN22} and \citet{DBLP:conf/nips/CaoM0W023} extend the work of \cite{DBLP:conf/icml/MozannarS20} by providing consistent surrogate loss with better calibration properties;
\citet{DBLP:conf/aistats/MozannarLWSDS23} provide a Mixed Integer Linear Programming formulation to solve the problem in the linear setting and a novel surrogate loss that is realizable-consistent and also Bayes consistent, as shown in \citet{DBLP:conf/nips/MaoM024b};
\citet{DBLP:conf/aistats/LiuCZF024} propose new surrogate losses that are not prone to underfitting; \citet{DBLP:conf/icml/WeiC024} consider another surrogate loss that takes into account the dependency between the ML model predictions and the human ones.

A few works also consider extensions to staged learning, where the ML model is already given and not jointly trained, e.g.,~\citet{DBLP:conf/icml/CharusaieMSS22} and \citet{DBLP:conf/nips/MaoMM023}.
To conclude, extensions to multi-experts can be found in \citet{DBLP:conf/aistats/VermaBN23} and \citet{DBLP:conf/nips/CaoM0W023}.

\paragraph{Regression discontinuity.}

The RD design first appeared in \cite{thistlethwaite1960RegressiondiscontinuityAnalysisAlternative} to study the motivational effect of public recognition on the likelihood of obtaining a scholarship. Forty years later, \cite{hahn2001IdentificationEstimationTreatment} proposed a thorough formalization of this methodology, which shows the identification of the average treatment effect on the treated via smoothness of the potential outcomes. An alternative framework for identification has been presented in \cite{lee2008RandomizedExperimentsNonrandom} and \cite{cattaneo2015RandomizationInferenceRegression}, where the authors carefully propose conditions under which, at least near the cutoff, the RD design can be interpreted as an RCT. Early reviews are \cite{imbens1994IdentificationEstimationLocal} and \cite{lee2010RegressionDiscontinuityDesigns}, whereas a more recent one contrasting the two approaches mentioned above is \cite{cattaneo2022RegressionDiscontinuityDesigns}. 

For estimation purposes, the local nature of $\tau_{\mathtt{RD}}$ motivates the use of local non-parametric polynomial kernel regressions estimators from the left and from the right of the cutoff (for a review see \cite{fan1996LocalPolynomialModelling}). Particular attention in the literature has been devoted to how to optimally choose the smoothing bandwidth used in local polynomial estimation (see \cite{imbens1994IdentificationEstimationLocal,calonico2014RobustNonparametricConfidence, kolesar2018InferenceRegressionDiscontinuity}) and how to correct for the smoothing bias and conduct valid inference accordingly \citep{calonico2018EffectBiasEstimation, calonico2022CoverageErrorOptimal}. For a practical introduction to RD and more information on how to choose the other tuning parameters (kernel shape and degree of the polynomial), see \cite{cattaneo2019PracticalIntroductionRegression}. 

\setcounter{proposition}{0}
\section{PROOFS}
\label{sec:appendixProof}
We report here the proofs for propositions from Section \ref{sec:methodology}.
For the reader's convenience, we restate the claims.
\begin{proposition}
    Let Scenario \ref{scenario1} hold. Then, for each $i\in[1,n]$ such that 
    $G_i=1$:
    \[\tau_{i} = T_i(1)-T_i(0) =\indicator\{h(\mathbf{X}_i)=Y_i\} - \indicator\{f(\mathbf{X}_i)=Y_i\}.\]
\end{proposition}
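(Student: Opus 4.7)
The plan is to observe that this proposition is essentially a direct unpacking of the definitions given in the mapping of Table~\ref{tab: RD to LtD mapping}, combined with the observability conditions of Scenario~\ref{scenario1}. The key insight is that, contrary to the generic causal-inference setting where only one of the two potential outcomes is observable per unit (the ``fundamental problem of causal inference''), the specific structure of deferring systems under Scenario~\ref{scenario1} breaks this impasse on the subpopulation of deferred instances.

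First I would recall that the outcome function was specified to be $u(\vartheta(\mathbf{X}_i),Y_i) = \indicator\{\vartheta(\mathbf{X}_i) = Y_i\}$, so that the two potential outcomes become $T_i(0) = \indicator\{f(\mathbf{X}_i) = Y_i\}$ and $T_i(1) = \indicator\{h(\mathbf{X}_i) = Y_i\}$ by definition. Substituting into $\tau_i = T_i(1)-T_i(0)$ already yields the claimed right-hand side as an algebraic identity; what remains is to justify that both terms are actually well-defined (i.e., observable quantities, not merely counterfactual ones) for the deferred instances.

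Next I would argue observability separately for the two terms. For $i$ with $G_i = 1$, the term $\indicator\{h(\mathbf{X}_i)=Y_i\}$ is observable by the general assumption stated in Section~\ref{sec:methodology} that human predictions are available precisely when an instance is deferred. The term $\indicator\{f(\mathbf{X}_i)=Y_i\}$ is observable by the defining condition of Scenario~\ref{scenario1}, namely that $f(\mathbf{X}_i)$ can be accessed for the whole sample, including deferred instances. Together with the label $Y_i$ from the test set $\mathcal{D}_n$, both indicators, and therefore $\tau_i$, can be computed directly.

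I do not anticipate a genuine obstacle here: the statement is essentially a bookkeeping result that certifies that the identification barrier disappears under Scenario~\ref{scenario1}. The only point worth being careful about is to explicitly separate the two sources of observability (the always-available ML prediction vs.\ the deferral-conditional human prediction), since the proposition's conclusion depends on both holding simultaneously, which is exactly what is guaranteed on the set $\{i : G_i = 1\}$.
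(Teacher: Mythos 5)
Your proposal is correct and matches the paper's own proof: both simply note that $T_i(0)=\indicator\{f(\mathbf{X}_i)=Y_i\}$ is observable under Scenario~\ref{scenario1} and $T_i(1)=\indicator\{h(\mathbf{X}_i)=Y_i\}$ is observable because $G_i=1$, so the identity follows by definition. Your explicit separation of the two sources of observability is exactly the argument the paper gives, just spelled out slightly more fully.
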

\begin{proof}
Identification is immediate because both potential outcomes are observed for unit $i\in[1,n]$ that has also been evaluated by a human ($G_i=1$). In particular,  $T_i(0) = \indicator\{f(\mathbf{X}_i)=Y_i\}$ is observable because we are under Scenario~\ref{scenario1} and $T_i(1)=\indicator\{h(\mathbf{X}_i)=Y_i\})$ since $G_i=1$.
\end{proof}

\begin{proposition}
Let Scenario \ref{scenario1} hold. Then:
\[ \tatd = \mathbb{E}[T(1)-T(0)|G=1]= \mathbb{E}\left[\indicator\{h(\mathbf{X})=Y\}\mid G=1\right] - \mathbb{E}\left[ \indicator\{f(\mathbf{X})=Y\}\mid G=1\right].\]
\end{proposition}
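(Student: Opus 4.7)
The plan is to derive the claim by applying the definition of $\tatd$, the linearity of conditional expectation, and the explicit form of the potential outcomes fixed in the mapping of Table \ref{tab: RD to LtD mapping}. There is nothing deep here: Proposition \ref{pro:ITED} already identifies $T_i(1)$ and $T_i(0)$ pointwise on deferred units under Scenario \ref{scenario1}, so aggregating via a conditional expectation is the only additional step.

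First I would start from the definition $\tatd := \mathbb{E}[T(1)-T(0)\mid G=1]$ and split it, by linearity of conditional expectation, into $\mathbb{E}[T(1)\mid G=1] - \mathbb{E}[T(0)\mid G=1]$. Then I would substitute the specified outcome mapping, namely $T(1) = \indicator\{h(\mathbf{X})=Y\}$ and $T(0)=\indicator\{f(\mathbf{X})=Y\}$, to obtain the right-hand side of the claim. Scenario \ref{scenario1} plays the role of guaranteeing that both $\indicator\{h(\mathbf{X})=Y\}$ and $\indicator\{f(\mathbf{X})=Y\}$ are observable on the event $\{G=1\}$: the first because the human is queried exactly on deferred instances, the second because the ML model predictions are accessible on the entire sample. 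Hence both conditional expectations can be identified from the data.

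The main (very small) obstacle is to be explicit that the required integrability of the potential outcomes $T(0),T(1)$ is automatic: both are $\{0,1\}$-valued, so their conditional expectations trivially exist and linearity applies with no qualifications. I would also note that the identity is purely population-level and that SUTVA (already assumed in Section \ref{sec:background-causal}) ensures the potential outcomes for unit $i$ depend only on that unit's own treatment $G_i$, so the substitution $T_i(g)\mapsto \indicator\{\cdot\}$ from Proposition \ref{pro:ITED} lifts to the conditional expectation without interference terms. This concludes the argument in one short display.
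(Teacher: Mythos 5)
Your argument is correct and follows essentially the same route as the paper's proof: apply linearity of conditional expectation to $\tatd$ and substitute $T(1)=\indicator\{h(\mathbf{X})=Y\}$, $T(0)=\indicator\{f(\mathbf{X})=Y\}$, with Scenario~\ref{scenario1} guaranteeing that both conditional expectations are observable on $\{G=1\}$. The extra remarks on integrability and SUTVA are harmless but not needed beyond what the paper already assumes.
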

\begin{proof}
    We have that:
    \begin{align*}
    \tatd &= 
    \mathbb{E}\left[T(1) \mid G=1\right] - \mathbb{E}\left[T(0)\mid G=1\right] \\  
        &=\mathbb{E}\left[\indicator\{h(\mathbf{X})=Y\}\mid G=1\right] - \mathbb{E}\left[\indicator\{f(\mathbf{X})=Y\}\mid G=1\right],
    \end{align*}
    where the last two quantities are observable under Scenario~\ref{scenario1}.
\end{proof}

\begin{proposition}
Let Scenario 1 hold. Then:
\\
$(i)$ as $n\to\infty$,
$\htatd \cprob \mathbb{E}[T(1) -T(0)\mid G=1]=\tatd $\\
$(ii)$ for each $n >0$,
$\hat{\tau}_{\Delta} = \frac{n_1}{n}\htatd.$
\end{proposition}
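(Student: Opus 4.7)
The plan is to handle the two parts separately: part $(ii)$ is a purely algebraic identity that requires decomposing the sum defining $\hat{\tau}_{\Delta}$ according to whether each unit is deferred, while part $(i)$ is a standard law-of-large-numbers argument applied to the difference-in-means estimator, treated as a ratio of sample averages.

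For part $(ii)$, I would start from the definition $\hat{\tau}_{\Delta} = \Acc_\vartheta - \Acc_f = \frac{1}{n}\sum_{i=1}^n [\indicator\{\vartheta(\mathbf{x}_i)=y_i\} - \indicator\{f(\mathbf{x}_i)=y_i\}]$ and split the sum into the deferred and non-deferred subsets. By definition of $\vartheta$, on the non-deferred instances $\vartheta(\mathbf{x}_i) = f(\mathbf{x}_i)$, so those terms vanish. On the deferred instances $\vartheta(\mathbf{x}_i) = h(\mathbf{x}_i)$, so the summand becomes exactly $t_i(1) - t_i(0)$ under $u(\cdot,\cdot) = \indicator\{\cdot = \cdot\}$. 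Factoring $n_1/n$ out of the remaining sum and recognizing the normalized average as $\htatd$ yields the identity. This step is essentially bookkeeping; no real obstacle.

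For part $(i)$, I would rewrite $\htatd$ as a ratio of two sample means over the full sample:
\[
\htatd \;=\; \frac{\frac{1}{n}\sum_{i=1}^n G_i \bigl(\indicator\{h(\mathbf{X}_i)=Y_i\} - \indicator\{f(\mathbf{X}_i)=Y_i\}\bigr)}{\frac{1}{n}\sum_{i=1}^n G_i}.
\]
Under Scenario \ref{scenario1} both the numerator and denominator summands are observable, bounded, and i.i.d.\ (since the $(\mathbf{X}_i, Y_i)$ are i.i.d.\ and $G_i$, $f(\mathbf{X}_i)$, $h(\mathbf{X}_i)$ are measurable functions of them). The weak law of large numbers then gives that the denominator converges in probability to $\mathbb{P}(G=1)$ and the numerator to $\mathbb{E}[G(T(1)-T(0))]$. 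Assuming $\mathbb{P}(G=1) > 0$, I would apply the continuous mapping theorem (or Slutsky) to conclude convergence of the ratio to $\mathbb{E}[G(T(1)-T(0))] / \mathbb{P}(G=1)$, which equals $\mathbb{E}[T(1)-T(0)\mid G=1] = \tatd$ by the definition of conditional expectation.

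The main obstacle, such as it is, is the degenerate edge case $\mathbb{P}(G=1) = 0$, where $\htatd$ is undefined; this should be flagged as an implicit regularity condition (the coverage $c < 1$ so that a positive fraction of instances is deferred). A minor detail to be careful about is that Proposition \ref{pro:ITED} already ensures the summands in the numerator coincide with $T_i(1)-T_i(0)$ on the event $G_i=1$, so the rewriting as a ratio is valid without any additional identifiability assumption beyond Scenario \ref{scenario1}.
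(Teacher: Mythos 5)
Your proposal is correct and follows essentially the same route as the paper's proof: part $(ii)$ by splitting the accuracy difference over deferred and non-deferred instances, and part $(i)$ by writing $\htatd$ as a ratio of sample means, applying the weak law of large numbers to each, and using $\mathbb{E}[G(T(1)-T(0))]=\mathbb{P}[G=1]\,\mathbb{E}[T(1)-T(0)\mid G=1]$. Your explicit mention of the continuous mapping theorem and of the degenerate case $\mathbb{P}(G=1)=0$ only makes explicit what the paper leaves implicit.
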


\begin{proof}
For $(i)$, let $g_i:\indicator\{g(\mathbf{x}_i)=1\}$ and so $n_1:=\sum_{i=1}^ng_i$. Then:
    \begin{align*}
        \htatd&=\frac{1}{n_1}\left(\sum_{i:g(\mathbf{x}_i)=1}\left(\indicator\{h(\mathbf{x}_i)=y_i\}-\indicator\{f(\mathbf{x}_i)=y_i\}\right)\right)\\
        &=\left(\frac{n_1}{n}\right)^{-1} \frac{1}{n}\left(\sum_{i=1}^ng_i\left(\indicator\{h(\mathbf{x}_i)=y_i\}-\indicator\{f(\mathbf{x}_i)=y_i\}\right)\right).
    \end{align*}
    The weak law of large numbers immediately gives us:
    \[\frac{n_1}{n} \cprob \mathbb{P}[G=1], \qquad \frac{1}{n}\left(\sum_{i=1}^ng_i\left(\indicator\{h(\mathbf{x}_i)=y_i\}-\indicator\{f(\mathbf{x}_i)=y_i\}\right)\right)\cprob \mathbb{E}[G(T(1)-T(0))].\]
    Then, using the fact that $\mathbb{E}[G(T(1)-T(0))] =\mathbb{P}[G=1] \mathbb{E}[T(1)-T(0)\mid G=1],$ we can conclude that $\htatd\cprob \tatd$.\\
For $(ii)$, by definition, we have that:
    \[\Acc_\vartheta := \frac{1}{n}\sum_{i=1}^n\indicator\{\vartheta(\mathbf{x}_i)=y_i\} = \frac{1}{n}\left[\sum_{i:g(\mathbf{x}_i)=0}\indicator\{f(\mathbf{x}_i)=y_i\} + \sum_{i:g(\mathbf{x}_i)=1}\indicator\{h(\mathbf{x}_i)=y_i\}\right],\]
    and
    \[\Acc_f := \frac{1}{n}\sum_{i=1}^n\indicator\{f(\mathbf{x}_i)=y_i\} = \frac{1}{n}\left[\sum_{i:g(\mathbf{x}_i)=0}\indicator\{f(\mathbf{x}_i)=y_i\} + \sum_{i:g(\mathbf{x}_i)=1}\indicator\{f(\mathbf{x}_i)=y_i\}\right].\]
    Therefore, when we consider the difference between the two accuracies we have that:
    \begin{align*}
            \hat{\tau}_{\Delta} :&= \Acc_\vartheta- \Acc_f \\
            &= \frac{1}{n}\left(\sum_{i:g(\mathbf{x}_i)=1}\left(\indicator\{h(\mathbf{x}_i)=y_i\}-\indicator\{f(\mathbf{x}_i)=y_i\}\right)\right)  \\
            &=\frac{n_1}{n}\frac{1}{n_1}\left(\sum_{i:g(\mathbf{x}_i)=1}\left(\indicator\{h(\mathbf{x}_i)=y_i\}-\indicator\{f(\mathbf{x}_i)=y_i\}\right)\right) \\
            &= \frac{n_1}{n}\htatd.
    \end{align*}
\end{proof}

\begin{proposition}
    Let Scenario \ref{scenario2} hold and let Assumption \ref{ass:continuity} be satisfied for the deferring system. Then:
    \[\lim_{k\to \ok_c^+}\mathbb{E}[T\mid K=k]-\lim_{k\to \ok_c^-}\mathbb{E}[T\mid K=k] = \trdd,\]
    where
    $\trdd := \mathbb{E}[T(1) - T(0) \mid K = \ok_c]$.
\end{proposition}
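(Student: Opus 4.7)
The plan is to recognize that the stated proposition is a direct transcription of Theorem~\ref{thm:rdidentifiability} into the deferring systems setting via the correspondence in Table~\ref{tab: RD to LtD mapping}, so the bulk of the work is verifying that the hypotheses of the underlying Hahn--Todd--Van der Klaauw result are met after renaming variables, and then chaining the standard RD identification argument.

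First, I would pin down the sharp RD structure. Under Scenario~\ref{scenario2}, the treatment indicator is $G_i=\indicator\{K_i\geq \ok_c\}$, which is a deterministic function of the running variable $K_i$; hence every unit with $K_i\geq \ok_c$ is deferred to the human and every unit with $K_i<\ok_c$ is handled by the ML model. Combined with SUTVA, this yields the consistency relation $T_i = G_i\,T_i(1)+(1-G_i)\,T_i(0)$, so that observed outcomes coincide with the corresponding potential outcome on each side of the cutoff: $T_i=T_i(1)$ whenever $K_i\geq \ok_c$ and $T_i=T_i(0)$ whenever $K_i<\ok_c$. This is exactly the structural mapping $(V,\xi,D,O,O(d)) \leftrightarrow (K,\ok_c,G,T,T(g))$ advertised in Table~\ref{tab: RD to LtD mapping}.

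Second, I would translate the observed conditional expectations into conditional expectations of potential outcomes on each side of the cutoff. For $k>\ok_c$, the consistency relation gives $\mathbb{E}[T\mid K=k]=\mathbb{E}[T(1)\mid K=k]$; for $k<\ok_c$, it gives $\mathbb{E}[T\mid K=k]=\mathbb{E}[T(0)\mid K=k]$. Taking one-sided limits and invoking Assumption~\ref{ass:continuity} (stated for the deferring system, i.e., for the potential outcomes $T(0),T(1)$ as functions of $K$ at $\ok_c$) yields
\[
\lim_{k\to \ok_c^+}\mathbb{E}[T\mid K=k]=\mathbb{E}[T(1)\mid K=\ok_c],\qquad
\lim_{k\to \ok_c^-}\mathbb{E}[T\mid K=k]=\mathbb{E}[T(0)\mid K=\ok_c],
\]
where the equalities follow because continuity of $k\mapsto\mathbb{E}[T(g)\mid K=k]$ at $\ok_c$ forces left and right limits to equal the value at the cutoff.

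Finally, subtracting these two identities gives exactly
\[
\lim_{k\to \ok_c^+}\mathbb{E}[T\mid K=k]-\lim_{k\to \ok_c^-}\mathbb{E}[T\mid K=k]
= \mathbb{E}[T(1)-T(0)\mid K=\ok_c] = \trdd,
\]
which closes the argument. The only conceptual subtlety, and the step I would be most careful about, is making explicit that Assumption~\ref{ass:continuity} is being applied in the translated setting (continuity of $k\mapsto \mathbb{E}[T(g)\mid K=k]$ at $\ok_c$ for $g\in\{0,1\}$) and that this plus $G_i=\indicator\{K_i\geq \ok_c\}$ is enough to invoke Theorem~\ref{thm:rdidentifiability} verbatim; there is no additional machinery needed beyond the mapping itself.
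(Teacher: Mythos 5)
Your proposal is correct and follows essentially the same route as the paper, which simply observes that the proposition is an instance of Theorem~\ref{thm:rdidentifiability} under the variable mapping of Table~\ref{tab: RD to LtD mapping}. You additionally spell out the standard identification argument (consistency via $G_i=\indicator\{K_i\geq\ok_c\}$, one-sided limits, and continuity of the conditional expected potential outcomes) that the paper leaves implicit by citing \citet{hahn2001IdentificationEstimationTreatment}, but there is no substantive difference in approach.
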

\begin{proof}
This is an instance of Theorem~\ref{thm:rdidentifiability}.
\end{proof}

\section{EXTENSIONS AND LIMITATIONS}
\label{sec:appendixCaveats}
Here are a few caveats regarding the proposed framework.
\paragraph{Further discussion on Proposition 3 and identifiability}
\label{sec:furtherDis}

If we have access to accuracy estimates for $\Acc_{\vartheta}$ and $\Acc_f$, we can just re-weight $\hat{\tau}_{\Delta}$ by $n/n_1$ to obtain a consistent estimator of $\tatd$. This depicts a setting in between Scenario 1 and Scenario 2. Indeed, having access to $n_1$, $n$, and accuracy estimates for the ML model and the deferring system allows us to obtain an estimate for $\tatd$ without requiring the full knowledge of individual ML model predictions for the deferred instances. We further note that this reweighting is nothing else than a Horvitz-Thompson type of reweighting \citep{horvitz1952generalization} where the observations are weighted by the inverse of the sampling probability. Here the ``sampling probability" denotes the probability that an instance is assigned to a human.

\paragraph{Extending to global effects} Following \citet{DBLP:conf/aistats/KallusZ18} and  \citet{DBLP:conf/nips/KallusU20}, assuming some regularity conditions, one could try to approximate the deferral strategy (read a deterministic policy) by some other stochastic policies. Such an approach seems promising in helping us extend the estimation to global effects when we cannot access both human and ML predictions. However, we stress that we do not need to approximate the deterministic deferral strategy if Scenario 1 holds, and the estimation would be straightforward for the causal effects on the deferred. Similarly, if the regularity assumptions do not hold, one can always use the Scenario 2 setting to estimate a local effect.

\paragraph{Is Assumption \ref{ass:continuity} met?}
Assumption \ref{ass:continuity} is required to be able to identify the causal effect under Scenario \ref{scenario2}. Formally, such an assumption requires continuity of the expected predictive accuracy at coverage level $\ok_c$ for \textit{both} the ML model and the human. In practical terms, there is no reason to believe that the accuracy of the ML model would abruptly change in a neighborhood of $\ok_c$.  However, continuity of $\mathbb{E}[T(1)\mid K=k]$ around $\ok_c$ could be falsified, e.g.,~if the expert would put extra effort into predicting deferred instances compared to the non-deferred ones. Concerning this aspect, \citet{DBLP:conf/aaai/BondiKSCBCPD22} show that the role of communicating the deferral status can indeed impact human performance. On the one hand, they observe improvements in human accuracy for those instances where the ML model is correct if the deferral choice is communicated to the human predictors. On the other hand, they do not observe a statistically significant effect in those instances in which the ML model makes mistakes. However, the deferring system considered by \citet{DBLP:conf/aaai/BondiKSCBCPD22} does not take into account where the human performs better than the ML model. Conversely, \citet{DBLP:conf/iui/HemmerWSVVS23} show that if the ML model explicitly considers where the human outperforms the model, the results improve even without communicating the deferral decision.
Therefore, we advise considering behavioural aspects when developing and evaluating a deferring system.

\paragraph{Is Assumption \ref{ass:continuity} testable?} 
\label{sec:assContinuityChecks}
Unfortunately, Assumption \ref{ass:continuity} is not directly testable because ML predictions ($T(0)$) and human predictions ($T(1)$) are not available when $K\geq \ok_c$ or $K<\ok_c$, respectively. However, it is possible (and suggested) to test the implications of Assumption \ref{ass:continuity}. In what follows, we briefly describe three different falsification tests. We refer the reader interested in a more detailed review and guide on these (and other) tests to \citep{lee2010RegressionDiscontinuityDesigns} and \cite[Chapter 5]{cattaneo2019PracticalIntroductionRegression}. 

\textit{Non-manipulation of the running variable}. One instance in which Assumption \ref{ass:continuity} does not hold is when units know the rule with which the running variable $K_i$ is computed and/or can manipulate its value. Accordingly, a feasible falsification test involves checking whether the empirical probability distribution of the running variable is smooth (i.e.,~it does not jump) at the cutoff. This test can be formally conducted by estimating the density of the running variable with histograms or kernel density estimators \citep{cattaneo2019PracticalIntroductionRegression,cattaneo2020SimpleLocalPolynomial}. For a practical example, see Section \ref{sec:appendixValidate} and Figures \ref{fig:synth estimated density best}-\ref{fig:xray-airspace estimated density best}.

\textit{No effect on predetermined features and placebo outcomes}.
In a similar spirit to what we described above, another falsification test involves comparing treated and control units near the cutoff to see if they share similar observable traits: if units can't manipulate their score, there should not be systematic differences between units close to the cutoff, aside from their treatment status. As such, units just above and below the cutoff should resemble each other in all aspects unaffected by the treatment. These aspects (or features) can be \textit{predetermined features}, variables realizing before treatment assignment, or \textit{placebo outcomes}, variables that should not have been influenced by the treatment. This test can be conducted by estimating an RD where the outcome variable is either a predetermined feature or a placebo outcome and checking that the null hypothesis of no effect is not rejected. An application of this falsification test can be found in Section \ref{sec: placebo tests} and in the fourth row of Figure \ref{fig:placebo-test}.

\textit{Placebo cutoffs}. Another type of falsification test involves checking if statistically significant treatment effects can be estimated using artificial cutoff values. We stress that the evidence of no effect --hence smoothness of the expected potential outcomes-- away from the cutoff is neither sufficient nor necessary for Assumption \ref{ass:continuity} to hold, but the presence of discontinuities in other places might discredit such an assumption. To conduct such a test, one has to estimate the RD using a cutoff value that is different from the original one. Section \ref{sec: placebo tests} and the second and third rows of Figure \ref{fig:placebo-test} showcase this falsification test in our empirical applications.

\paragraph{Optimal coverage} We again stress that RD designs are local in nature. Hence, under Scenario \ref{scenario2}, without imposing additional strong parametric assumptions on the shape of $\mathbb{E}[T(d)|K=k],d\in\{0,1\}$, we cannot recover the average treatment effect for coverage levels other than $\ok_{c}$. However, suppose that we have access to different batches of data, where similar human experts and the same ML model take turns in predicting the ground truth $Y$, but the reject-score threshold was let vary in a countable (ordered) set $\overline{\mathcal{K}}\subseteq \mathcal{K}$. For instance, we can be in the presence of multiple human moderators that receive different amounts of content to moderate (i.e. $\ok$ is changing). Then, we can leverage results in the literature of RD designs with multiple non-cumulative cutoffs \citep{cattaneo2016InterpretingRegressionDiscontinuity,cattaneo2021ExtrapolatingTreatmentEffects} and identify 
\[\tau_{\mathtt{RD}}(k) := \mathbb{E}[T(1)-T(0)\mid K = k], \qquad k \in [\ok_{\mathtt{lb}},\ok_{\mathtt{ub}}],\]
where $\ok_{\mathtt{lb}}:=\min \overline{\mathcal{K}}$ and $\ok_{\mathtt{ub}}:=\max \overline{\mathcal{K}}$. More precisely, identification of $\tau_{\mathtt{ATD}}(k)$ requires continuity of $\mathbb{E}[T(d)\mid K=k],d\in\{0,1\}$ in $k$ for $k\in[\ok_{\mathtt{lb}},\ok_{\mathtt{ub}}]$ and a similar shape of $\mathbb{E}[T(0)\mid K_i=k]$ across datasets. An exercise in this spirit might be useful to choose the optimal level of coverage $\ok^\star:=\argmax_{k\in\overline{\mathcal{K}}}\tau_{\mathtt{ATD}}(k)$, where the optimality is defined in the sense of getting the largest increase in predictive accuracy out of having human experts guessing instead of the model. 

\paragraph{Should we increase or decrease the coverage?} Despite being local by construction, the RD can be used to learn about the gradient of the treatment effect at the cutoff. Indeed, \citet{dong2015IdentifyingEffectChanging} show that, under regularity conditions on how $\mathbb{E}[T(d)\mid K=k],d\in\{0,1\}$ changes around $\ok_c$, the RD setting can be used to learn how $\trdd$  would change if the reject-score threshold $\ok_c$ were marginally changed.
Therefore, this suggests that when the deferring system accuracy is maximal, $\trdd$ should be close to zero. The intuition is that we should be indifferent between predicting with the ML model or deferring to the human expert at the optimal value.

\paragraph{Uncertainty in the ML model estimation} The outcome variable for unit $i$ in the LtD framework is $T_i=\indicator\{\vartheta(\mathbf{X}_i)=Y_i\}$ and the running variable is the reject score $K_i=k(\mathbf{X}_i)$ (see Table \ref{tab: RD to LtD mapping}). However, in practice, we compute the outcome and the reject score via an estimated version of the model, i.e. $\hat{f}(\cdot).$ We explicitly do not consider this source of uncertainty because access is usually limited to an estimated final version of the model without possibly fitting it again (e.g.,~large language models \citep{DBLP:conf/nips/BrownMRSKDNSSAA20}). 
For this reason, with a slight abuse of notation, we always write $f(\cdot)$, $T_i$, and $K_i$ when we should write $\hat{f}(\cdot)$, $\hat{T}_i,$ and $\hat{K}_i$, respectively. If one is willing also to capture this uncertainty and model re-estimation is viable, off-the-shelf non-parametric bootstrap procedures are available. Moreover, \cite{dong2023WhenCanWe} show that under the condition that the noisy score correctly assigns samples to treatment and control groups, $\tau_{\mathtt{RD}}$ can be interpreted as the treatment effect when the \textit{noisy} score equals the cutoff. We argue that this latter measure is still the one of interest, particularly so when the model is taken as given because the reject score can only be computed using the estimated ML model.

\paragraph{Generalize the RD estimand}
As we already stressed several times, the RD estimand $\trdd$ is local in nature as it refers to instances exactly at the coverage threshold $\overline{\kappa}_c$. A natural question to ask is under which assumptions it would be possible to generalize $\trdd$ to other populations of instances. First, a treatment effect homogeneity assumption would suffice to claim that $\trdd=\tau_{\mathtt{ATE}}$, that is, the RD estimand coincides with the ATE for the whole population of interest. However, this is an extreme assumption. Several other works in the causal inference literature have recently proposed ways to generalize $\trdd$ to other populations without imposing treatment homogeneity. For example, \cite{angrist2015WannaGetAway} invoke a conditional independence assumption within a neighborhood of the cutoff to generalize $\trdd$ within such a neighborhood. \cite{dong2015IdentifyingEffectChanging} show how to identify the derivative of the treatment effect $\trdd$ at the cutoff under a local policy invariance assumption. \cite{cattaneo2021ExtrapolatingTreatmentEffects} leverage multiple cutoffs and a parallel-trend assumption to extrapolate the average treatment effect to score values falling between (at least) two cutoffs. The latter work is probably the most compelling to apply under Scenario \ref{scenario2}. For example, suppose we have access to two different samples where deferral occurred according to two different reject score thresholds $\overline{\kappa}_{c_1}$ and $\overline{\kappa}_{c_2}$, respectively. Then, using the framework proposed by \cite{cattaneo2021ExtrapolatingTreatmentEffects}, it would be possible to identify the ATD for all units with a value of the reject score between $\overline{\kappa}_{c_1}$ and $\overline{\kappa}_{c_2}$.

\section{EXTENDED EXPERIMENTAL EVALUATION}
\
\subsection{Additional details}
\begin{table}[t]
    \centering
    \caption{Datasets and baselines details (epochs - \texttt{ep.}, learning rate - \texttt{lr}, optimizer - \texttt{op.}).}
\resizebox{\textwidth}{!}{
\begin{tabular}{ccccccc}
\textbf{DATASET} & \textbf{n} & \boldmath{}\textbf{$|\mathcal{Y}|$}\unboldmath{} & \textbf{HUMAN} & \textbf{MODEL} & \textbf{PRE-TRAINED} 
& \textbf{HYPER-PARAMETERS} \\
\midrule
\texttt{synth} & 50k   & 2     & synthetic & linear &    no   
& $\texttt{ep.}=50;\texttt{lr}=1e-2;\-\texttt{op.}=\texttt{Adam}$ \\
\texttt{cifar10h} & 10k   & 10    & separate annotator & WideResNet  & yes   
& $\texttt{ep.}=150;\texttt{lr}=1e-3;\-\texttt{op.}=\texttt{AdamW}$ \\
\texttt{galaxyzoo} & 10k   & 2     & random annotator & ResNet50 & yes   
& $\texttt{ep.}=50;\texttt{lr}=1e-3;\-\texttt{op.}=\texttt{Adam}$ \\
\texttt{hatespeech} & 25k   & 3     & random annotator & FNN on SBERT embeddings  & yes   
& $\texttt{ep.}=100;\texttt{lr}=1e-2;\-\texttt{op.}=\texttt{Adam}$ \\
\texttt{xray-airspace} & 4.4k    & 2     & random annotator & DenseNet121 & yes   
& $\texttt{ep.}=3;\texttt{lr}=1e-3;\-\texttt{op.}=\texttt{AdamW}$ \\
\end{tabular}%
}
\label{tab:exp_details}
\end{table}

\subsubsection{Data}
We use the data from \citet{DBLP:conf/aistats/MozannarLWSDS23} and \citet{DBLP:conf/nips/OkatiDG21}.

Concerning the synthetic dataset \texttt{synth}, we generate the data using the method described in \citet{DBLP:conf/aistats/MozannarLWSDS23}\footnote{\label{MozannarRepo}See the GitHub repository  \url{https://github.com/clinicalml/human_ai_deferral}}: given a parameter $d$, $\mathbf{x}\in\mathbb{R}^d$ is sampled from a mixture of $d$ equally weighted Gaussians, each one with uniformly random mean and variance. 
To obtain the target variable~$Y$, the procedure generates two random half-spaces, one referring to the optimal policy function $g^*:\mathcal{X}\to\{0,1\}$ and one representing an optimal ML model $f^*:\mathcal{X}\to\mathcal{Y}$. The fraction of instances for which $g^*(\mathbf{x})=0$ is randomly chosen to be between $.20$ and $.80$.
For all those instances on the side where $g^*(\mathbf{x})=0$, the target variable $Y$ is changed to be consistent with the optimal ML model $f^*(\mathbf{x})$ with probability $1-p_{ML}$ and otherwise uniform. 
Conversely, when $g^*(\mathbf{X}) = 1$, the labels are uniformly sampled. The human expert $h(\mathbf{x})$ is then set to make mistakes at a rate of $p_{h0}$ when $g^*(\mathbf{x})=0$ and at a rate of $p_{h1}$ when $g^*(\mathbf{x})=1$. In our experiments, we set  $d=10$, $p_{h0} = .10$, $p_{h1} = .10$, $p_{ML}=.40$, as done in the unrealizable setting by \citet{DBLP:conf/aistats/MozannarLWSDS23}.

Regarding real data, in \texttt{cifar10h} \citep{DBLP:journals/nature/BattledayPG2020}, the task is to annotate images belonging to 10 different categories. Here, the human prediction is provided by a separate human annotator.

In \texttt{galaxyzoo} \citep{galaxy-zoo-the-galaxy-challenge}, the main task is identifying whether the image contains a non-smooth galaxy. Thus, $Y=0$ if the image contains a smooth galaxy and $Y=1$ otherwise. Since we have 30 annotators for each image, we consider the majority of the annotators as the target variable $Y$, while the human expert prediction $h(\mathbf{X})$ is sampled randomly from the 30 annotators. 

For \texttt{hatespeech} \citep{DBLP:conf/icwsm/DavidsonWMW17}, the goal is to detect whether the text contains offensive or hate-speech language. The human predictor is sampled randomly as in \citet{DBLP:conf/aistats/MozannarLWSDS23}.

Finally, \texttt{xray-airspace} \citep{DBLP:conf/cvpr/WangPLLBS17,majkowska2020chest} contains both chest X-rays with human predictions and chest X-rays without human predictions. For each image, the target variable $Y$ encodes the presence of an airspace opacity. The human predictions are randomly sampled from multiple experts, as done by \citet{DBLP:conf/aistats/MozannarLWSDS23}.

\label{sec:appendixData}
\subsubsection{Baselines}
We include in our experiments seven baselines for which the code was publicly available. For all the baselines but \ASM{}, we consider the implementation provided by \citet{DBLP:conf/aistats/MozannarLWSDS23}\footref{MozannarRepo}. 
We use ~\citet{DBLP:conf/nips/CaoM0W023}'s code for \ASM{}, as provided in their supplementary material.

\label{sec:appendixBaselines}
\noindent
\textit{Selective Prediction} (\SP{}): \citet{DBLP:conf/nips/GeifmanE17} present a neural network classifier with a reject option. The reject score is defined considering the maximum of the final softmax values, i.e.,~$k(\mathbf{x})=\max_y s_y(\mathbf{x})$, where $s_y(\mathbf{x})$ is the final layer softmax value for class $y$.
We stress that \SP{} does not take into account the human expert's ability but determines deferral only based on those cases where the ML model is uncertain.

\noindent
\textit{Compare Confidence} (\CC{}): \citet{Raghu2019} extend \SP{} by learning (independently) another model - called the expert model - that uses as a target variable whether the human expert is correct. Then, deferral is determined by comparing the reject score of the classifier and the expert model.  

\noindent
\textit{Differentiable Triage} (\DT{}): \citet{DBLP:conf/nips/OkatiDG21} consider a two-stage approach, where at each epoch \textit{(i)} the classifier is trained only on those points where the classifier loss is lower than the human loss, \textit{(ii)} another ML model - called the rejector - is fitted to predict who has a lower loss between the classifier and the human. At the end of the training procedure, deferral is decided based on the estimated probability of the human expert having a lower loss than the classifier.

\noindent
\textit{Cross-Entropy Surrogate} (\LCE): \citet{DBLP:conf/icml/MozannarS20} propose a consistent surrogate loss of (\ref{eq:loss}), when $l$ is the 0-1 loss. The surrogate loss is then used to train the deferring system, which employs a neural network with an additional head to represent deferral;

\noindent
\textit{One Vs All} (\OVA{}): \citet{DBLP:conf/icml/VermaN22}
propose a different consistent surrogate loss, which improves the final calibration of the deferring system;

\noindent
\textit{Realizable Surrogate} (\RS{}): \citet{DBLP:conf/aistats/MozannarLWSDS23} extend the approaches based on surrogate losses by considering a consistent and realizable-consistent surrogate of (\ref{eq:loss}), when $l_M$ and $l_H$ are the 0-1 loss. As for \LCE{}, also \RS{} considers a neural network with an additional head representing deferral.

\noindent
\textit{Asymmetric SoftMax} (\ASM{}): \citet{DBLP:conf/nips/CaoM0W023} extend both \LCE{} and \OVA{} by providing a surrogate loss that ensures a better calibration of the reject score.

\subsubsection{Hyperparameters}\label{sec:appHyper}
For \texttt{synth}, we considered a simple linear feedforward architecture. For each baseline, we trained the model for $50$ epochs with $\texttt{lr}=1e-2$ and \texttt{Adam}~\citep{DBLP:journals/corr/KingmaB14} as the optimizer. Batch size was set to $1,024$.

For real data, all the methods were trained following the settings in either \citet{DBLP:conf/aistats/MozannarLWSDS23} (\texttt{cifar10h}, \texttt{hatespeech}, \texttt{xray-airspace}) or \citet{DBLP:conf/nips/OkatiDG21} (\texttt{galaxyzoo}).

In particular, for \texttt{cifar10h}, we trained a base WideResNet \citep{DBLP:conf/bmvc/ZagoruykoK16} on the original \texttt{cifar10} dataset for 200 epochs using cross-entropy loss, learning rate equals to $.001$ and \texttt{AdamW}~\citep{DBLP:conf/iclr/LoshchilovH19} as an optimizer. For each baseline, we fine-tuned the base WideResnet on \texttt{cifar10h} for 150 epochs, using a learning rate of $.001$ and \texttt{AdamW} as an optimizer.

For \texttt{hatespeech}, we considered pre-trained embeddings of SBERT and we fine-tuned a feed-forward neural network \citep{DBLP:conf/emnlp/ReimersG19} for 100 epochs, setting the learning rate to $.01$ and Adam as optimizer.

For \texttt{xray-airspace}, we first fine-tuned a pre-trained DenseNet121 \citep{DBLP:conf/cvpr/HuangLMW17} for 10 epochs on the x-rays that do not contain human predictions, setting $\texttt{lr}=\num{1e-4}$ and \texttt{AdamW} as the optimizer. For each baseline, we further fine-tuned the obtained model, training it for $3$ epochs on \texttt{xray-airspace} with a learning rate equal to $\num{1e-3}$ and \texttt{AdamW} as the optimizer.

Finally, for \texttt{galaxyzoo}, we consider a pre-trained ResNet50 \citep{DBLP:conf/cvpr/HeZRS16} and train each baseline for $50$ epochs, using \texttt{Adam} as the optimizer and a learning rate of $\num{1e-3}$.

The batch size was set to $128$ for all the real datasets.

\subsubsection{Hardware and carbon footprint}
\label{sec:appendixHardware}
Regarding computational resources, we split the workload over two machines: 
$(i)$ a 96 cores machine with Intel(R) Xeon(R) Gold 6342 CPU @ 2.80GHz and two NVIDIA RTX A6000, OS Ubuntu 20.04.4; 
$(ii)$ a 224 cores machine with Intel(R) Xeon(R) Platinum 8480+ CPU and eight NVIDIA A100-SXM4-80GB, OS Ubuntu 22.04.4 LTS.

We track all our runs using the \texttt{Python} package \texttt{codecarbon} \citep{benoit_courty_2024_11171501}. This allows us to consider the total time required by all our experimentation (including failed and repeated experiments) and its environmental costs. 
Overall, the cumulated time of all our runs amounts to $\approx5$ days.
This translates into an overall $\text{CO}_2$ consumption of roughly $\approx25.2$~Kg~Eq.$\text{CO}_2$, which equals a car drive of $\approx60$ miles.

\subsection{Detailed results for \textbf{Q1}, \textbf{Q2} and \textbf{Q3}}
\label{sec:appendixExtraRes}

Tables \ref{tab: synth-results}-\ref{tab: xray-airspace-results} report the detailed results of experiments on the synthetic and the real datasets. Tables include $\htatd$ (the Scenario \ref{scenario1} main estimate), $\htrdd$ (the Scenario \ref{scenario2} main estimate), and the deferring system accuracy at various target coverages for all the seven baselines. For $\htatd$ and $\htrdd$, we show in parentheses the associated p-value when testing the significance of the causal effect being different from zero. Significant (after Bonferroni correction of $\alpha=0.05$) p-values are shown in blue. 

\afterpage{
\begin{table}[t]
\caption{\texttt{synth} results, with the statistically significant ones at $\alpha=0.05$ in blue.}
\label{tab: synth-results}
\resizebox{\textwidth}{!}{
\begin{tabular}{c|c|ccccccc}
% \toprule
 & $c$ & \ASM{} & \CC{} & \DT{} & \LCE{} & \OVA{} & \RS{} & \SP{} \\
\midrule
\multirow{10}{*}{\rotatebox[origin=c]{90}{$\htatd$}} & .00 & $.083~(\textcolor{blue}{\num{8.20e-35}})$ & $.135~(\textcolor{blue}{\num{8.55e-89}})$ & $.107~(\textcolor{blue}{\num{2.55e-56}})$ & $\mathbf{.143~(\textcolor{blue}{\num{1.04e-100}})}$ & $.138~(\textcolor{blue}{\num{9.57e-93}})$ & $.102~(\textcolor{blue}{\num{2.31e-51}})$ & $.132~(\textcolor{blue}{\num{4.40e-85}})$ \\
 & .10 & $.133~(\textcolor{blue}{\num{2.12e-78}})$ & $.194~(\textcolor{blue}{\num{1.42e-165}})$ & $.153~(\textcolor{blue}{\num{2.32e-103}})$ & $\mathbf{.205~(\textcolor{blue}{\num{1.16e-186}})}$ & $.199~(\textcolor{blue}{\num{1.88e-174}})$ & $.157~(\textcolor{blue}{\num{8.86e-110}})$ & $.188~(\textcolor{blue}{\num{1.94e-155}})$ \\
 & .20 & $.199~(\textcolor{blue}{\num{2.88e-157}})$ & $.254~(\textcolor{blue}{\num{6.03e-265}})$ & $.208~(\textcolor{blue}{\num{1.63e-173}})$ & $\mathbf{.261~(\textcolor{blue}{\num{4.03e-285}})}$ & $.259~(\textcolor{blue}{\num{1.49e-276}})$ & $.218~(\textcolor{blue}{\num{3.05e-193}})$ & $.230~(\textcolor{blue}{\num{5.03e-215}})$ \\
 & .30 & $.269~(\textcolor{blue}{\num{1.48e-266}})$ & $.310~(\textcolor{blue}{\num{.00e+00}})$ & $.269~(\textcolor{blue}{\num{2.67e-264}})$ & $.314~(\textcolor{blue}{\num{.00e+00}})$ & $\mathbf{.318~(\textcolor{blue}{\num{.00e+00}})}$ & $.280~(\textcolor{blue}{\num{2.94e-291}})$ & $.259~(\textcolor{blue}{\num{2.08e-245}})$ \\
 & .40 & $.353~(\textcolor{blue}{\num{.00e+00}})$ & $.350~(\textcolor{blue}{\num{.00e+00}})$ & $.332~(\textcolor{blue}{\num{.00e+00}})$ & $.343~(\textcolor{blue}{\num{.00e+00}})$ & $\mathbf{.359~(\textcolor{blue}{\num{.00e+00}})}$ & $.347~(\textcolor{blue}{\num{.00e+00}})$ & $.289~(\textcolor{blue}{\num{5.03e-269}})$ \\
 & .50 & $\mathbf{.406~(\textcolor{blue}{\num{.00e+00}})}$ & $.373~(\textcolor{blue}{\num{.00e+00}})$ & $.371~(\textcolor{blue}{\num{.00e+00}})$ & $.372~(\textcolor{blue}{\num{.00e+00}})$ & $.392~(\textcolor{blue}{\num{.00e+00}})$ & $.389~(\textcolor{blue}{\num{.00e+00}})$ & $.300~(\textcolor{blue}{\num{1.39e-244}})$ \\
 & .60 & $.406~(\textcolor{blue}{\num{.00e+00}})$ & $.387~(\textcolor{blue}{\num{.00e+00}})$ & $.393~(\textcolor{blue}{\num{.00e+00}})$ & $.390~(\textcolor{blue}{\num{.00e+00}})$ & $.409~(\textcolor{blue}{\num{.00e+00}})$ & $\mathbf{.409~(\textcolor{blue}{\num{.00e+00}})}$ & $.326~(\textcolor{blue}{\num{1.07e-238}})$ \\
 & .70 & $.400~(\textcolor{blue}{\num{.00e+00}})$ & $.403~(\textcolor{blue}{\num{.00e+00}})$ & $.399~(\textcolor{blue}{\num{.00e+00}})$ & $.404~(\textcolor{blue}{\num{.00e+00}})$ & $.402~(\textcolor{blue}{\num{.00e+00}})$ & $\mathbf{.406~(\textcolor{blue}{\num{.00e+00}})}$ & $.335~(\textcolor{blue}{\num{9.06e-196}})$ \\
 & .80 & $.396~(\textcolor{blue}{\num{2.91e-212}})$ & $.404~(\textcolor{blue}{\num{4.90e-225}})$ & $.391~(\textcolor{blue}{\num{3.41e-203}})$ & $\mathbf{.416~(\textcolor{blue}{\num{3.31e-244}})}$ & $.414~(\textcolor{blue}{\num{8.57e-234}})$ & $.404~(\textcolor{blue}{\num{2.03e-220}})$ & $.340~(\textcolor{blue}{\num{3.38e-137}})$ \\
 & .90 & $.402~(\textcolor{blue}{\num{3.01e-109}})$ & $.393~(\textcolor{blue}{\num{4.80e-104}})$ & $.370~(\textcolor{blue}{\num{2.87e-96}})$ & $\mathbf{.414~(\textcolor{blue}{\num{7.30e-119}})}$ & $.400~(\textcolor{blue}{\num{6.82e-111}})$ & $.398~(\textcolor{blue}{\num{2.38e-108}})$ & $.345~(\textcolor{blue}{\num{9.38e-73}})$ \\

\midrule
\multirow{9}{*}{\rotatebox[origin=c]{90}{$\htrdd$}} & .10 & $-.302~(\textcolor{blue}{\num{2.54e-7}})$ & $-.293~(\textcolor{blue}{\num{5.63e-11}})$ & $\mathbf{-.244~(\textcolor{blue}{\num{3.59e-6}})}$ & $-.258~(\textcolor{blue}{\num{4.43e-6}})$ & $-.402~(\textcolor{blue}{\num{1.94e-7}})$ & $-.313~(\textcolor{blue}{\num{5.94e-8}})$ & $-.344~(\textcolor{blue}{\num{1.61e-8}})$ \\
 & .20 & $-.356~(\textcolor{blue}{\num{9.60e-16}})$ & $-.231~(\textcolor{blue}{\num{2.25e-8}})$ & $-.216~(\textcolor{blue}{\num{2.89e-7}})$ & $-.240~(\textcolor{blue}{\num{3.01e-8}})$ & $-.221~(\textcolor{blue}{\num{1.29e-6}})$ & $-.342~(\textcolor{blue}{\num{2.10e-10}})$ & $\mathbf{-.098~(\num{7.13e-2})}$ \\
 & .30 & $-.344~(\textcolor{blue}{\num{3.60e-9}})$ & $-.048~(\num{2.65e-1})$ & $-.145~(\num{1.09e-4})$ & $.016~(\num{6.97e-1})$ & $-.036~(\num{4.53e-1})$ & $-.237~(\textcolor{blue}{\num{1.18e-5}})$ & $\mathbf{.086~(\num{4.41e-2})}$ \\
 & .40 & $-.345~(\num{5.24e-2})$ & $.179~(\textcolor{blue}{\num{1.95e-6}})$ & $.028~(\num{4.16e-1})$ & $\mathbf{.185~(\textcolor{blue}{\num{8.04e-7}})}$ & $.178~(\num{1.25e-3})$ & $.006~(\num{8.93e-1})$ & $.129~(\num{5.45e-4})$ \\
 & .50 & $\mathbf{.288~(\num{9.25e-3})}$ & $.282~(\textcolor{blue}{\num{1.37e-13}})$ & $.229~(\textcolor{blue}{\num{1.98e-11}})$ & $.243~(\textcolor{blue}{\num{1.47e-10}})$ & $.205~(\textcolor{blue}{\num{3.36e-5}})$ & $.230~(\textcolor{blue}{\num{1.56e-10}})$ & $.159~(\num{4.26e-4})$ \\
 & .60 & $\mathbf{.485~(\textcolor{blue}{\num{3.07e-26}})}$ & $.333~(\textcolor{blue}{\num{3.12e-20}})$ & $.299~(\textcolor{blue}{\num{5.88e-11}})$ & $.317~(\textcolor{blue}{\num{1.12e-18}})$ & $.415~(\textcolor{blue}{\num{6.81e-29}})$ & $.443~(\textcolor{blue}{\num{6.63e-30}})$ & $.209~(\textcolor{blue}{\num{2.22e-7}})$ \\
 & .70 & $.431~(\textcolor{blue}{\num{7.20e-25}})$ & $.334~(\textcolor{blue}{\num{6.93e-21}})$ & $.395~(\textcolor{blue}{\num{1.30e-22}})$ & $.366~(\textcolor{blue}{\num{8.37e-27}})$ & $\mathbf{.473~(\textcolor{blue}{\num{1.19e-34}})}$ & $.433~(\textcolor{blue}{\num{1.04e-19}})$ & $.333~(\textcolor{blue}{\num{3.88e-11}})$ \\
 & .80 & $\mathbf{.479~(\textcolor{blue}{\num{1.97e-55}})}$ & $.406~(\textcolor{blue}{\num{3.47e-24}})$ & $.427~(\textcolor{blue}{\num{2.49e-22}})$ & $.404~(\textcolor{blue}{\num{4.36e-30}})$ & $.384~(\textcolor{blue}{\num{9.31e-27}})$ & $.353~(\textcolor{blue}{\num{4.14e-17}})$ & $.282~(\num{1.21e-4})$ \\
 & .90 & $.372~(\textcolor{blue}{\num{1.17e-9}})$ & $.275~(\textcolor{blue}{\num{3.48e-5}})$ & $.450~(\textcolor{blue}{\num{9.72e-19}})$ & $.398~(\textcolor{blue}{\num{4.18e-19}})$ & $\mathbf{.468~(\textcolor{blue}{\num{2.65e-17}})}$ & $.432~(\textcolor{blue}{\num{1.08e-11}})$ & $.422~(\textcolor{blue}{\num{9.15e-10}})$ \\

\midrule
\multirow{11}{*}{\rotatebox[origin=c]{90}{Accuracy}} & .00 & $\mathbf{.772}$ & $\mathbf{.772}$ & $\mathbf{.772}$ & $\mathbf{.772}$ & $\mathbf{.772}$ & $\mathbf{.772}$ & $\mathbf{.772}$ \\
 & .10 & $.808$ & $.810$ & $.802$ & $.811$ & $\mathbf{.811}$ & $.811$ & $.808$ \\
 & .20 & $\mathbf{.846}$ & $.838$ & $.831$ & $.837$ & $.839$ & $.843$ & $.825$ \\
 & .30 & $\mathbf{.876}$ & $.853$ & $.852$ & $.848$ & $.855$ & $.865$ & $.823$ \\
 & .40 & $\mathbf{.900}$ & $.848$ & $.863$ & $.837$ & $.850$ & $.877$ & $.815$ \\
 & .50 & $\mathbf{.892}$ & $.826$ & $.851$ & $.816$ & $.832$ & $.867$ & $.790$ \\
 & .60 & $\mathbf{.853}$ & $.794$ & $.824$ & $.787$ & $.801$ & $.837$ & $.770$ \\
 & .70 & $\mathbf{.814}$ & $.759$ & $.787$ & $.750$ & $.757$ & $.797$ & $.742$ \\
 & .80 & $\mathbf{.771}$ & $.720$ & $.746$ & $.716$ & $.719$ & $.752$ & $.709$ \\
 & .90 & $\mathbf{.732}$ & $.677$ & $.705$ & $.671$ & $.676$ & $.712$ & $.677$ \\
 & 1.00 & $\mathbf{.689}$ & $.637$ & $.666$ & $.629$ & $.634$ & $.671$ & $.640$ \\
% \bottomrule
\end{tabular}
}
\end{table}

\begin{table}[t]
\caption{\texttt{cifar10h} results, with the statistically significant ones at $\alpha=0.05$ in blue.}
\label{tab: cifar10h-results}
\resizebox{\textwidth}{!}{
\begin{tabular}{c|c|ccccccc}
% \toprule
 & $c$ & \ASM{} & \CC{} & \DT{} & \LCE{} & \OVA{} & \RS{} & \SP{} \\
\midrule
\multirow{10}{*}{\rotatebox[origin=c]{90}{$\htatd$}} & .00 & $.028~(\textcolor{blue}{\num{4.53e-5}})$ & $.018~(\num{5.34e-3})$ & $\mathbf{.049~(\textcolor{blue}{\num{1.38e-10}})}$ & $.018~(\num{4.83e-3})$ & $.021~(\num{2.39e-3})$ & $.018~(\num{5.34e-3})$ & $.021~(\num{1.36e-3})$ \\
 & .10 & $.033~(\textcolor{blue}{\num{2.13e-5}})$ & $.027~(\num{1.85e-4})$ & $\mathbf{.038~(\textcolor{blue}{\num{4.61e-7}})}$ & $.021~(\num{2.80e-3})$ & $.024~(\num{1.35e-3})$ & $.021~(\num{4.10e-3})$ & $.024~(\num{1.01e-3})$ \\
 & .20 & $\mathbf{.039~(\textcolor{blue}{\num{6.41e-6}})}$ & $.035~(\textcolor{blue}{\num{1.15e-5}})$ & $.035~(\textcolor{blue}{\num{1.53e-6}})$ & $.027~(\num{6.12e-4})$ & $.027~(\num{1.00e-3})$ & $.025~(\num{2.36e-3})$ & $.029~(\num{5.41e-4})$ \\
 & .30 & $\mathbf{.048~(\textcolor{blue}{\num{7.00e-7}})}$ & $.046~(\textcolor{blue}{\num{1.08e-7}})$ & $.016~(\num{1.79e-2})$ & $.033~(\num{2.18e-4})$ & $.035~(\num{1.98e-4})$ & $.032~(\num{5.10e-4})$ & $.037~(\textcolor{blue}{\num{6.72e-5}})$ \\
 & .40 & $\mathbf{.062~(\textcolor{blue}{\num{3.25e-8}})}$ & $.057~(\textcolor{blue}{\num{5.69e-9}})$ & $.007~(\num{2.58e-1})$ & $.041~(\textcolor{blue}{\num{7.50e-5}})$ & $.045~(\textcolor{blue}{\num{3.05e-5}})$ & $.041~(\num{1.25e-4})$ & $.043~(\textcolor{blue}{\num{4.53e-5}})$ \\
 & .50 & $\mathbf{.086~(\textcolor{blue}{\num{4.56e-11}})}$ & $.073~(\textcolor{blue}{\num{2.26e-10}})$ & $-.004~(\num{4.50e-1})$ & $.059~(\textcolor{blue}{\num{1.43e-6}})$ & $.056~(\textcolor{blue}{\num{1.31e-5}})$ & $.051~(\textcolor{blue}{\num{2.55e-5}})$ & $.058~(\textcolor{blue}{\num{3.50e-6}})$ \\
 & .60 & $\mathbf{.108~(\textcolor{blue}{\num{2.62e-12}})}$ & $.095~(\textcolor{blue}{\num{1.12e-11}})$ & $-.003~(\num{6.38e-1})$ & $.073~(\textcolor{blue}{\num{7.50e-7}})$ & $.079~(\textcolor{blue}{\num{5.51e-7}})$ & $.080~(\textcolor{blue}{\num{6.06e-8}})$ & $.079~(\textcolor{blue}{\num{5.44e-8}})$ \\
 & .70 & $\mathbf{.132~(\textcolor{blue}{\num{1.64e-13}})}$ & $.127~(\textcolor{blue}{\num{1.20e-12}})$ & $-.007~(\num{1.57e-1})$ & $.098~(\textcolor{blue}{\num{2.91e-8}})$ & $.116~(\textcolor{blue}{\num{7.68e-9}})$ & $.120~(\textcolor{blue}{\num{1.18e-10}})$ & $.122~(\textcolor{blue}{\num{8.34e-11}})$ \\
 & .80 & $.178~(\textcolor{blue}{\num{1.71e-14}})$ & $.200~(\textcolor{blue}{\num{7.25e-14}})$ & $-.005~(\num{3.17e-1})$ & $.126~(\textcolor{blue}{\num{2.25e-7}})$ & $.167~(\textcolor{blue}{\num{2.71e-10}})$ & $.189~(\textcolor{blue}{\num{4.93e-12}})$ & $\mathbf{.207~(\textcolor{blue}{\num{7.78e-15}})}$ \\
 & .90 & $.306~(\textcolor{blue}{\num{2.38e-14}})$ & $.277~(\textcolor{blue}{\num{1.53e-11}})$ & $-.005~(\num{3.17e-1})$ & $.193~(\textcolor{blue}{\num{2.14e-7}})$ & $.254~(\textcolor{blue}{\num{9.42e-11}})$ & $.286~(\textcolor{blue}{\num{2.39e-12}})$ & $\mathbf{.320~(\textcolor{blue}{\num{2.35e-13}})}$ \\

\midrule
\multirow{9}{*}{\rotatebox[origin=c]{90}{$\htrdd$}} & .10 & $-.843~(\num{9.54e-2})$ & $-.017~(\num{1.89e-1})$ & $\mathbf{.114~(\num{1.90e-1})}$ & $-.029~(\num{6.86e-2})$ & $-.005~(\num{5.60e-1})$ & $-.017~(\num{1.69e-3})$ & $-.015~(\num{8.08e-3})$ \\
 & .20 & $-.022~(\num{2.12e-1})$ & $-.046~(\num{3.61e-3})$ & $\mathbf{.122~(\num{2.26e-2})}$ & $-.011~(\num{4.58e-1})$ & $-.027~(\num{2.90e-2})$ & $-.022~(\num{9.53e-4})$ & $-.025~(\num{1.27e-4})$ \\
 & .30 & $-.043~(\textcolor{blue}{\num{1.73e-5}})$ & $-.000~(\num{9.92e-1})$ & $\mathbf{.213~(\num{4.70e-4})}$ & $-.005~(\num{7.28e-1})$ & $-.039~(\num{8.70e-2})$ & $-.029~(\num{3.00e-4})$ & $-.024~(\num{3.51e-4})$ \\
 & .40 & $-.039~(\num{1.65e-3})$ & $-.001~(\num{9.75e-1})$ & $\mathbf{.054~(\num{3.26e-1})}$ & $-.016~(\num{4.46e-1})$ & $-.009~(\num{5.19e-1})$ & $-.038~(\num{2.60e-4})$ & $-.037~(\num{1.01e-4})$ \\
 & .50 & $.006~(\num{8.04e-1})$ & $-.020~(\num{2.94e-1})$ & $\mathbf{.063~(\num{1.73e-1})}$ & $-.026~(\num{1.85e-1})$ & $-.020~(\num{3.68e-1})$ & $-.052~(\num{3.28e-4})$ & $-.048~(\num{6.66e-4})$ \\
 & .60 & $-.146~(\num{2.84e-2})$ & $\mathbf{.016~(\num{2.30e-1})}$ & $-.037~(\num{3.74e-1})$ & $-.019~(\num{6.48e-1})$ & $-.008~(\num{8.33e-1})$ & $-.048~(\num{5.91e-3})$ & $-.052~(\num{7.96e-3})$ \\
 & .70 & $.070~(\num{6.07e-1})$ & $-.001~(\num{7.48e-1})$ & $-.023~(\num{5.67e-1})$ & $.052~(\num{3.03e-1})$ & $\mathbf{.080~(\num{1.94e-1})}$ & $-.055~(\num{4.83e-2})$ & $-.059~(\num{4.96e-1})$ \\
 & .80 & $-.035~(\num{6.22e-1})$ & $\mathbf{.101~(\num{4.51e-2})}$ & $-.041~(\num{2.44e-1})$ & $-.014~(\num{8.57e-1})$ & $-.027~(\num{7.02e-1})$ & $.057~(\num{5.20e-1})$ & $.000~(\num{9.99e-1})$ \\
 & .90 & $.102~(\num{5.04e-1})$ & $\mathbf{.260~(\num{2.37e-2})}$ & $-.030~(\num{2.35e-1})$ & $.175~(\num{1.89e-1})$ & $.143~(\num{2.15e-1})$ & $.059~(\num{7.25e-1})$ & $.142~(\num{5.07e-1})$ \\

\midrule
\multirow{11}{*}{\rotatebox[origin=c]{90}{Accuracy}} & .00 & $\mathbf{.958}$ & $\mathbf{.958}$ & $\mathbf{.958}$ & $\mathbf{.958}$ & $\mathbf{.958}$ & $\mathbf{.958}$ & $\mathbf{.958}$ \\
 & .10 & $.959$ & $\mathbf{.963}$ & $.943$ & $.959$ & $.959$ & $.958$ & $.958$ \\
 & .20 & $.960$ & $\mathbf{.967}$ & $.936$ & $.961$ & $.959$ & $.959$ & $.959$ \\
 & .30 & $.963$ & $\mathbf{.971}$ & $.919$ & $.963$ & $.962$ & $.962$ & $.962$ \\
 & .40 & $.966$ & $\mathbf{.973}$ & $.913$ & $.964$ & $.964$ & $.964$ & $.963$ \\
 & .50 & $.971$ & $\mathbf{.975}$ & $.907$ & $.968$ & $.965$ & $.966$ & $.966$ \\
 & .60 & $.971$ & $\mathbf{.977}$ & $.908$ & $.967$ & $.968$ & $.972$ & $.970$ \\
 & .70 & $.970$ & $\mathbf{.978}$ & $.907$ & $.968$ & $.970$ & $.976$ & $.974$ \\
 & .80 & $.967$ & $\mathbf{.978}$ & $.908$ & $.962$ & $.970$ & $.975$ & $.977$ \\
 & .90 & $.958$ & $.965$ & $.908$ & $.955$ & $.962$ & $\mathbf{.967}$ & $.964$ \\
 & 1.00 & $.929$ & $\mathbf{.939}$ & $.909$ & $\mathbf{.939}$ & $.937$ & $\mathbf{.939}$ & $.936$ \\
% \bottomrule
\end{tabular}
}
\end{table}

\begin{table}[t]
\caption{\texttt{galaxyzoo} results, with the statistically significant ones at $\alpha=0.05$ in blue.}
\label{tab: galaxyzoo-results}
\resizebox{\textwidth}{!}{
\begin{tabular}{c|c|ccccccc}
% \toprule
 & $c$ & \ASM{} & \CC{} & \DT{} & \LCE{} & \OVA{} & \RS{} & \SP{} \\
\midrule
\multirow{10}{*}{\rotatebox[origin=c]{90}{$\htatd$}} & .00 & $-.103~(\textcolor{blue}{\num{5.85e-18}})$ & $-.100~(\textcolor{blue}{\num{2.93e-17}})$ & $-.093~(\textcolor{blue}{\num{1.11e-14}})$ & $\mathbf{-.082~(\textcolor{blue}{\num{3.00e-11}})}$ & $-.098~(\textcolor{blue}{\num{1.86e-16}})$ & $-.099~(\textcolor{blue}{\num{6.51e-17}})$ & $-.097~(\textcolor{blue}{\num{2.76e-15}})$ \\
 & .10 & $-.088~(\textcolor{blue}{\num{5.56e-12}})$ & $-.093~(\textcolor{blue}{\num{1.78e-13}})$ & $-.089~(\textcolor{blue}{\num{1.29e-12}})$ & $\mathbf{-.081~(\textcolor{blue}{\num{3.57e-10}})}$ & $-.086~(\textcolor{blue}{\num{3.04e-11}})$ & $-.098~(\textcolor{blue}{\num{4.10e-14}})$ & $-.096~(\textcolor{blue}{\num{2.76e-15}})$ \\
 & .20 & $-.076~(\textcolor{blue}{\num{1.33e-8}})$ & $-.082~(\textcolor{blue}{\num{8.01e-10}})$ & $-.094~(\textcolor{blue}{\num{1.74e-12}})$ & $-.074~(\textcolor{blue}{\num{6.20e-8}})$ & $\mathbf{-.072~(\textcolor{blue}{\num{2.08e-7}})}$ & $-.092~(\textcolor{blue}{\num{1.15e-10}})$ & $-.096~(\textcolor{blue}{\num{2.76e-15}})$ \\
 & .30 & $-.069~(\textcolor{blue}{\num{1.00e-6}})$ & $-.067~(\textcolor{blue}{\num{5.55e-6}})$ & $-.084~(\textcolor{blue}{\num{1.63e-9}})$ & $-.072~(\textcolor{blue}{\num{6.48e-7}})$ & $\mathbf{-.060~(\num{9.11e-5})}$ & $-.078~(\textcolor{blue}{\num{8.63e-7}})$ & $-.096~(\textcolor{blue}{\num{2.76e-15}})$ \\
 & .40 & $-.063~(\textcolor{blue}{\num{5.94e-5}})$ & $-.060~(\num{2.76e-4})$ & $-.082~(\textcolor{blue}{\num{2.45e-8}})$ & $\mathbf{-.051~(\num{1.23e-3})}$ & $-.057~(\num{5.83e-4})$ & $-.072~(\textcolor{blue}{\num{2.50e-5}})$ & $-.056~(\num{2.03e-3})$ \\
 & .50 & $-.060~(\num{4.07e-4})$ & $-.045~(\num{2.18e-2})$ & $-.090~(\textcolor{blue}{\num{1.11e-8}})$ & $-.039~(\num{3.18e-2})$ & $-.046~(\num{1.61e-2})$ & $-.039~(\num{4.31e-2})$ & $\mathbf{-.039~(\num{6.10e-2})}$ \\
 & .60 & $-.051~(\num{7.92e-3})$ & $-.013~(\num{5.68e-1})$ & $-.100~(\textcolor{blue}{\num{4.30e-8}})$ & $\mathbf{-.006~(\num{7.61e-1})}$ & $-.025~(\num{2.54e-1})$ & $-.006~(\num{7.74e-1})$ & $-.006~(\num{7.88e-1})$ \\
 & .70 & $-.036~(\num{1.16e-1})$ & $.009~(\num{7.49e-1})$ & $-.111~(\textcolor{blue}{\num{7.80e-8}})$ & $.013~(\num{5.85e-1})$ & $-.014~(\num{5.86e-1})$ & $.014~(\num{6.04e-1})$ & $\mathbf{.019~(\num{5.06e-1})}$ \\
 & .80 & $.017~(\num{5.84e-1})$ & $.068~(\num{4.66e-2})$ & $-.105~(\textcolor{blue}{\num{1.61e-5}})$ & $.028~(\num{3.64e-1})$ & $.007~(\num{8.12e-1})$ & $.060~(\num{7.60e-2})$ & $\mathbf{.079~(\num{2.79e-2})}$ \\
 & .90 & $.072~(\num{1.23e-1})$ & $.096~(\num{3.52e-2})$ & $-.119~(\textcolor{blue}{\num{5.32e-6}})$ & $.102~(\num{1.88e-2})$ & $.024~(\num{6.01e-1})$ & $.089~(\num{7.55e-2})$ & $\mathbf{.105~(\num{3.11e-2})}$ \\

\midrule
\multirow{9}{*}{\rotatebox[origin=c]{90}{$\htrdd$}} & .10 & $-.177~(\num{2.37e-1})$ & $-.150~(\num{7.54e-5})$ & $\mathbf{.496~(\num{2.64e-2})}$ & $-.192~(\num{7.48e-2})$ & $-.262~(\textcolor{blue}{\num{1.68e-19}})$ & $-.162~(\textcolor{blue}{\num{1.41e-14}})$ & $--$ \\
 & .20 & $-.216~(\num{1.18e-1})$ & $-.153~(\textcolor{blue}{\num{3.66e-13}})$ & $\mathbf{.084~(\num{6.61e-1})}$ & $-.193~(\num{3.60e-3})$ & $-.110~(\num{1.84e-1})$ & $-.277~(\textcolor{blue}{\num{7.73e-26}})$ & $--$ \\
 & .30 & $-.187~(\num{3.00e-1})$ & $-.165~(\textcolor{blue}{\num{2.90e-20}})$ & $-.511~(\num{1.05e-1})$ & $\mathbf{-.086~(\num{2.47e-1})}$ & $-.152~(\num{2.89e-2})$ & $-.277~(\textcolor{blue}{\num{2.63e-28}})$ & $--$ \\
 & .40 & $-.040~(\num{7.16e-1})$ & $-.180~(\textcolor{blue}{\num{3.18e-20}})$ & $\mathbf{.097~(\num{4.50e-1})}$ & $-.082~(\num{8.66e-2})$ & $-.180~(\num{4.49e-2})$ & $-.215~(\num{1.02e-2})$ & $--$ \\
 & .50 & $-.050~(\num{5.46e-1})$ & $-.206~(\textcolor{blue}{\num{6.27e-15}})$ & $-.023~(\num{8.38e-1})$ & $-.261~(\textcolor{blue}{\num{1.17e-5}})$ & $\mathbf{.060~(\num{5.84e-1})}$ & $-.250~(\num{4.18e-4})$ & $-.220~(\num{1.06e-1})$ \\
 & .60 & $\mathbf{.012~(\num{8.70e-1})}$ & $-.228~(\textcolor{blue}{\num{6.41e-10}})$ & $-.067~(\num{5.97e-1})$ & $-.106~(\num{5.36e-2})$ & $-.358~(\num{8.70e-2})$ & $-.287~(\num{2.55e-2})$ & $-.290~(\num{2.44e-2})$ \\
 & .70 & $-.123~(\num{1.66e-2})$ & $-.178~(\num{1.16e-2})$ & $-.312~(\num{1.24e-2})$ & $\mathbf{-.018~(\num{7.69e-1})}$ & $-.381~(\num{2.83e-2})$ & $-.152~(\num{5.62e-1})$ & $-.036~(\num{8.97e-1})$ \\
 & .80 & $-.013~(\num{8.67e-1})$ & $.153~(\num{1.84e-1})$ & $--$ & $-.089~(\num{2.41e-1})$ & $-.295~(\num{7.49e-2})$ & $\mathbf{.213~(\num{3.18e-1})}$ & $-.353~(\num{1.44e-1})$ \\
 & .90 & $-.066~(\num{7.63e-1})$ & $\mathbf{.211~(\num{8.75e-2})}$ & $--$ & $.158~(\num{8.07e-2})$ & $-.010~(\num{9.59e-1})$ & $--$ & $-.184~(\num{6.52e-1})$ \\

\midrule
\multirow{11}{*}{\rotatebox[origin=c]{90}{Accuracy}} & .00 & $\mathbf{.743}$ & $\mathbf{.743}$ & $\mathbf{.743}$ & $\mathbf{.743}$ & $\mathbf{.743}$ & $\mathbf{.743}$ & $\mathbf{.743}$ \\
 & .10 & $\mathbf{.767}$ & $.760$ & $.755$ & $.751$ & $.764$ & $.753$ & $.743$ \\
 & .20 & $\mathbf{.784}$ & $.777$ & $.761$ & $.764$ & $.783$ & $.768$ & $.743$ \\
 & .30 & $.796$ & $.796$ & $.776$ & $.772$ & $\mathbf{.800}$ & $.788$ & $.743$ \\
 & .40 & $\mathbf{.808}$ & $.806$ & $.786$ & $.793$ & $.807$ & $.797$ & $.806$ \\
 & .50 & $.815$ & $.821$ & $.791$ & $.805$ & $.819$ & $\mathbf{.822}$ & $.821$ \\
 & .60 & $.825$ & $.838$ & $.797$ & $.822$ & $.831$ & $\mathbf{.839}$ & $.837$ \\
 & .70 & $.834$ & $\mathbf{.845}$ & $.803$ & $.829$ & $.837$ & $\mathbf{.845}$ & $.845$ \\
 & .80 & $.849$ & $\mathbf{.856}$ & $.816$ & $.830$ & $.842$ & $.853$ & $.854$ \\
 & .90 & $\mathbf{.853}$ & $\mathbf{.853}$ & $.817$ & $.834$ & $.843$ & $.849$ & $.850$ \\
 & 1.00 & $\mathbf{.846}$ & $.843$ & $.836$ & $.825$ & $.841$ & $.841$ & $.839$ \\
% \bottomrule
\end{tabular}
}
\end{table}

\begin{table}[t]
\caption{\texttt{hatespeech} results, with the statistically significant ones at $\alpha=0.05$ in blue.}
\label{tab: hatespeech-results}
\resizebox{\textwidth}{!}{
\begin{tabular}{c|c|ccccccc}
% \toprule
 & $c$ & \ASM{} & \CC{} & \DT{} & \LCE{} & \OVA{} & \RS{} & \SP{} \\
\midrule
\multirow{10}{*}{\rotatebox[origin=c]{90}{$\htatd$}} & .00 & $.067~(\textcolor{blue}{\num{1.09e-26}})$ & $.033~(\textcolor{blue}{\num{8.77e-9}})$ & $\mathbf{.097~(\textcolor{blue}{\num{1.38e-49}})}$ & $.037~(\textcolor{blue}{\num{2.13e-10}})$ & $.031~(\textcolor{blue}{\num{8.31e-8}})$ & $.031~(\textcolor{blue}{\num{9.51e-8}})$ & $.030~(\textcolor{blue}{\num{1.67e-7}})$ \\
 & .10 & $.078~(\textcolor{blue}{\num{7.62e-30}})$ & $.037~(\textcolor{blue}{\num{1.57e-9}})$ & $\mathbf{.089~(\textcolor{blue}{\num{3.99e-41}})}$ & $.042~(\textcolor{blue}{\num{4.38e-11}})$ & $.038~(\textcolor{blue}{\num{1.80e-9}})$ & $.038~(\textcolor{blue}{\num{1.26e-9}})$ & $.036~(\textcolor{blue}{\num{1.37e-8}})$ \\
 & .20 & $\mathbf{.092~(\textcolor{blue}{\num{9.70e-35}})}$ & $.047~(\textcolor{blue}{\num{1.06e-12}})$ & $.078~(\textcolor{blue}{\num{3.03e-30}})$ & $.047~(\textcolor{blue}{\num{9.93e-12}})$ & $.044~(\textcolor{blue}{\num{1.26e-10}})$ & $.046~(\textcolor{blue}{\num{3.65e-11}})$ & $.047~(\textcolor{blue}{\num{3.23e-11}})$ \\
 & .30 & $\mathbf{.107~(\textcolor{blue}{\num{6.96e-38}})}$ & $.058~(\textcolor{blue}{\num{7.32e-16}})$ & $.074~(\textcolor{blue}{\num{3.63e-26}})$ & $.060~(\textcolor{blue}{\num{1.25e-15}})$ & $.059~(\textcolor{blue}{\num{2.97e-15}})$ & $.054~(\textcolor{blue}{\num{1.17e-12}})$ & $.058~(\textcolor{blue}{\num{1.00e-13}})$ \\
 & .40 & $\mathbf{.129~(\textcolor{blue}{\num{4.28e-45}})}$ & $.071~(\textcolor{blue}{\num{6.29e-18}})$ & $.062~(\textcolor{blue}{\num{7.28e-18}})$ & $.076~(\textcolor{blue}{\num{2.94e-20}})$ & $.075~(\textcolor{blue}{\num{2.33e-19}})$ & $.070~(\textcolor{blue}{\num{7.55e-16}})$ & $.076~(\textcolor{blue}{\num{6.24e-18}})$ \\
 & .50 & $\mathbf{.159~(\textcolor{blue}{\num{2.89e-51}})}$ & $.088~(\textcolor{blue}{\num{3.59e-21}})$ & $.052~(\textcolor{blue}{\num{7.78e-12}})$ & $.094~(\textcolor{blue}{\num{2.03e-23}})$ & $.094~(\textcolor{blue}{\num{5.16e-22}})$ & $.091~(\textcolor{blue}{\num{1.45e-20}})$ & $.096~(\textcolor{blue}{\num{8.19e-21}})$ \\
 & .60 & $\mathbf{.202~(\textcolor{blue}{\num{5.31e-59}})}$ & $.117~(\textcolor{blue}{\num{8.60e-25}})$ & $.055~(\textcolor{blue}{\num{5.01e-11}})$ & $.111~(\textcolor{blue}{\num{1.72e-24}})$ & $.113~(\textcolor{blue}{\num{3.40e-23}})$ & $.124~(\textcolor{blue}{\num{3.03e-26}})$ & $.117~(\textcolor{blue}{\num{3.30e-22}})$ \\
 & .70 & $\mathbf{.281~(\textcolor{blue}{\num{9.86e-73}})}$ & $.160~(\textcolor{blue}{\num{1.29e-29}})$ & $.052~(\textcolor{blue}{\num{1.76e-8}})$ & $.134~(\textcolor{blue}{\num{4.30e-24}})$ & $.138~(\textcolor{blue}{\num{1.86e-23}})$ & $.172~(\textcolor{blue}{\num{3.85e-34}})$ & $.165~(\textcolor{blue}{\num{8.30e-30}})$ \\
 & .80 & $\mathbf{.359~(\textcolor{blue}{\num{1.85e-73}})}$ & $.212~(\textcolor{blue}{\num{5.52e-33}})$ & $.049~(\textcolor{blue}{\num{1.37e-5}})$ & $.185~(\textcolor{blue}{\num{4.23e-27}})$ & $.165~(\textcolor{blue}{\num{1.38e-20}})$ & $.232~(\textcolor{blue}{\num{4.13e-35}})$ & $.201~(\textcolor{blue}{\num{1.64e-26}})$ \\
 & .90 & $\mathbf{.468~(\textcolor{blue}{\num{1.71e-63}})}$ & $.318~(\textcolor{blue}{\num{2.10e-33}})$ & $.034~(\num{1.78e-2})$ & $.222~(\textcolor{blue}{\num{5.48e-20}})$ & $.213~(\textcolor{blue}{\num{8.30e-17}})$ & $.295~(\textcolor{blue}{\num{6.56e-28}})$ & $.255~(\textcolor{blue}{\num{1.15e-19}})$ \\

\midrule
\multirow{9}{*}{\rotatebox[origin=c]{90}{$\htrdd$}} & .10 & $-.129~(\num{2.04e-2})$ & $-.037~(\num{3.12e-1})$ & $\mathbf{.122~(\num{9.14e-2})}$ & $-.044~(\num{1.64e-1})$ & $.068~(\num{2.39e-1})$ & $-.020~(\num{3.79e-1})$ & $-.026~(\num{2.93e-1})$ \\
 & .20 & $-.020~(\num{5.38e-1})$ & $-.063~(\num{1.27e-2})$ & $\mathbf{.123~(\num{2.79e-2})}$ & $-.041~(\num{1.23e-1})$ & $.001~(\num{9.90e-1})$ & $-.000~(\num{9.89e-1})$ & $-.046~(\num{2.71e-2})$ \\
 & .30 & $.014~(\num{7.85e-1})$ & $.008~(\num{6.90e-1})$ & $\mathbf{.057~(\num{2.78e-1})}$ & $-.061~(\num{2.66e-2})$ & $-.035~(\num{3.35e-1})$ & $.005~(\num{8.97e-1})$ & $-.049~(\num{6.71e-2})$ \\
 & .40 & $-.006~(\num{8.85e-1})$ & $-.031~(\num{4.91e-2})$ & $\mathbf{.121~(\num{2.29e-3})}$ & $-.057~(\num{2.34e-2})$ & $-.021~(\num{6.03e-1})$ & $-.075~(\num{1.45e-2})$ & $-.064~(\num{2.53e-2})$ \\
 & .50 & $-.038~(\num{2.78e-1})$ & $.003~(\num{8.80e-1})$ & $\mathbf{.133~(\num{7.73e-3})}$ & $.024~(\num{3.98e-1})$ & $-.004~(\num{9.19e-1})$ & $-.092~(\num{2.70e-2})$ & $-.037~(\num{3.80e-1})$ \\
 & .60 & $\mathbf{.044~(\num{3.19e-1})}$ & $-.035~(\num{7.25e-2})$ & $-.041~(\num{4.18e-1})$ & $.026~(\num{3.91e-1})$ & $-.021~(\num{6.29e-1})$ & $-.068~(\num{1.20e-1})$ & $-.006~(\num{9.15e-1})$ \\
 & .70 & $-.113~(\num{1.66e-1})$ & $.071~(\num{8.66e-2})$ & $.098~(\num{7.61e-3})$ & $-.009~(\num{8.13e-1})$ & $.125~(\num{2.68e-3})$ & $\mathbf{.127~(\num{2.08e-3})}$ & $-.045~(\num{4.37e-1})$ \\
 & .80 & $\mathbf{.151~(\num{4.38e-2})}$ & $.035~(\num{5.19e-1})$ & $-.030~(\num{5.77e-1})$ & $.097~(\num{1.18e-2})$ & $.028~(\num{6.77e-1})$ & $.067~(\num{2.58e-1})$ & $.066~(\num{4.38e-1})$ \\
 & .90 & $.315~(\num{8.58e-5})$ & $.218~(\num{2.66e-2})$ & $.054~(\num{3.27e-1})$ & $.267~(\num{7.86e-5})$ & $.153~(\num{1.32e-1})$ & $\mathbf{.348~(\textcolor{blue}{\num{4.10e-5}})}$ & $.112~(\num{2.68e-1})$ \\

\midrule
\multirow{11}{*}{\rotatebox[origin=c]{90}{Accuracy}} & .00 & $\mathbf{.908}$ & $\mathbf{.908}$ & $\mathbf{.908}$ & $\mathbf{.908}$ & $\mathbf{.908}$ & $\mathbf{.908}$ & $\mathbf{.908}$ \\
 & .10 & $.911$ & $.908$ & $.893$ & $.909$ & $.911$ & $\mathbf{.912}$ & $.910$ \\
 & .20 & $.914$ & $.912$ & $.874$ & $.908$ & $.912$ & $.914$ & $\mathbf{.915}$ \\
 & .30 & $.915$ & $.915$ & $.863$ & $.913$ & $\mathbf{.918}$ & $.916$ & $\mathbf{.918}$ \\
 & .40 & $.919$ & $.916$ & $.849$ & $.918$ & $.922$ & $.920$ & $\mathbf{.924}$ \\
 & .50 & $.921$ & $.918$ & $.837$ & $.919$ & $.923$ & $.924$ & $\mathbf{.926}$ \\
 & .60 & $.922$ & $.920$ & $.833$ & $.917$ & $.922$ & $\mathbf{.928}$ & $.925$ \\
 & .70 & $.922$ & $.921$ & $.827$ & $.912$ & $.918$ & $\mathbf{.930}$ & $.928$ \\
 & .80 & $.910$ & $.919$ & $.821$ & $.908$ & $.910$ & $\mathbf{.923}$ & $.919$ \\
 & .90 & $.883$ & $.907$ & $.815$ & $.894$ & $.899$ & $\mathbf{.908}$ & $.904$ \\
 & 1.00 & $.841$ & $.875$ & $.812$ & $.871$ & $.877$ & $.878$ & $\mathbf{.878}$ \\
% \bottomrule
\end{tabular}
}
\end{table}

\begin{table}[t]
\caption{\texttt{xray-airspace} results, with the statistically significant ones at $\alpha=0.05$ in blue.}
\label{tab: xray-airspace-results}
\resizebox{\textwidth}{!}{
\begin{tabular}{c|c|ccccccc}
% \toprule
 & $c$ & \ASM{} & \CC{} & \DT{} & \LCE{} & \OVA{} & \RS{} & \SP{} \\
\midrule
\multirow{10}{*}{\rotatebox[origin=c]{90}{$\htatd$}} & .00 & $\mathbf{.359~(\textcolor{blue}{\num{2.27e-59}})}$ & $.029~(\num{6.71e-2})$ & $.189~(\textcolor{blue}{\num{1.46e-21}})$ & $.033~(\num{4.08e-2})$ & $.033~(\num{3.76e-2})$ & $.039~(\num{1.78e-2})$ & $.023~(\num{1.31e-1})$ \\
 & .10 & $\mathbf{.416~(\textcolor{blue}{\num{5.20e-70}})}$ & $.046~(\num{1.18e-2})$ & $.196~(\textcolor{blue}{\num{2.76e-22}})$ & $.053~(\num{3.79e-3})$ & $.055~(\num{2.26e-3})$ & $.059~(\num{1.30e-3})$ & $.034~(\num{5.55e-2})$ \\
 & .20 & $\mathbf{.482~(\textcolor{blue}{\num{1.73e-86}})}$ & $.063~(\num{1.45e-3})$ & $.199~(\textcolor{blue}{\num{6.39e-22}})$ & $.068~(\num{5.38e-4})$ & $.072~(\num{1.96e-4})$ & $.080~(\textcolor{blue}{\num{7.05e-5}})$ & $.038~(\num{4.97e-2})$ \\
 & .30 & $\mathbf{.585~(\textcolor{blue}{\num{2.91e-115}})}$ & $.079~(\num{3.09e-4})$ & $.209~(\textcolor{blue}{\num{2.64e-22}})$ & $.085~(\textcolor{blue}{\num{5.32e-5}})$ & $.082~(\textcolor{blue}{\num{3.32e-5}})$ & $.091~(\textcolor{blue}{\num{1.32e-5}})$ & $.043~(\num{4.27e-2})$ \\
 & .40 & $\mathbf{.673~(\textcolor{blue}{\num{3.57e-144}})}$ & $.092~(\num{1.08e-4})$ & $.239~(\textcolor{blue}{\num{4.79e-26}})$ & $.092~(\textcolor{blue}{\num{5.99e-5}})$ & $.108~(\textcolor{blue}{\num{1.08e-7}})$ & $.123~(\textcolor{blue}{\num{5.10e-9}})$ & $.068~(\num{4.51e-3})$ \\
 & .50 & $\mathbf{.730~(\textcolor{blue}{\num{9.64e-161}})}$ & $.104~(\num{7.99e-5})$ & $.242~(\textcolor{blue}{\num{6.45e-23}})$ & $.106~(\textcolor{blue}{\num{7.34e-5}})$ & $.115~(\textcolor{blue}{\num{1.56e-7}})$ & $.140~(\textcolor{blue}{\num{1.82e-9}})$ & $.085~(\num{2.03e-3})$ \\
 & .60 & $\mathbf{.800~(\textcolor{blue}{\num{5.17e-193}})}$ & $.125~(\textcolor{blue}{\num{3.50e-5}})$ & $.254~(\textcolor{blue}{\num{5.80e-20}})$ & $.120~(\textcolor{blue}{\num{5.09e-5}})$ & $.159~(\textcolor{blue}{\num{5.70e-11}})$ & $.149~(\textcolor{blue}{\num{2.12e-8}})$ & $.130~(\textcolor{blue}{\num{2.40e-5}})$ \\
 & .70 & $\mathbf{.858~(\textcolor{blue}{\num{1.58e-202}})}$ & $.183~(\textcolor{blue}{\num{1.90e-8}})$ & $.191~(\textcolor{blue}{\num{1.97e-11}})$ & $.160~(\textcolor{blue}{\num{7.21e-6}})$ & $.173~(\textcolor{blue}{\num{8.75e-10}})$ & $.149~(\textcolor{blue}{\num{1.24e-6}})$ & $.158~(\textcolor{blue}{\num{7.87e-6}})$ \\
 & .80 & $\mathbf{.887~(\textcolor{blue}{\num{1.93e-146}})}$ & $.271~(\textcolor{blue}{\num{5.94e-11}})$ & $.032~(\num{1.00e-1})$ & $.203~(\textcolor{blue}{\num{2.04e-6}})$ & $.181~(\textcolor{blue}{\num{1.99e-8}})$ & $.209~(\textcolor{blue}{\num{3.82e-9}})$ & $.218~(\textcolor{blue}{\num{3.98e-7}})$ \\
 & .90 & $\mathbf{.887~(\textcolor{blue}{\num{1.93e-146}})}$ & $.352~(\textcolor{blue}{\num{2.28e-9}})$ & $.033~(\num{1.54e-1})$ & $.320~(\textcolor{blue}{\num{1.61e-8}})$ & $.194~(\num{1.30e-4})$ & $.179~(\textcolor{blue}{\num{2.16e-5}})$ & $.253~(\textcolor{blue}{\num{7.28e-5}})$ \\

\midrule
\multirow{9}{*}{\rotatebox[origin=c]{90}{$\htrdd$}} & .10 & $-.272~(\num{1.29e-1})$ & $-.215~(\num{4.69e-1})$ & $\mathbf{.059~(\num{7.38e-1})}$ & $-.190~(\num{9.40e-2})$ & $-.123~(\num{4.68e-1})$ & $-.312~(\num{1.22e-2})$ & $-.034~(\num{3.36e-1})$ \\
 & .20 & $-.558~(\num{1.22e-2})$ & $-.013~(\num{7.70e-1})$ & $\mathbf{.109~(\num{5.37e-1})}$ & $-.012~(\num{9.34e-1})$ & $-.162~(\num{1.19e-1})$ & $-.161~(\num{1.43e-1})$ & $.042~(\num{6.52e-1})$ \\
 & .30 & $.073~(\num{6.71e-1})$ & $-.058~(\num{5.48e-1})$ & $-.067~(\num{6.38e-1})$ & $.103~(\num{1.90e-1})$ & $\mathbf{.726~(\num{3.63e-4})}$ & $-.473~(\num{8.95e-3})$ & $-.273~(\num{4.35e-3})$ \\
 & .40 & $\mathbf{.322~(\num{1.03e-1})}$ & $.027~(\num{7.69e-1})$ & $-.022~(\num{8.44e-1})$ & $.092~(\num{1.06e-1})$ & $-.153~(\num{2.13e-1})$ & $-.069~(\num{5.42e-1})$ & $.055~(\num{5.15e-1})$ \\
 & .50 & $\mathbf{.354~(\num{1.38e-1})}$ & $.058~(\num{6.20e-1})$ & $.144~(\num{2.38e-1})$ & $.010~(\num{8.83e-1})$ & $.157~(\num{1.16e-1})$ & $.125~(\num{1.25e-1})$ & $-.018~(\num{8.06e-1})$ \\
 & .60 & $\mathbf{.465~(\num{2.12e-1})}$ & $.093~(\num{4.63e-1})$ & $.041~(\num{7.67e-1})$ & $.014~(\num{8.47e-1})$ & $.090~(\num{2.72e-1})$ & $.155~(\num{3.70e-2})$ & $-.171~(\num{1.60e-1})$ \\
 & .70 & $.357~(\num{3.82e-1})$ & $-.180~(\num{1.35e-1})$ & $\mathbf{.638~(\textcolor{blue}{\num{5.77e-5}})}$ & $-.064~(\num{4.29e-1})$ & $-.044~(\num{6.15e-1})$ & $.073~(\num{3.52e-1})$ & $-.114~(\num{5.64e-1})$ \\
 & .80 & $--$ & $.137~(\num{3.60e-1})$ & $.072~(\num{6.93e-1})$ & $-.152~(\num{3.21e-1})$ & $.176~(\num{3.33e-2})$ & $.077~(\num{3.28e-1})$ & $\mathbf{.278~(\num{1.36e-1})}$ \\
 & .90 & $--$ & $.065~(\num{7.91e-1})$ & $-.222~(\num{1.94e-1})$ & $-.023~(\num{9.01e-1})$ & $.032~(\num{8.44e-1})$ & $\mathbf{.167~(\num{1.06e-1})}$ & $.008~(\num{9.54e-1})$ \\

\midrule
\multirow{9}{*}{\rotatebox[origin=c]{90}{$\htcatd-{\texttt{Male}}$}} & .00 & $\mathbf{.402~(\textcolor{blue}{\num{2.24e-42}})}$ & $.054~(\num{1.48e-2})$ & $.199~(\textcolor{blue}{\num{2.14e-14}})$ & $.062~(\num{5.22e-3})$ & $.058~(\num{7.55e-3})$ & $.062~(\num{5.22e-3})$ & $.058~(\num{7.55e-3})$ \\
 & .10 & $\mathbf{.451~(\textcolor{blue}{\num{1.13e-47}})}$ & $.063~(\num{1.06e-2})$ & $.208~(\textcolor{blue}{\num{4.56e-15}})$ & $.070~(\num{4.74e-3})$ & $.070~(\num{3.56e-3})$ & $.072~(\num{3.59e-3})$ & $.066~(\num{7.50e-3})$ \\
 & .20 & $\mathbf{.507~(\textcolor{blue}{\num{1.25e-55}})}$ & $.080~(\num{2.85e-3})$ & $.213~(\textcolor{blue}{\num{2.99e-15}})$ & $.094~(\num{3.78e-4})$ & $.086~(\num{9.02e-4})$ & $.095~(\num{3.77e-4})$ & $.071~(\num{8.96e-3})$ \\
 & .30 & $\mathbf{.594~(\textcolor{blue}{\num{9.86e-72}})}$ & $.094~(\num{2.20e-3})$ & $.227~(\textcolor{blue}{\num{1.20e-15}})$ & $.109~(\num{8.23e-5})$ & $.090~(\num{7.99e-4})$ & $.100~(\num{3.82e-4})$ & $.078~(\num{1.07e-2})$ \\
 & .40 & $\mathbf{.682~(\textcolor{blue}{\num{2.53e-90}})}$ & $.110~(\num{1.24e-3})$ & $.245~(\textcolor{blue}{\num{3.65e-16}})$ & $.116~(\num{1.40e-4})$ & $.130~(\textcolor{blue}{\num{2.07e-6}})$ & $.137~(\textcolor{blue}{\num{1.85e-6}})$ & $.116~(\num{8.76e-4})$ \\
 & .50 & $\mathbf{.719~(\textcolor{blue}{\num{3.04e-92}})}$ & $.134~(\num{4.93e-4})$ & $.249~(\textcolor{blue}{\num{8.22e-15}})$ & $.125~(\num{2.80e-4})$ & $.130~(\textcolor{blue}{\num{6.63e-6}})$ & $.150~(\textcolor{blue}{\num{7.08e-7}})$ & $.130~(\num{1.45e-3})$ \\
 & .60 & $\mathbf{.810~(\textcolor{blue}{\num{3.08e-127}})}$ & $.151~(\num{7.98e-4})$ & $.268~(\textcolor{blue}{\num{8.49e-14}})$ & $.140~(\num{4.69e-4})$ & $.170~(\textcolor{blue}{\num{4.09e-8}})$ & $.155~(\textcolor{blue}{\num{9.19e-6}})$ & $.185~(\textcolor{blue}{\num{4.44e-5}})$ \\
 & .70 & $\mathbf{.853~(\textcolor{blue}{\num{2.02e-119}})}$ & $.234~(\textcolor{blue}{\num{1.01e-6}})$ & $.176~(\textcolor{blue}{\num{5.73e-7}})$ & $.188~(\num{1.90e-4})$ & $.186~(\textcolor{blue}{\num{1.32e-7}})$ & $.180~(\textcolor{blue}{\num{1.00e-5}})$ & $.227~(\textcolor{blue}{\num{9.27e-6}})$ \\
 & .80 & $\mathbf{.863~(\textcolor{blue}{\num{4.02e-69}})}$ & $.337~(\textcolor{blue}{\num{1.23e-8}})$ & $.048~(\num{4.21e-2})$ & $.223~(\num{2.90e-4})$ & $.175~(\textcolor{blue}{\num{4.30e-5}})$ & $.262~(\textcolor{blue}{\num{5.84e-8}})$ & $.245~(\textcolor{blue}{\num{4.46e-5}})$ \\
 & .90 & $\mathbf{.863~(\textcolor{blue}{\num{4.02e-69}})}$ & $.436~(\textcolor{blue}{\num{7.48e-8}})$ & $.049~(\num{1.54e-1})$ & $.407~(\textcolor{blue}{\num{1.32e-7}})$ & $.156~(\num{1.66e-2})$ & $.261~(\textcolor{blue}{\num{6.86e-5}})$ & $.318~(\num{1.71e-4})$ \\

\midrule
\multirow{9}{*}{\rotatebox[origin=c]{90}{$\htcatd-{\texttt{Female}}$}} & .00 & $\mathbf{.311~(\textcolor{blue}{\num{4.87e-21}})}$ & $.002~(\num{9.16e-1})$ & $.178~(\textcolor{blue}{\num{3.92e-9}})$ & $-.000~(\num{1.00e+00})$ & $.005~(\num{8.29e-1})$ & $.012~(\num{6.04e-1})$ & $-.015~(\num{5.03e-1})$ \\
 & .10 & $\mathbf{.376~(\textcolor{blue}{\num{5.51e-26}})}$ & $.026~(\num{3.59e-1})$ & $.184~(\textcolor{blue}{\num{2.37e-9}})$ & $.031~(\num{2.45e-1})$ & $.035~(\num{1.91e-1})$ & $.044~(\num{1.17e-1})$ & $-.003~(\num{9.08e-1})$ \\
 & .20 & $\mathbf{.453~(\textcolor{blue}{\num{6.39e-34}})}$ & $.042~(\num{1.51e-1})$ & $.185~(\textcolor{blue}{\num{4.81e-9}})$ & $.035~(\num{2.25e-1})$ & $.054~(\num{6.16e-2})$ & $.061~(\num{4.54e-2})$ & $.000~(\num{1.00e+00})$ \\
 & .30 & $\mathbf{.573~(\textcolor{blue}{\num{3.46e-46}})}$ & $.062~(\num{4.76e-2})$ & $.190~(\textcolor{blue}{\num{5.24e-9}})$ & $.053~(\num{9.47e-2})$ & $.070~(\num{1.41e-2})$ & $.079~(\num{1.09e-2})$ & $.004~(\num{9.00e-1})$ \\
 & .40 & $\mathbf{.662~(\textcolor{blue}{\num{5.16e-57}})}$ & $.073~(\num{2.82e-2})$ & $.232~(\textcolor{blue}{\num{7.55e-12}})$ & $.059~(\num{8.81e-2})$ & $.078~(\num{8.58e-3})$ & $.106~(\num{6.72e-4})$ & $.017~(\num{6.00e-1})$ \\
 & .50 & $\mathbf{.744~(\textcolor{blue}{\num{3.95e-70}})}$ & $.073~(\num{4.15e-2})$ & $.236~(\textcolor{blue}{\num{3.07e-10}})$ & $.077~(\num{6.82e-2})$ & $.096~(\num{4.35e-3})$ & $.127~(\num{5.12e-4})$ & $.036~(\num{3.17e-1})$ \\
 & .60 & $\mathbf{.787~(\textcolor{blue}{\num{4.52e-71}})}$ & $.099~(\num{1.43e-2})$ & $.239~(\textcolor{blue}{\num{3.18e-8}})$ & $.094~(\num{3.50e-2})$ & $.143~(\num{2.21e-4})$ & $.140~(\num{6.22e-4})$ & $.071~(\num{8.38e-2})$ \\
 & .70 & $\mathbf{.867~(\textcolor{blue}{\num{1.86e-84}})}$ & $.132~(\num{2.66e-3})$ & $.214~(\textcolor{blue}{\num{8.23e-6}})$ & $.127~(\num{1.17e-2})$ & $.153~(\num{1.03e-3})$ & $.106~(\num{2.22e-2})$ & $.080~(\num{9.19e-2})$ \\
 & .80 & $\mathbf{.929~(\textcolor{blue}{\num{1.09e-118}})}$ & $.198~(\num{5.07e-4})$ & $.000~(\num{1.00e+00})$ & $.180~(\num{2.24e-3})$ & $.188~(\num{1.35e-4})$ & $.145~(\num{5.04e-3})$ & $.184~(\num{2.64e-3})$ \\
 & .90 & $\mathbf{.929~(\textcolor{blue}{\num{1.09e-118}})}$ & $.260~(\num{2.16e-3})$ & $--$ & $.217~(\num{7.76e-3})$ & $.233~(\num{2.95e-3})$ & $.079~(\num{7.46e-2})$ & $.179~(\num{6.21e-2})$ \\

\midrule
\multirow{11}{*}{\rotatebox[origin=c]{90}{Accuracy}} & .00 & $\mathbf{.871}$ & $\mathbf{.871}$ & $\mathbf{.871}$ & $\mathbf{.871}$ & $\mathbf{.871}$ & $\mathbf{.871}$ & $\mathbf{.871}$ \\
 & .10 & $.879$ & $.880$ & $.864$ & $.883$ & $\mathbf{.884}$ & $.883$ & $.877$ \\
 & .20 & $\mathbf{.892}$ & $.889$ & $.850$ & $.889$ & $.891$ & $.891$ & $.877$ \\
 & .30 & $\mathbf{.897}$ & $.893$ & $.838$ & $.893$ & $.891$ & $.891$ & $.877$ \\
 & .40 & $.877$ & $.894$ & $.836$ & $.890$ & $.900$ & $\mathbf{.902}$ & $.886$ \\
 & .50 & $.842$ & $.892$ & $.814$ & $.886$ & $.892$ & $\mathbf{.896}$ & $.887$ \\
 & .60 & $.789$ & $.890$ & $.783$ & $.885$ & $\mathbf{.897}$ & $.885$ & $\mathbf{.897}$ \\
 & .70 & $.726$ & $\mathbf{.900}$ & $.730$ & $.887$ & $.887$ & $.871$ & $.892$ \\
 & .80 & $.632$ & $\mathbf{.899}$ & $.687$ & $.884$ & $.869$ & $.870$ & $.891$ \\
 & .90 & $.632$ & $\mathbf{.885}$ & $.685$ & $.876$ & $.852$ & $.850$ & $.872$ \\
 & 1.00 & $.512$ & $.842$ & $.682$ & $.838$ & $.838$ & $.832$ & $\mathbf{.848}$ \\
% \bottomrule
\end{tabular}
}
\end{table}

}

\subsection{How to validate estimates under Scenario \ref{scenario2}}
\label{sec:appendixValidate}
When considering Scenario \ref{scenario2}, we require Assumption \ref{ass:continuity} to hold. Here, we provide a few sanity checks that can be implemented to validate the estimates (see Appendix~\ref{sec:assContinuityChecks} for a detailed description).

\subsubsection{Placebo Tests} \label{sec: placebo tests}

\paragraph{Placebo cutoff test setup.}  We consider the same setup presented in Section \ref{sec:experimentsGS}, except for the following: 
\begin{enumerate}
    \item  we estimate two different cutoff values $\ok_{c,L}$ and $\ok_{c,H}$
    \begin{itemize}
        \item $\ok_{c,L}$ is obtained by considering the 75-th percentile on the instances with a reject score below $\ok_c$,
        \item $\ok_{c,H}$ is estimated by considering the 25-th percentile of the reject scores above $\ok_c$;
    \end{itemize}
    \item  we run a local kernel polynomial regression to estimate $\htrdd$, substituting the true cutoff value $\ok_c$ with both $\ok_{c,L}$ and $\ok_{c,H}$.

\end{enumerate}

\paragraph{Placebo outcome test setup.} We consider the same setup presented in Section \ref{sec:experimentsGS}, except for the following: 
\begin{enumerate}
    \item we sample a placebo outcome $\Tilde{T}$, such that $\Tilde{T}\sim\mathtt{Bernoulli}(p)$ and $p=.5$;
    \item we run the same local kernel polynomial regression used to estimate $\htrdd$ substituting the target variable $T$ with $\Tilde{T}$.
\end{enumerate}

\paragraph{Results}
\afterpage{
\begin{figure}
    \centering
    \includegraphics[scale=.195]{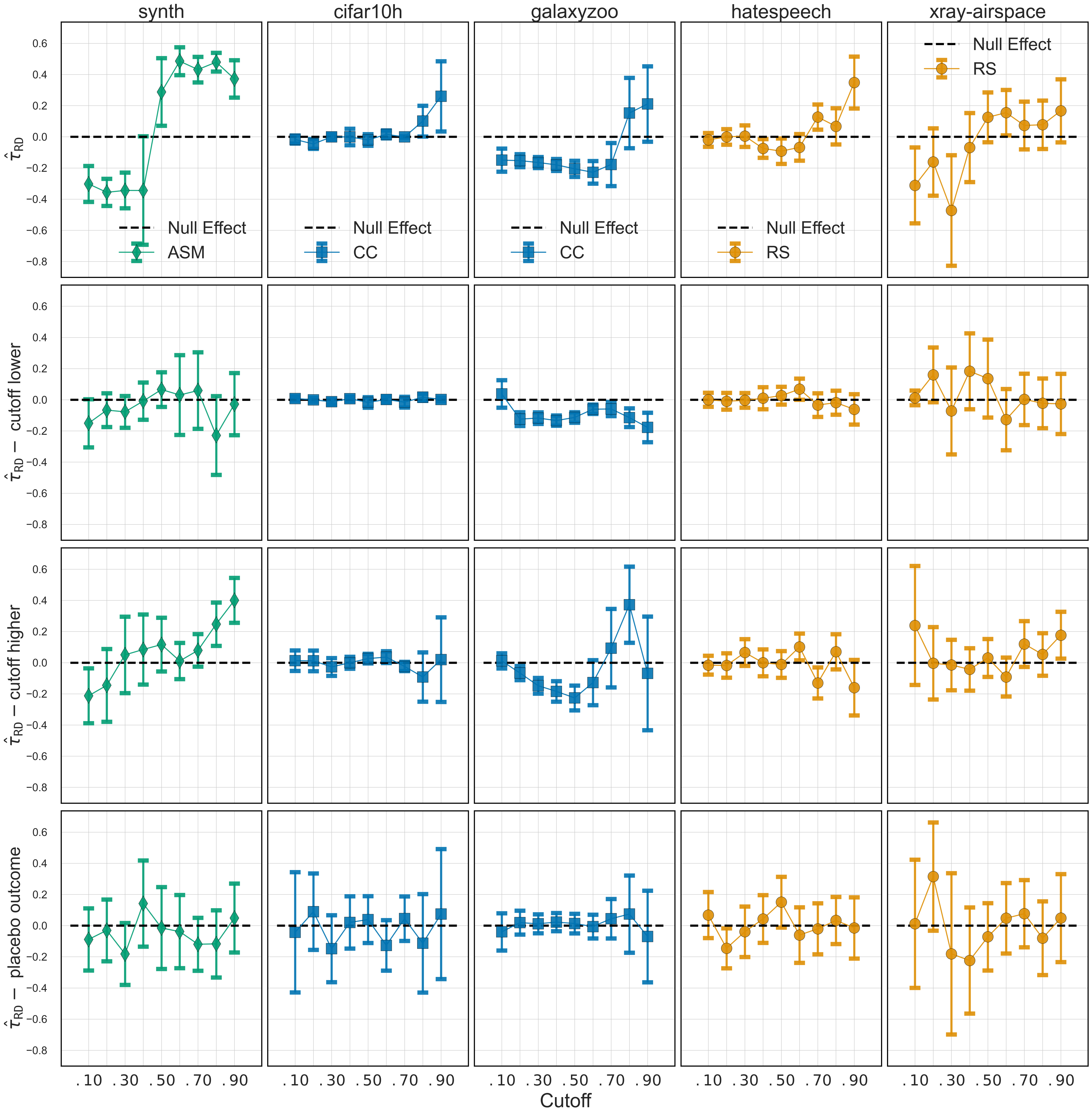}
    \caption{Estimated $\htrdd$ for the best baselines over the five datasets. The first row provides the estimates of $\htrdd$. The second row provides the estimates for the lower cutoff placebo test. The third row provides the estimates for the higher cutoff placebo test. The fourth row provides the estimates for the placebo outcome test.}
    \label{fig:placebo-test}
\end{figure}
}

Figure \ref{fig:placebo-test} provides the results for the placebo tests above. The first row shows the original estimates for $\htrdd$ under Scenario \ref{scenario2} for the best baselines.

Regarding \texttt{synth}, all the coefficients of the placebo tests are not significant, with the sole exception of high coverage values ($c\geq.80$)  when considering the highest cutoff (third row of Figure \ref{fig:placebo-test}).

When considering \texttt{cifar10h}, \texttt{hatespeech}, and \texttt{xray-airspace}, all the coefficients for all the placebo tests are not significant. This suggests that our estimated $\htrdd$ should be robust.

On the other hand, when looking at \texttt{galaxyzoo}, we see statistically significant estimates for the fake cutoff placebo estimates. This means we should take with a grain of salt our estimated $\htrdd$, as assumptions could be violated.

We can exploit other tests, such as the one detailed in the next subsection, to understand why the placebo tests fail.

\subsubsection{Density Estimation}\label{sec: density estimation}

\paragraph{Density estimation test setup.}

We consider the same setup presented in Section \ref{sec:experimentsGS}, and we validate $\htrdd$ estimates by estimating the empirical density of reject scores. This is done separately for instances below the cutoff and above the cutoff, using the $\texttt{R}$ package $\texttt{rddensity}$ \cite{DBLP:journals/jstatsoft/CattaneoJM22} with default parameters.  This allows us to: $(i)$ statistically assess the continuity of estimated reject score densities around the cutoff using a permutation test (null hypothesis stands for continuity at the cutoff); $(ii)$ visualize whether there is a discontinuity in the estimated reject score densities around the cutoffs.

\paragraph{Results}

Figures \ref{fig:synth estimated density best}-\ref{fig:xray-airspace estimated density best} provide the density estimation plots and the permutation tests p-values (high values mean we can't falsify the assumption) for the best baseline on all the datasets. If the running variable is not manipulable, we would expect to see no difference in the estimated densities from both sides of the cutoff. 

We can see that for the majority of cutoffs, the estimated densities are close, thus not falsifying Assumption \ref{ass:continuity}. 
The only exception is $\texttt{galaxyzoo}$ (Figure \ref{fig:galaxyzoo estimated density best}), where most tests reject the null hypothesis. The main reason for this behavior is that the reject score density peaks around one value, with little variation in the reject scores. Accordingly, small changes in the reject score imply abrupt changes in predictive accuracy. This sheds light on a potential criticality of the deferring system, i.e.,~the reject score estimation.
Moreover, this behavior also explains why most placebo cutoff tests failed for \texttt{galaxyzoo}.

Regarding \texttt{synth}, the null hypothesis is rejected only for $c\geq.70$. Once again, this is because the reject scores peak at $-1$ and $1$, potentially harming the quality of the estimates.
We see similar trends for \texttt{cifar10h} (Figure \ref{fig:cifar10h estimated density best}), where the p-values are significant only at $c=.10$, and \texttt{xray-airspace}, where values are significant for $c>.70$ and around the accuracy maximizing cutoff value, i.e.,~.$\ok_{.40}$ and $\ok_{.50}$.

Regarding \texttt{hatespeech}, we see no statistically significant p-values, suggesting the validity of $\htrdd$ across all cutoff values (see Figure \ref{fig:hatespeech estimated density best}).

\afterpage{
\begin{figure*}[t!]
    \begin{subfigure}[t]{.32\textwidth}
        \includegraphics[scale=.22]{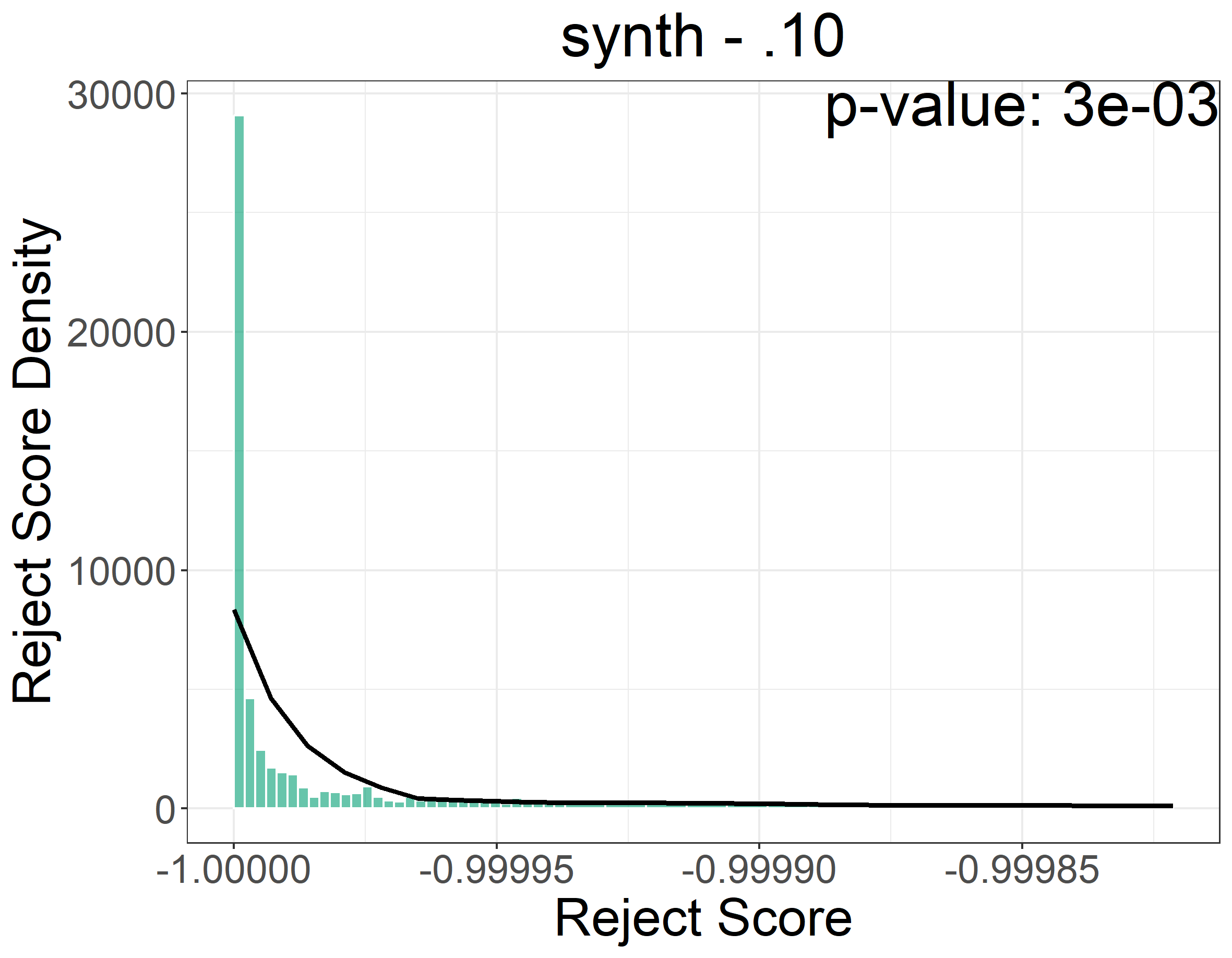}
        \caption{$\ok_{.10}$.}
        \label{fig:synth_best_density_cutoff.10}
    \end{subfigure}
    \hfill
    \begin{subfigure}[t]{.32\textwidth}
        \includegraphics[scale=.22]{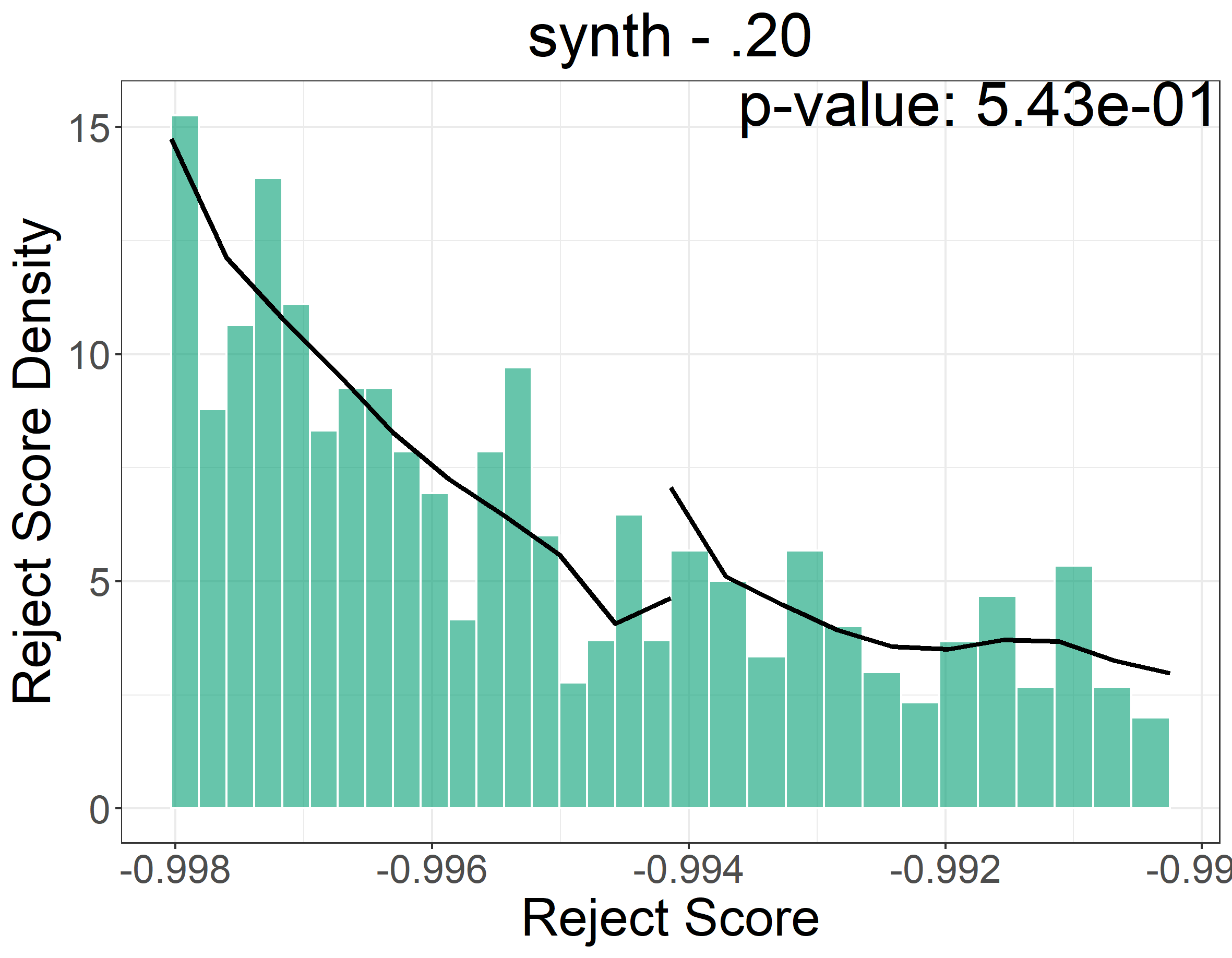}
        \caption{$\ok_{.20}$.}
        \label{fig:synth_best_density_cutoff.20}
    \end{subfigure}
    \hfill
    \begin{subfigure}[t]{.32\textwidth}
        \includegraphics[scale=.22]{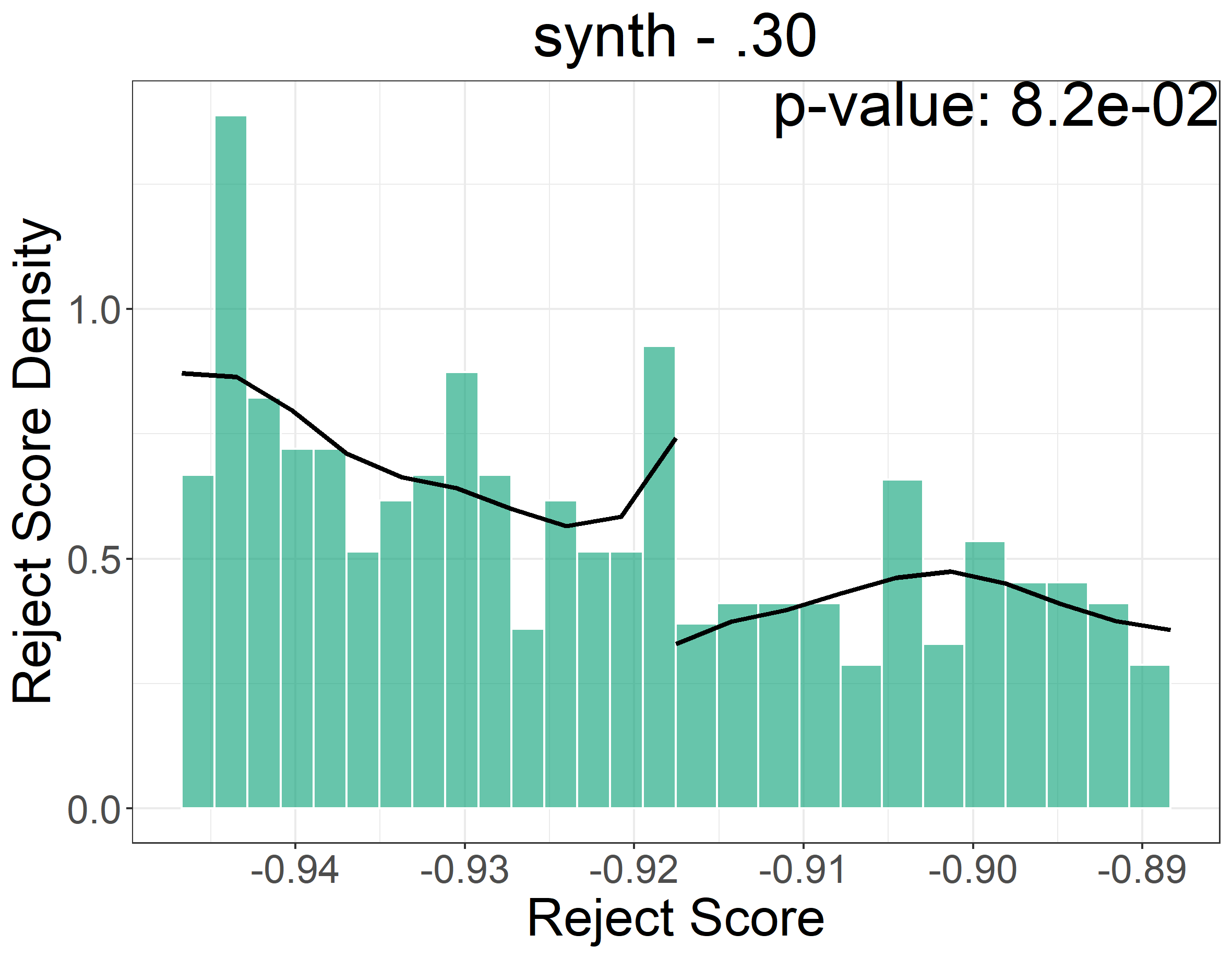}
        \caption{$\ok_{.30}$.}
        \label{fig:synth_best_density_cutoff.30}
    \end{subfigure}
    \hfill
    \begin{subfigure}[t]{.32\textwidth}
        \includegraphics[scale=.22]{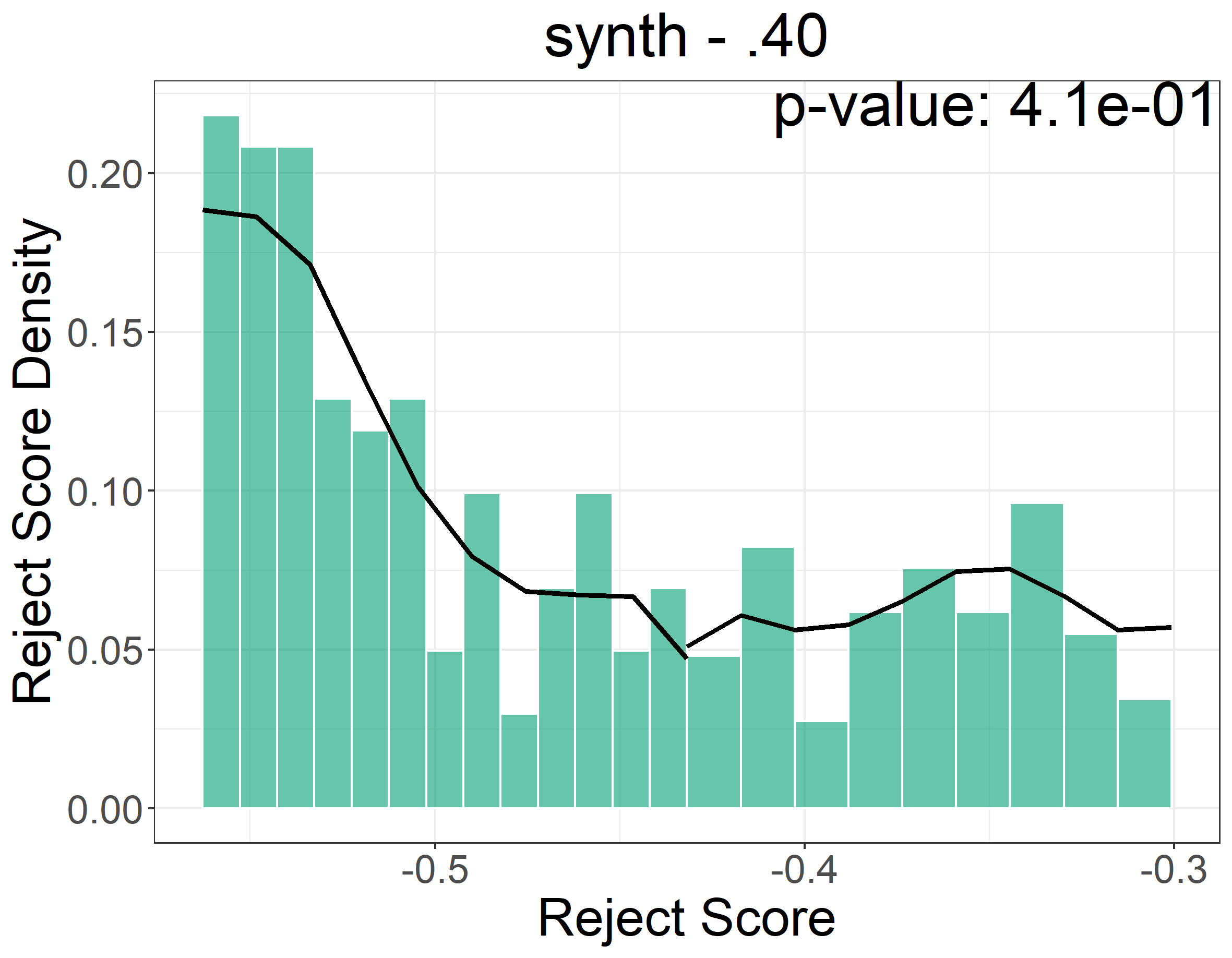}
        \caption{$\ok_{.40}$.}
        \label{fig:synth_best_density_cutoff.40}
    \end{subfigure}
    \hfill
    \begin{subfigure}[t]{.32\textwidth}
        \includegraphics[scale=.22]{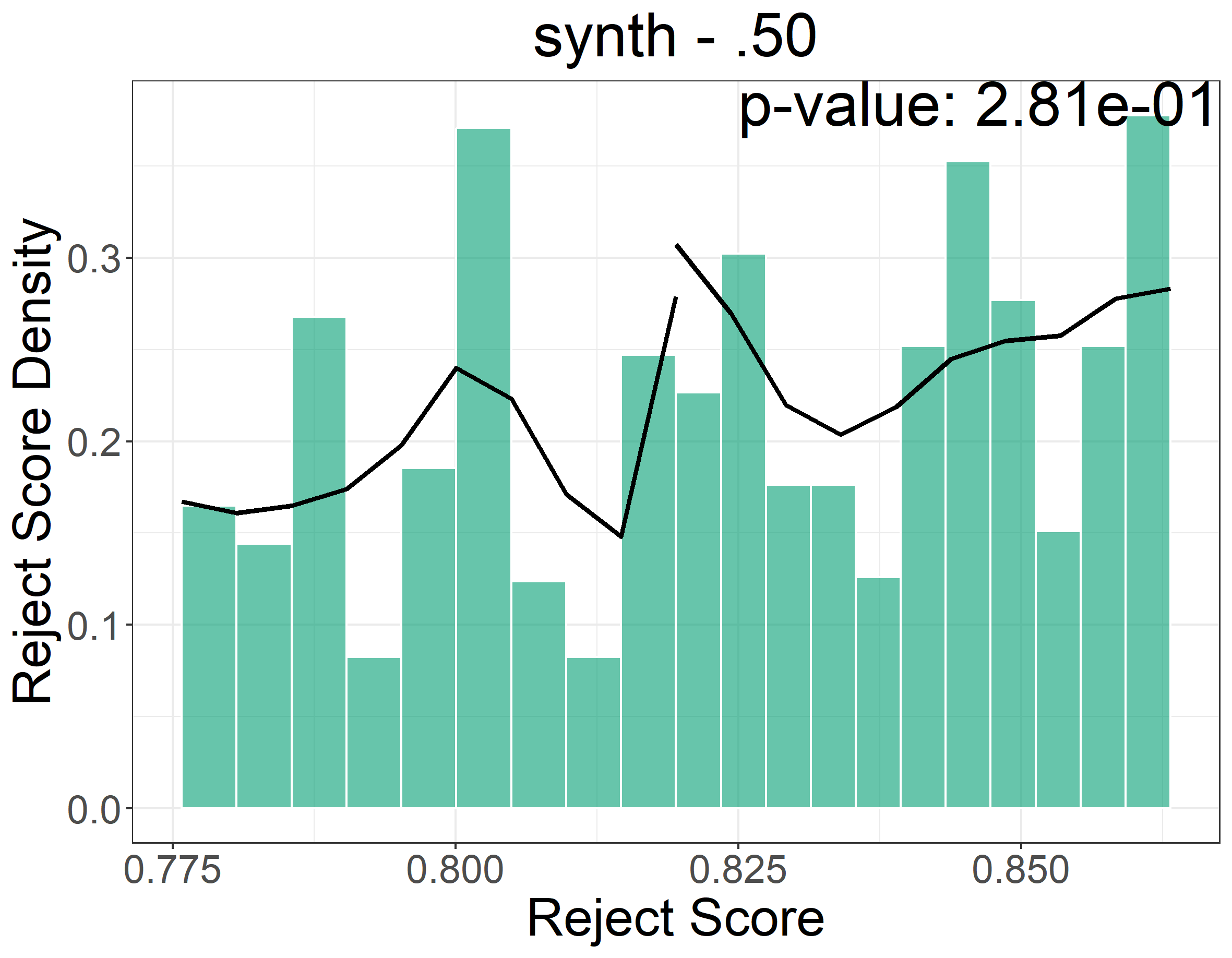}
        \caption{$\ok_{.50}$.}
        \label{fig:synth_best_density_cutoff.50}
    \end{subfigure}
    \hfill
    \begin{subfigure}[t]{.32\textwidth}
        \includegraphics[scale=.22]{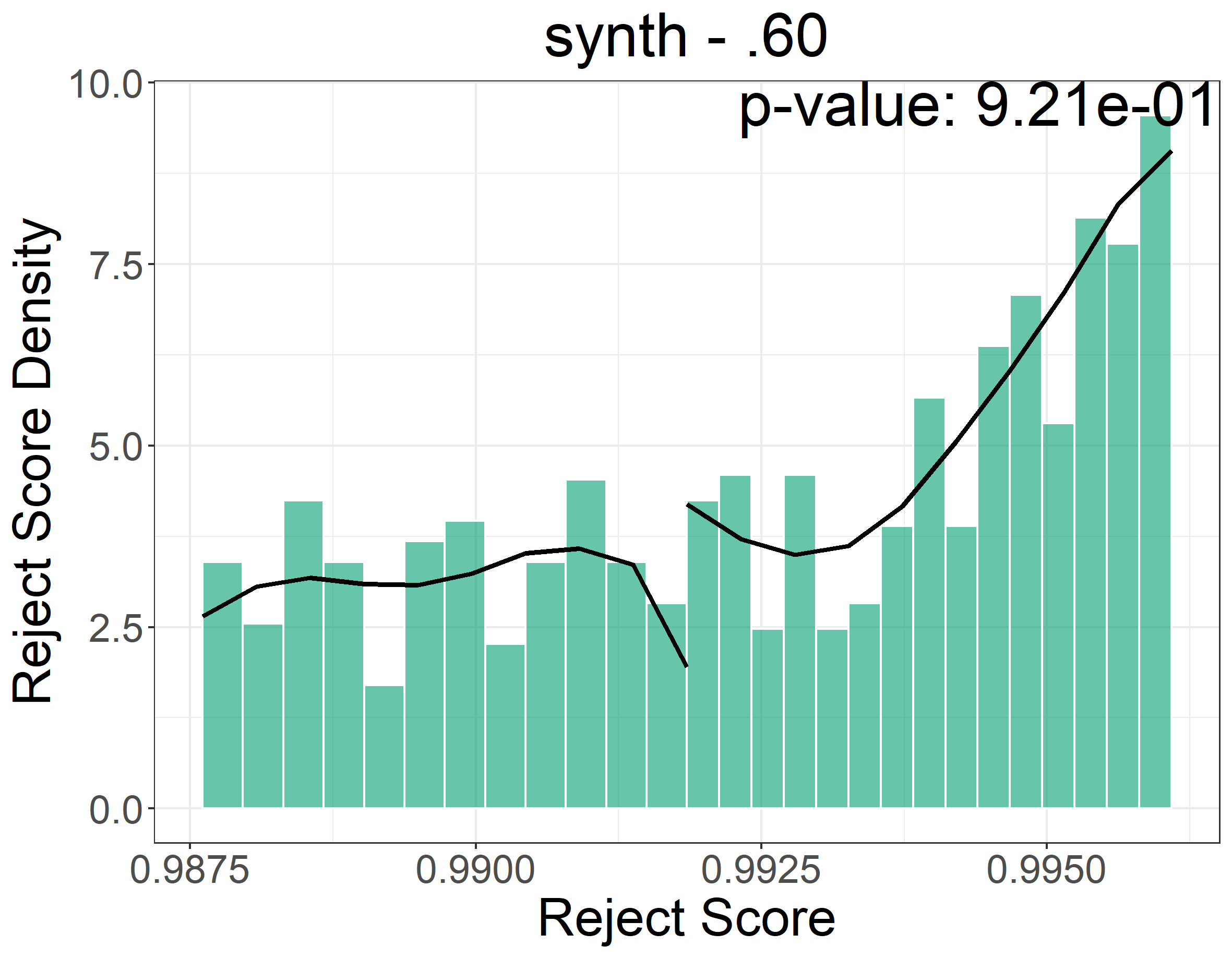}
        \caption{$\ok_{.60}$.}
        \label{fig:synth_best_density_cutoff.60}
    \end{subfigure}
    \hfill
    \begin{subfigure}[t]{.32\textwidth}
        \includegraphics[scale=.22]{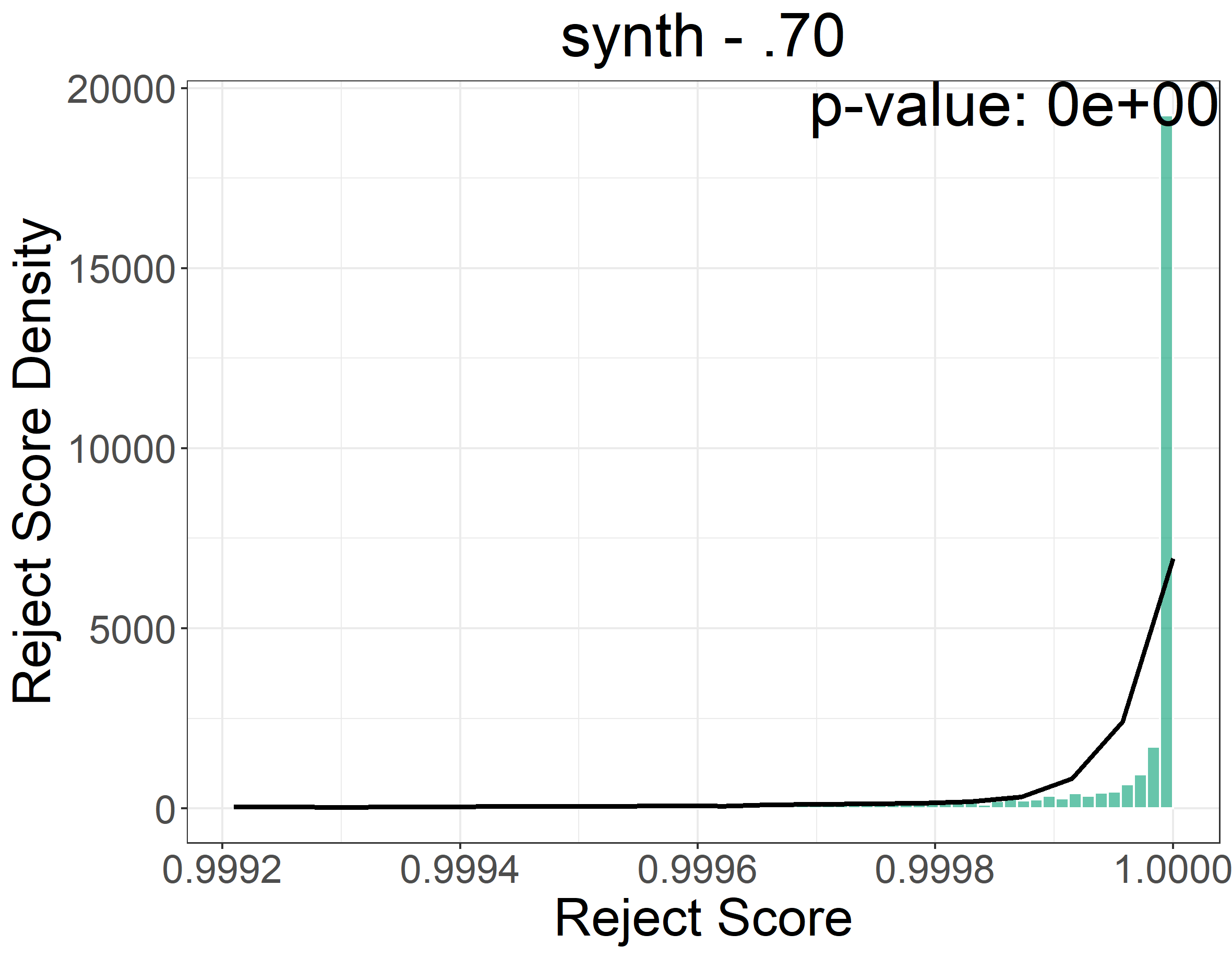}
        \caption{$\ok_{.70}$.}
        \label{fig:synth_best_density_cutoff.70}
    \end{subfigure}
    \hfill
    \begin{subfigure}[t]{.32\textwidth}
        \includegraphics[scale=.22]{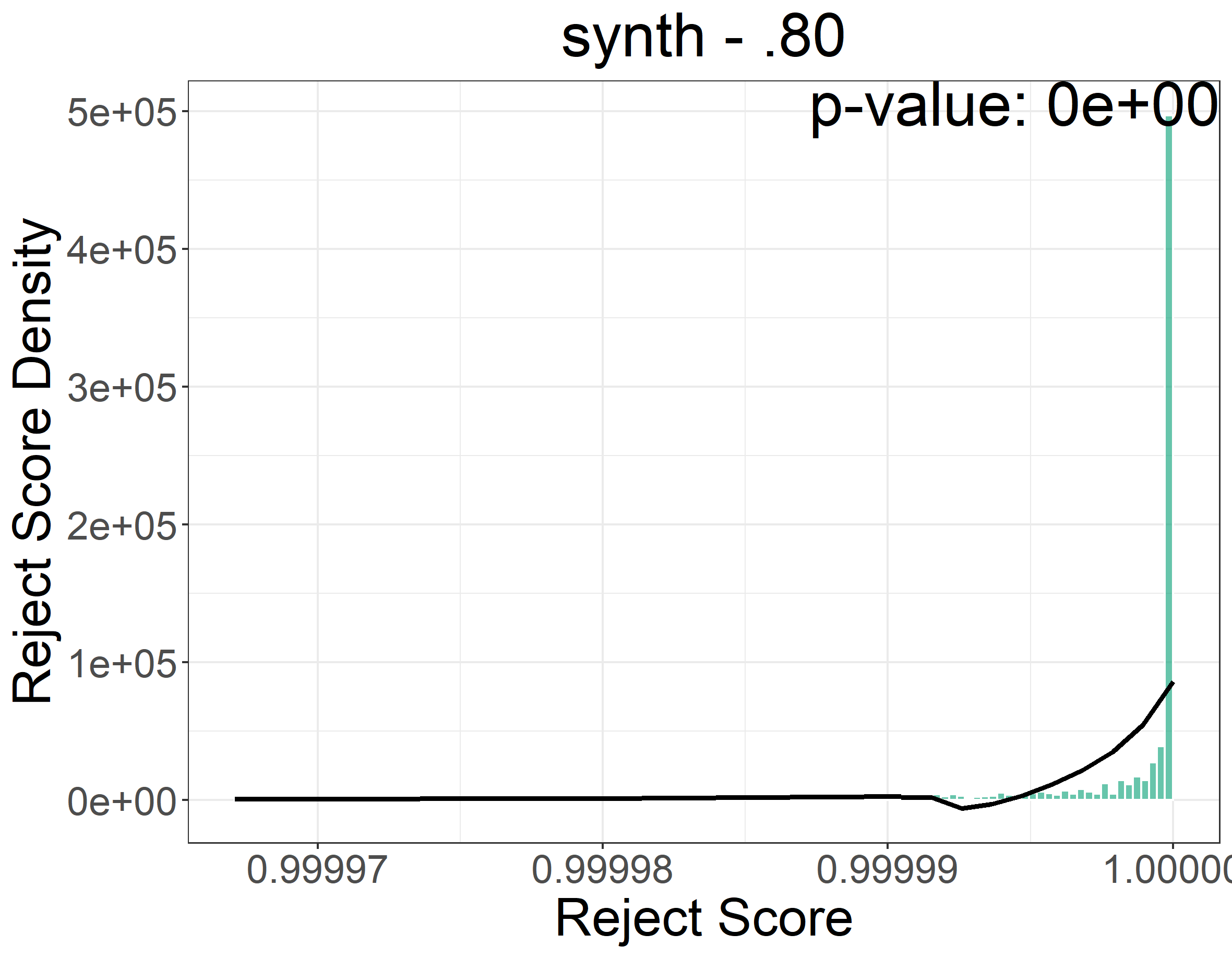}
        \caption{$\ok_{.80}$.}
        \label{fig:synth_best_density_cutoff.80}
    \end{subfigure}
    \hfill
    \begin{subfigure}[t]{.32\textwidth}
        \includegraphics[scale=.22]{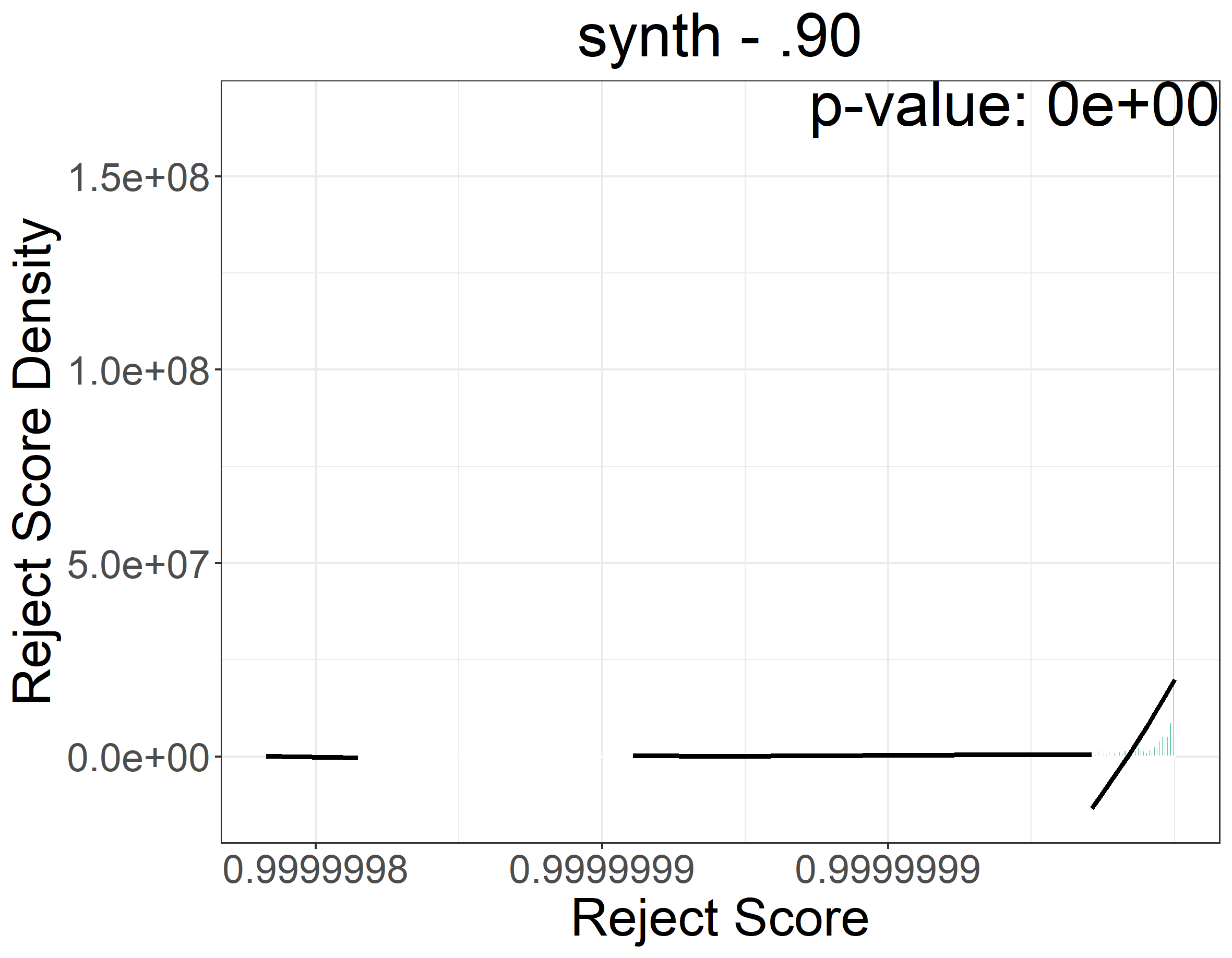}
        \caption{$\ok_{.90}$.}
        \label{fig:synth_best_density_cutoff.90}
    \end{subfigure}
    \hfill
    
    \caption{
    \texttt{synth} estimated best baseline (\ASM{}) reject scores densities at the left and right of cutoff $\ok_c$. All the plots are zoomed around the cutoff values.
    }
    \label{fig:synth estimated density best}
\end{figure*}
}

\afterpage{
\begin{figure*}[t!]
    \begin{subfigure}[t]{.32\textwidth}
        \includegraphics[scale=.22]{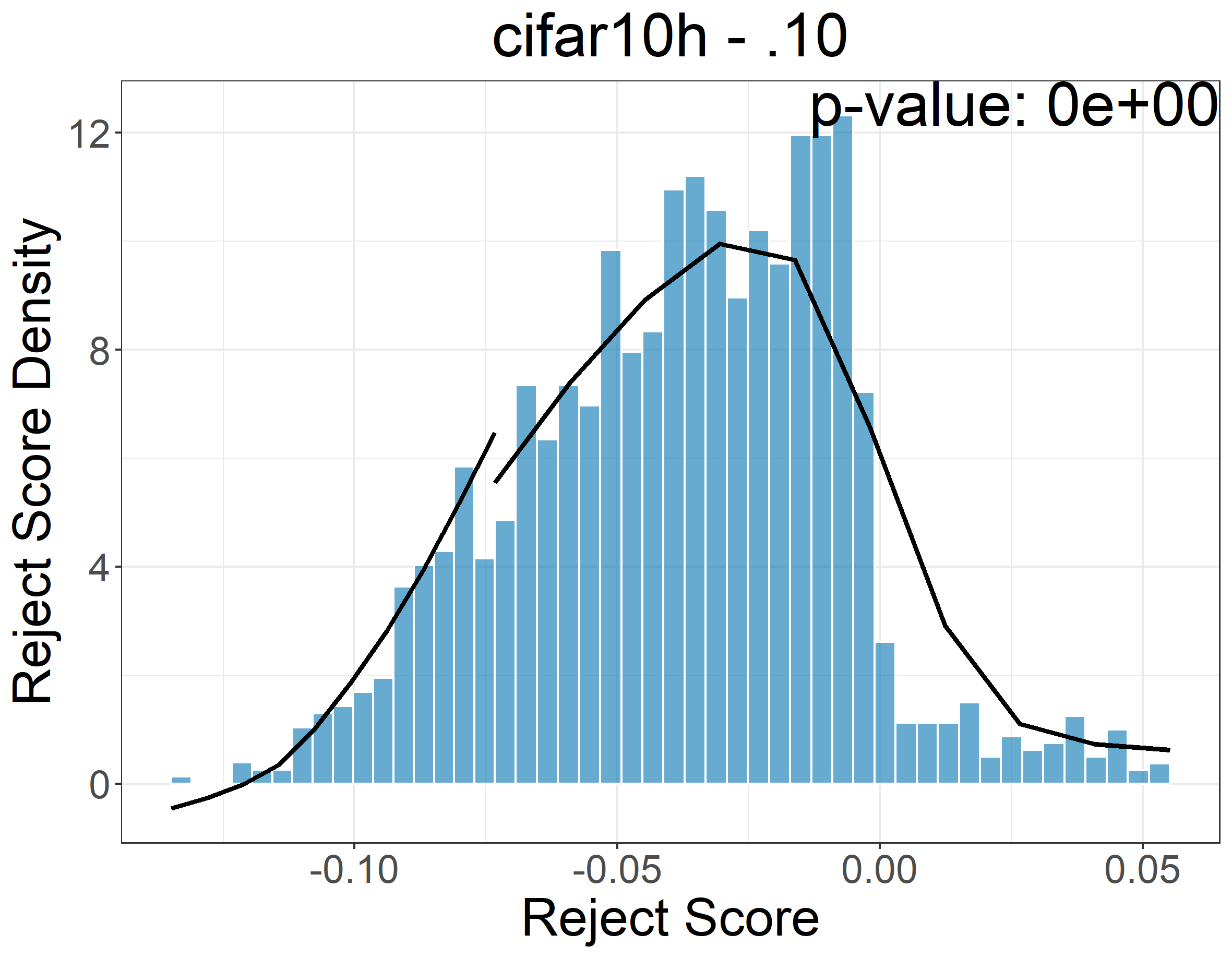}
        \caption{$\ok_{.10}$.}
        \label{fig:cifar10h_best_density_cutoff.10}
    \end{subfigure}
    \hfill
    \begin{subfigure}[t]{.32\textwidth}
        \includegraphics[scale=.22]{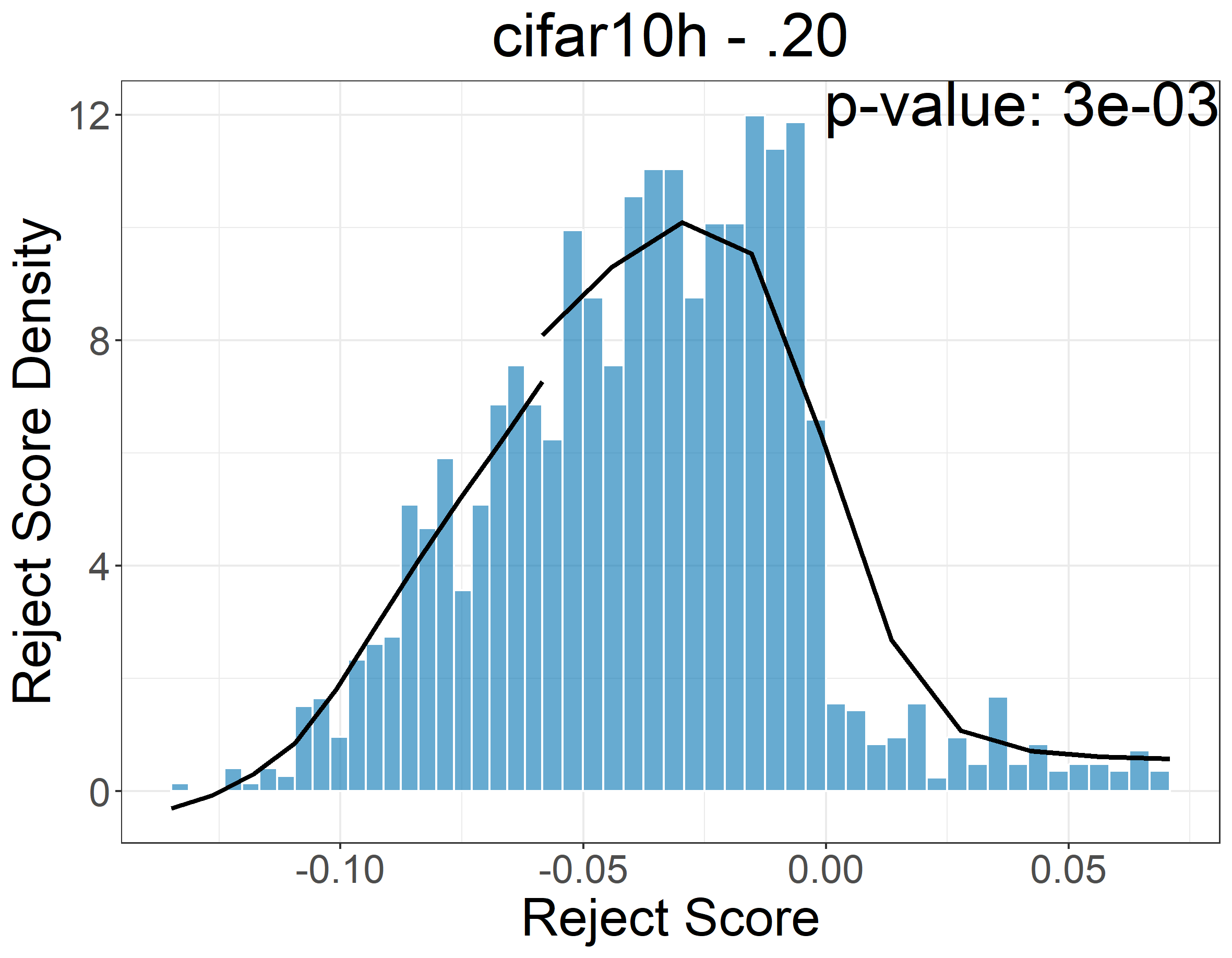}
        \caption{$\ok_{.20}$.}
        \label{fig:cifar10h_best_density_cutoff.20}
    \end{subfigure}
    \hfill
    \begin{subfigure}[t]{.32\textwidth}
        \includegraphics[scale=.22]{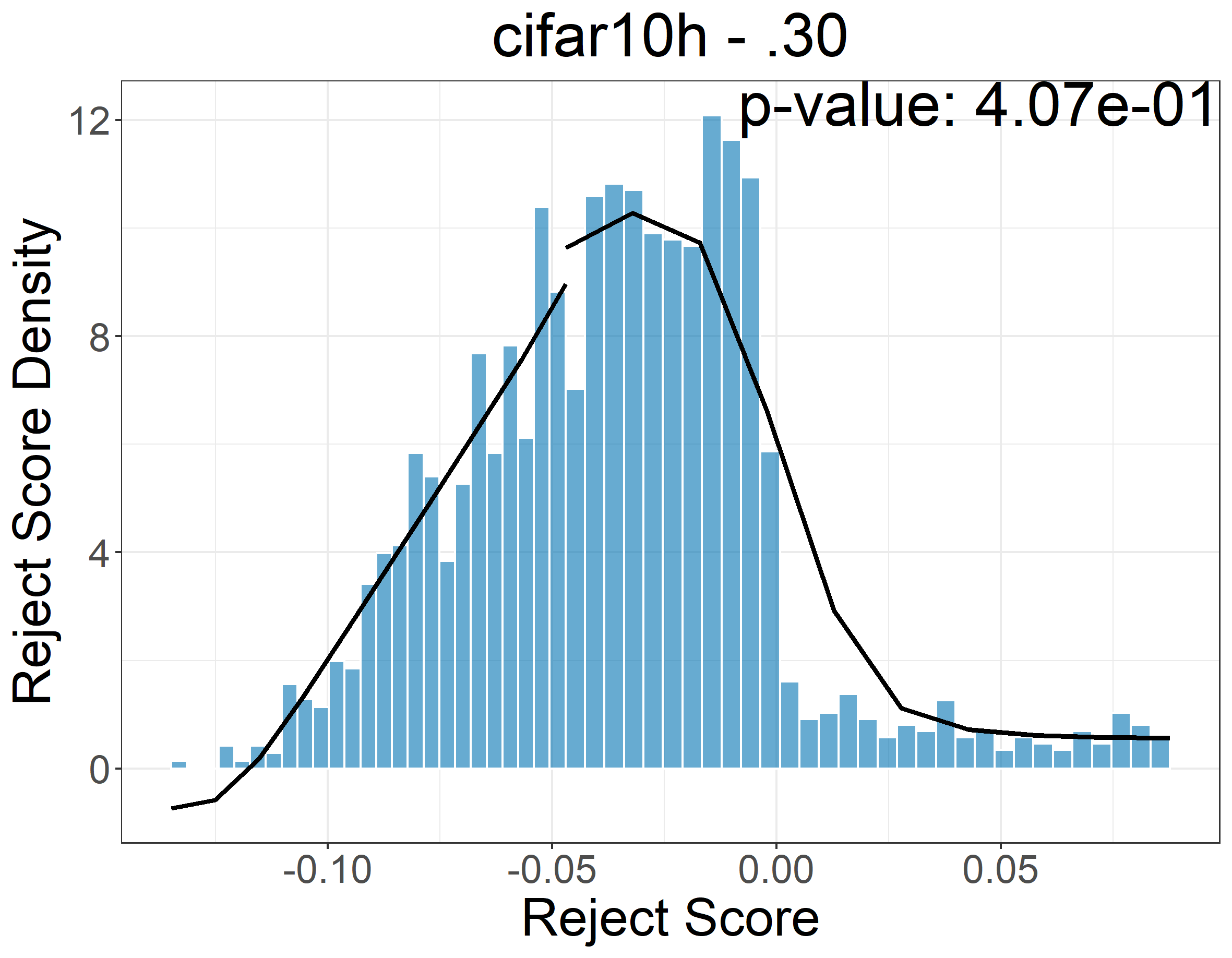}
        \caption{$\ok_{.30}$.}
        \label{fig:cifar10h_best_density_cutoff.30}
    \end{subfigure}
    \hfill
    \begin{subfigure}[t]{.32\textwidth}
        \includegraphics[scale=.22]{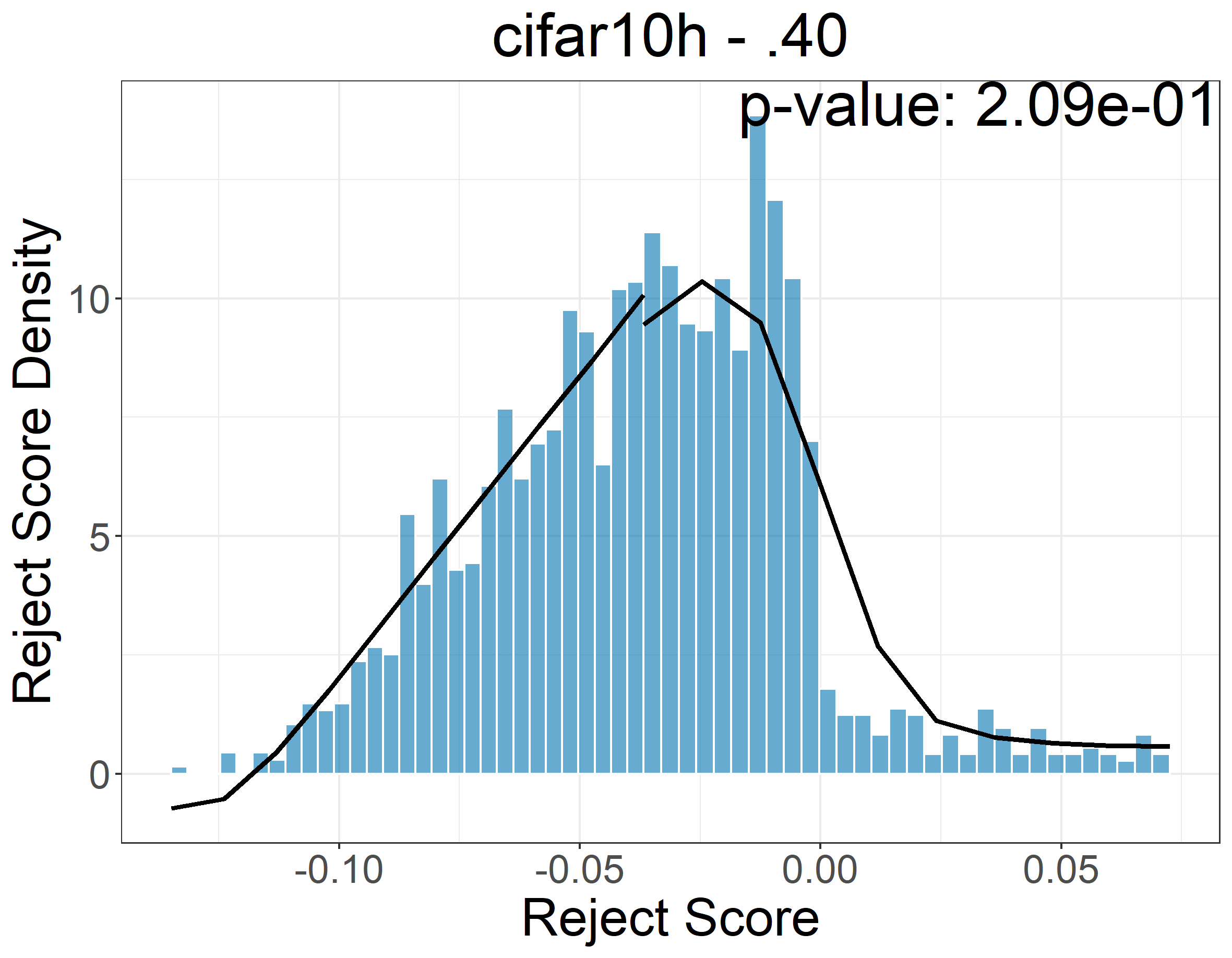}
        \caption{$\ok_{.40}$.}
        \label{fig:cifar10h_best_density_cutoff.40}
    \end{subfigure}
    \hfill
    \begin{subfigure}[t]{.32\textwidth}
        \includegraphics[scale=.22]{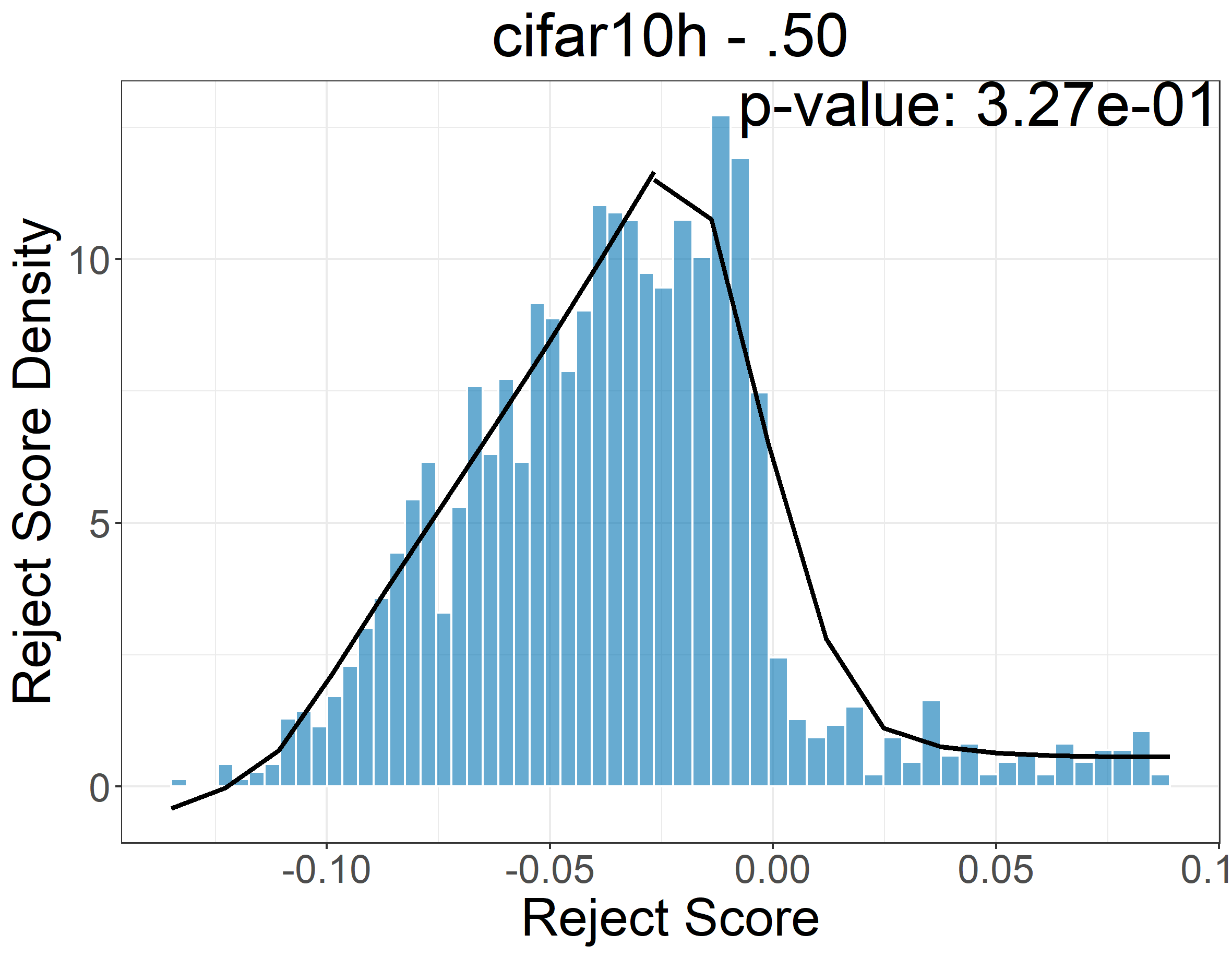}
        \caption{$\ok_{.50}$.}
        \label{fig:cifar10h_best_density_cutoff.50}
    \end{subfigure}
    \hfill
    \begin{subfigure}[t]{.32\textwidth}
        \includegraphics[scale=.22]{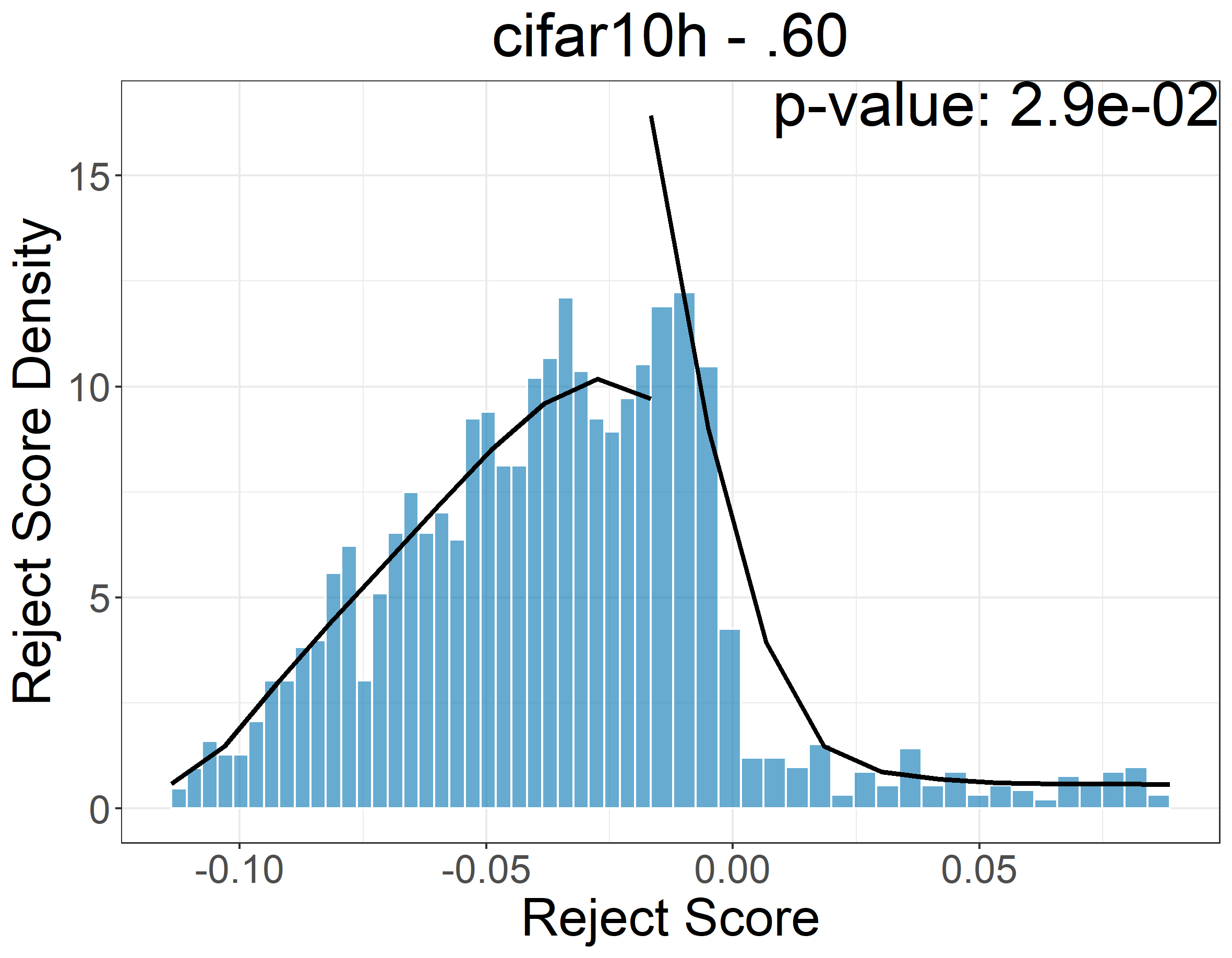}
        \caption{$\ok_{.60}$.}
        \label{fig:cifar10h_best_density_cutoff.60}
    \end{subfigure}
    \hfill
    \begin{subfigure}[t]{.32\textwidth}
        \includegraphics[scale=.22]{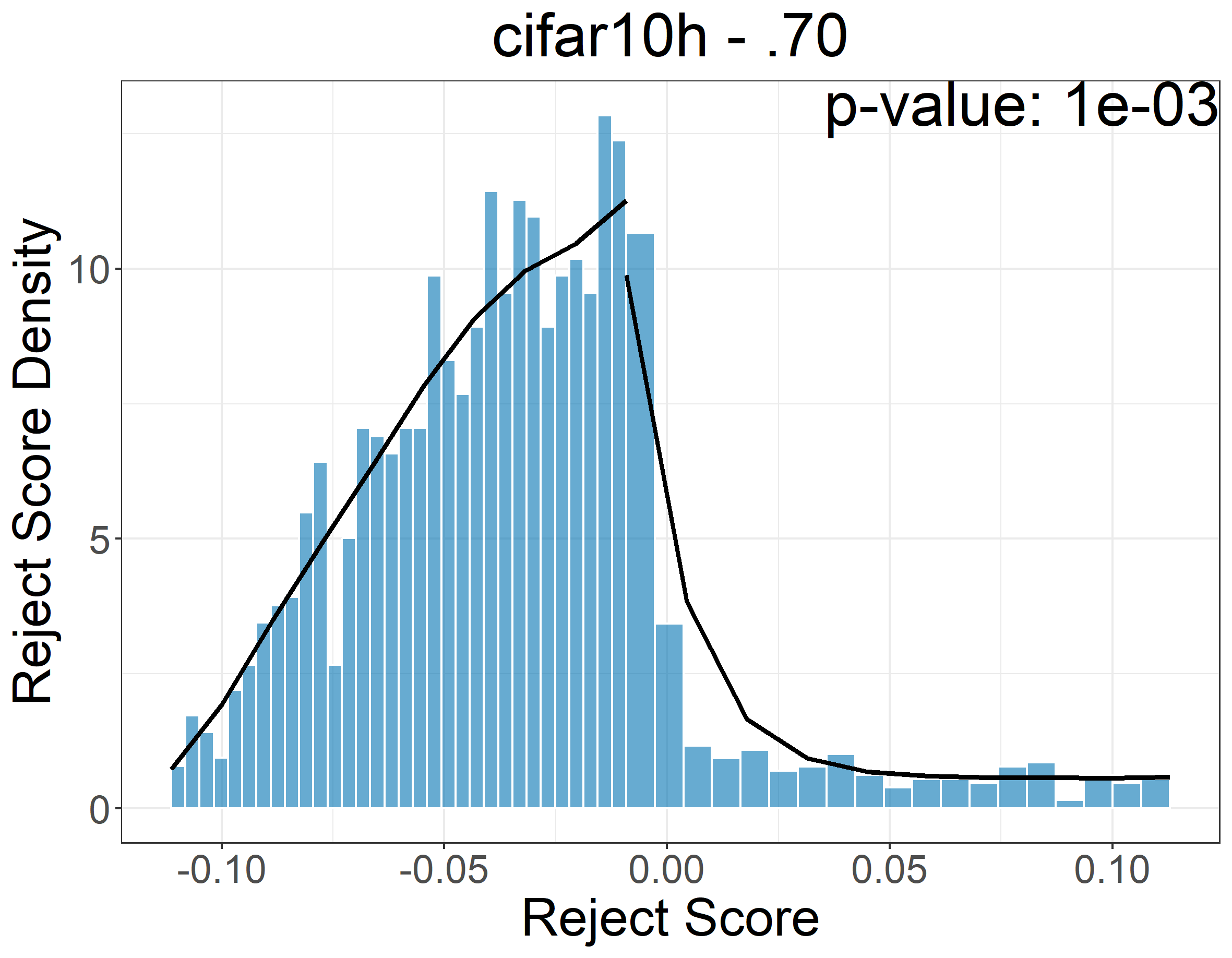}
        \caption{$\ok_{.70}$.}
        \label{fig:cifar10h_best_density_cutoff.70}
    \end{subfigure}
    \hfill
    \begin{subfigure}[t]{.32\textwidth}
        \includegraphics[scale=.22]{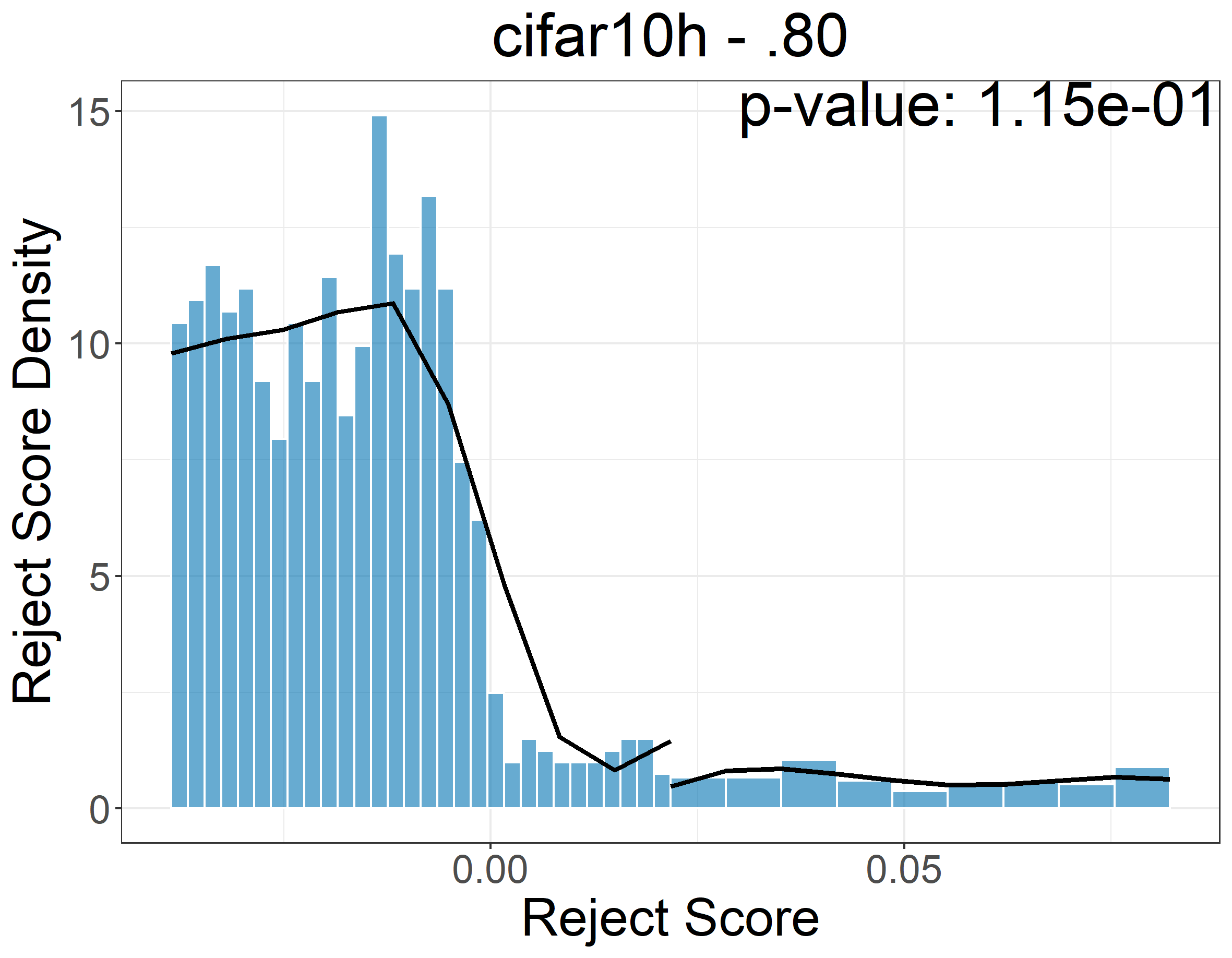}
        \caption{$\ok_{.80}$.}
        \label{fig:cifar10h_best_density_cutoff.80}
    \end{subfigure}
    \hfill
    \begin{subfigure}[t]{.32\textwidth}
        \includegraphics[scale=.22]{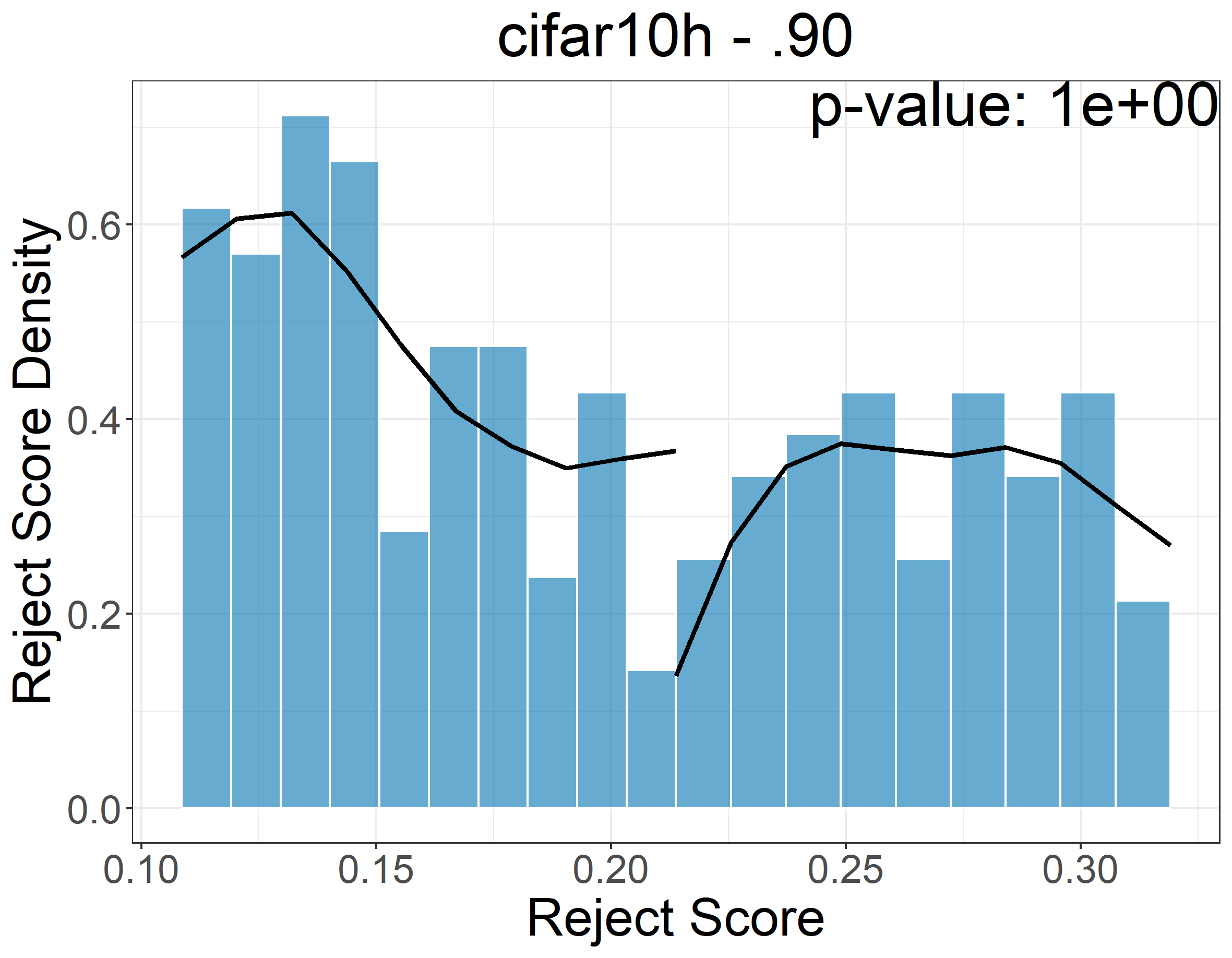}
        \caption{$\ok_{.90}$.}
        \label{fig:cifar10h_best_density_cutoff.90}
    \end{subfigure}
    \hfill
    
    \caption{
    \texttt{cifar10h} estimated best baseline (\CC{}) reject scores densities at the left and right of cutoff $\ok_c$. All the plots are zoomed around the cutoff values.
    }
    \label{fig:cifar10h estimated density best}
\end{figure*}
}
\afterpage{
\begin{figure*}[t!]
    \begin{subfigure}[t]{.32\textwidth}
        \includegraphics[scale=.22]{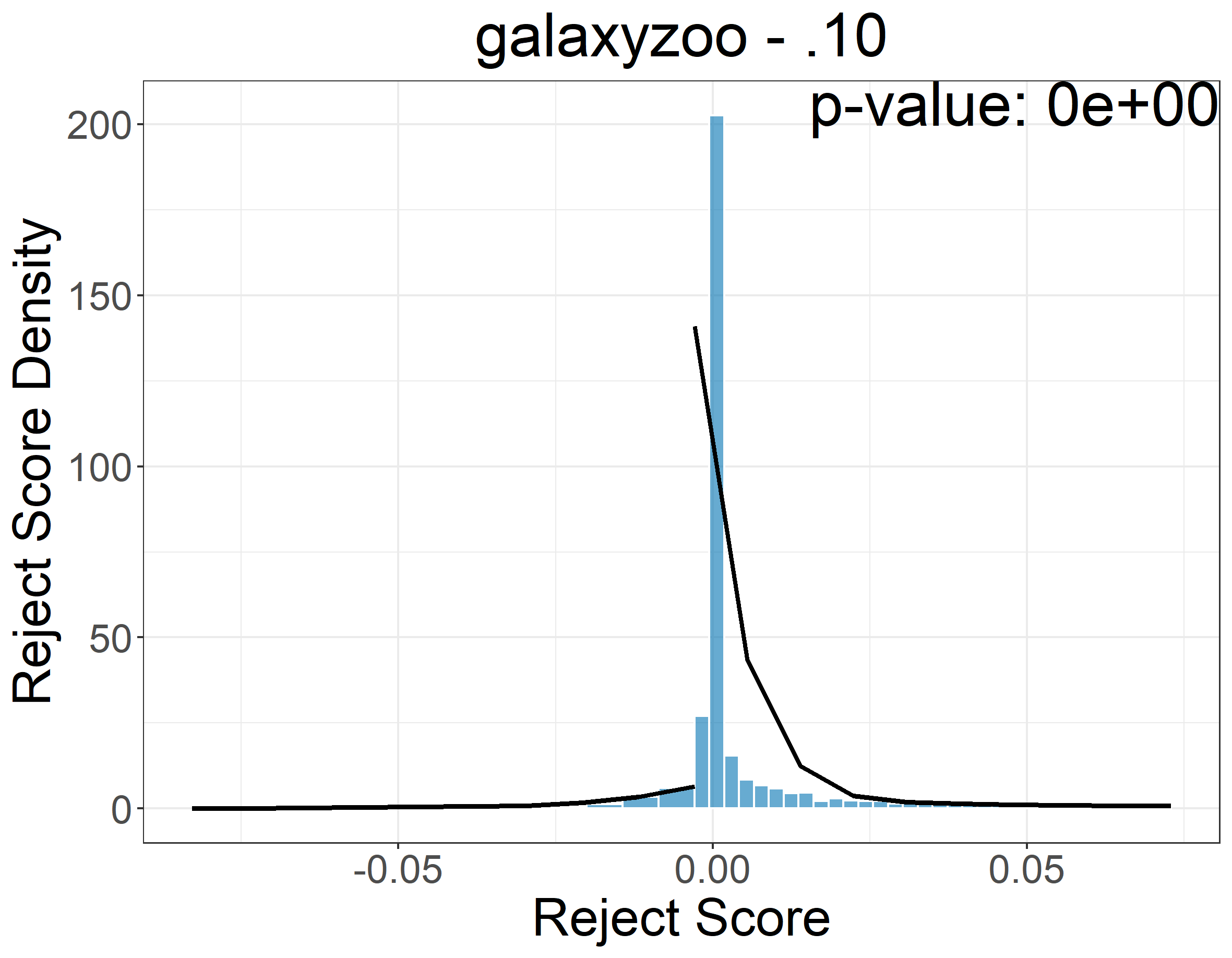}
        \caption{$\ok_{.10}$.}
        \label{fig:galaxyzoo_best_density_cutoff.10}
    \end{subfigure}
    \hfill
    \begin{subfigure}[t]{.32\textwidth}
        \includegraphics[scale=.22]{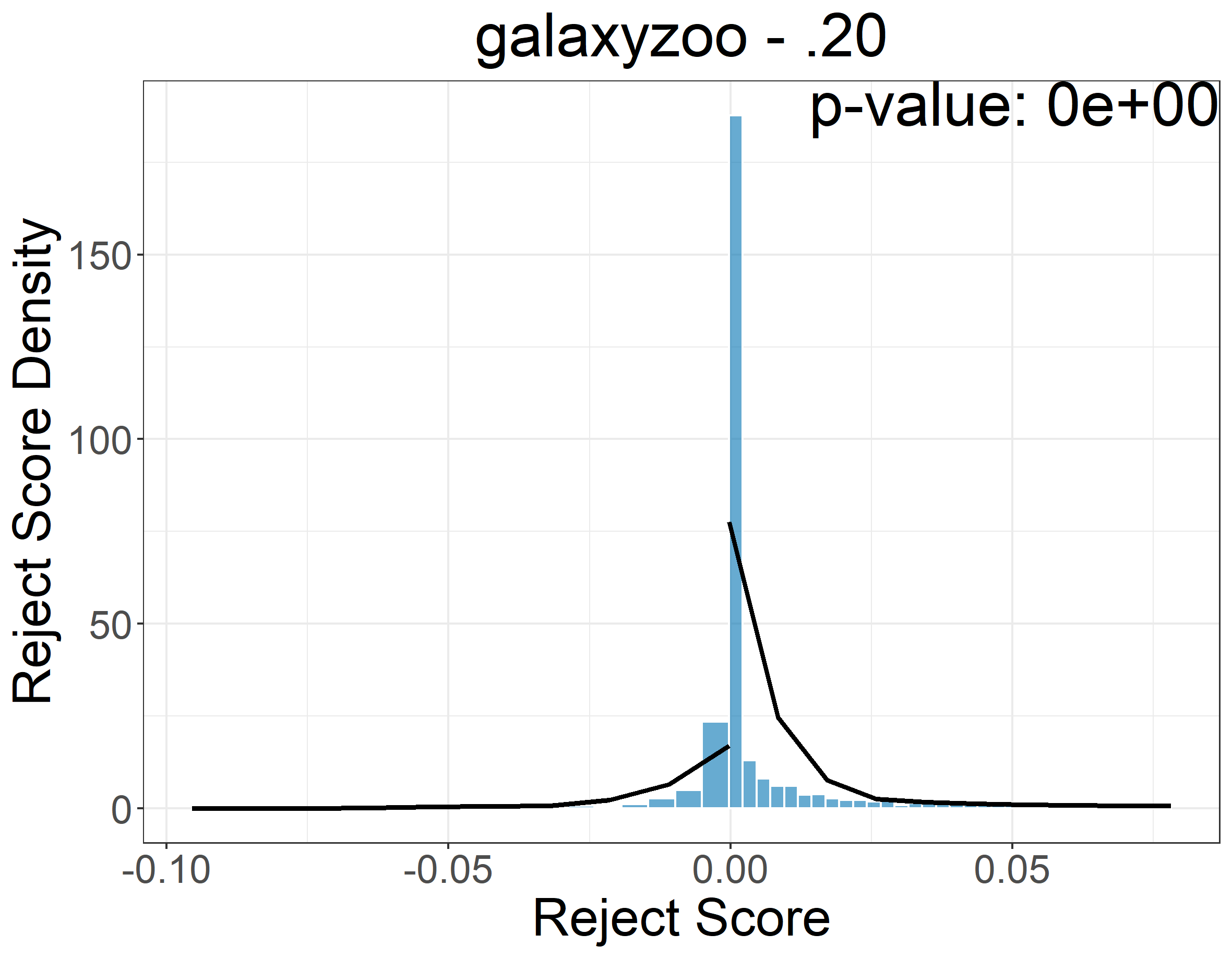}
        \caption{$\ok_{.20}$.}
        \label{fig:galaxyzoo_best_density_cutoff.20}
    \end{subfigure}
    \hfill
    \begin{subfigure}[t]{.32\textwidth}
        \includegraphics[scale=.22]{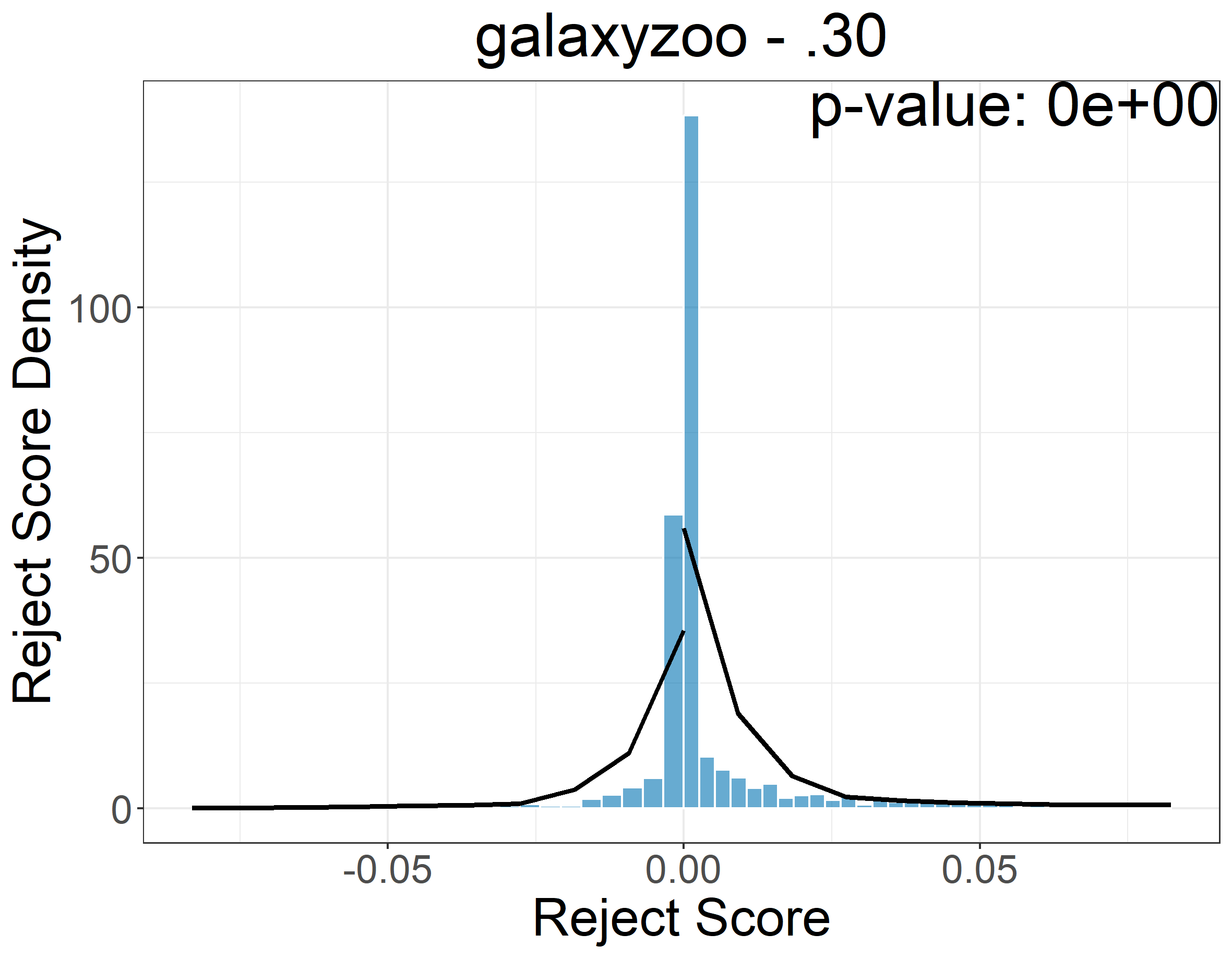}
        \caption{$\ok_{.30}$.}
        \label{fig:galaxyzoo_best_density_cutoff.30}
    \end{subfigure}
    \hfill
    \begin{subfigure}[t]{.32\textwidth}
        \includegraphics[scale=.22]{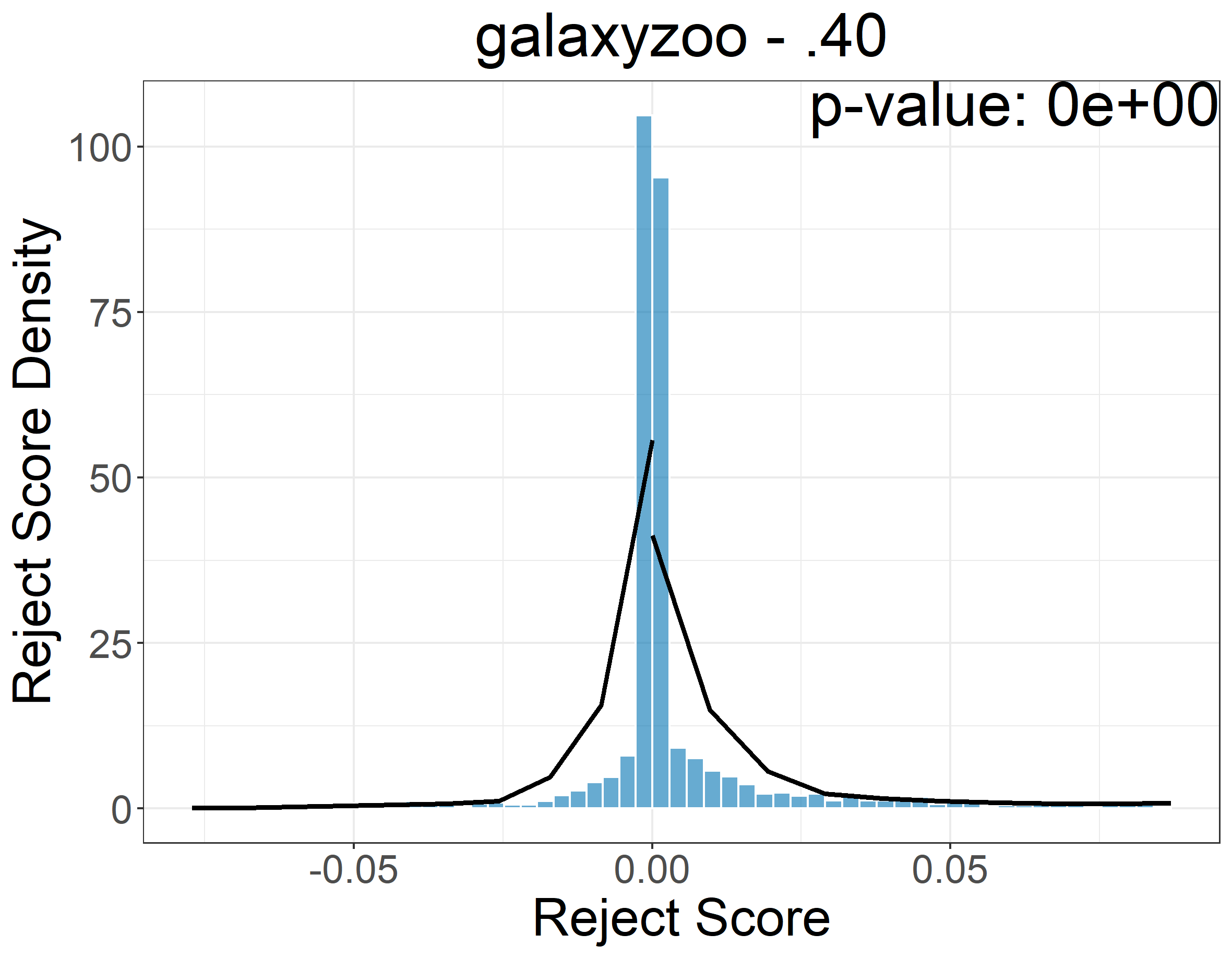}
        \caption{$\ok_{.40}$.}
        \label{fig:galaxyzoo_best_density_cutoff.40}
    \end{subfigure}
    \hfill
    \begin{subfigure}[t]{.32\textwidth}
        \includegraphics[scale=.22]{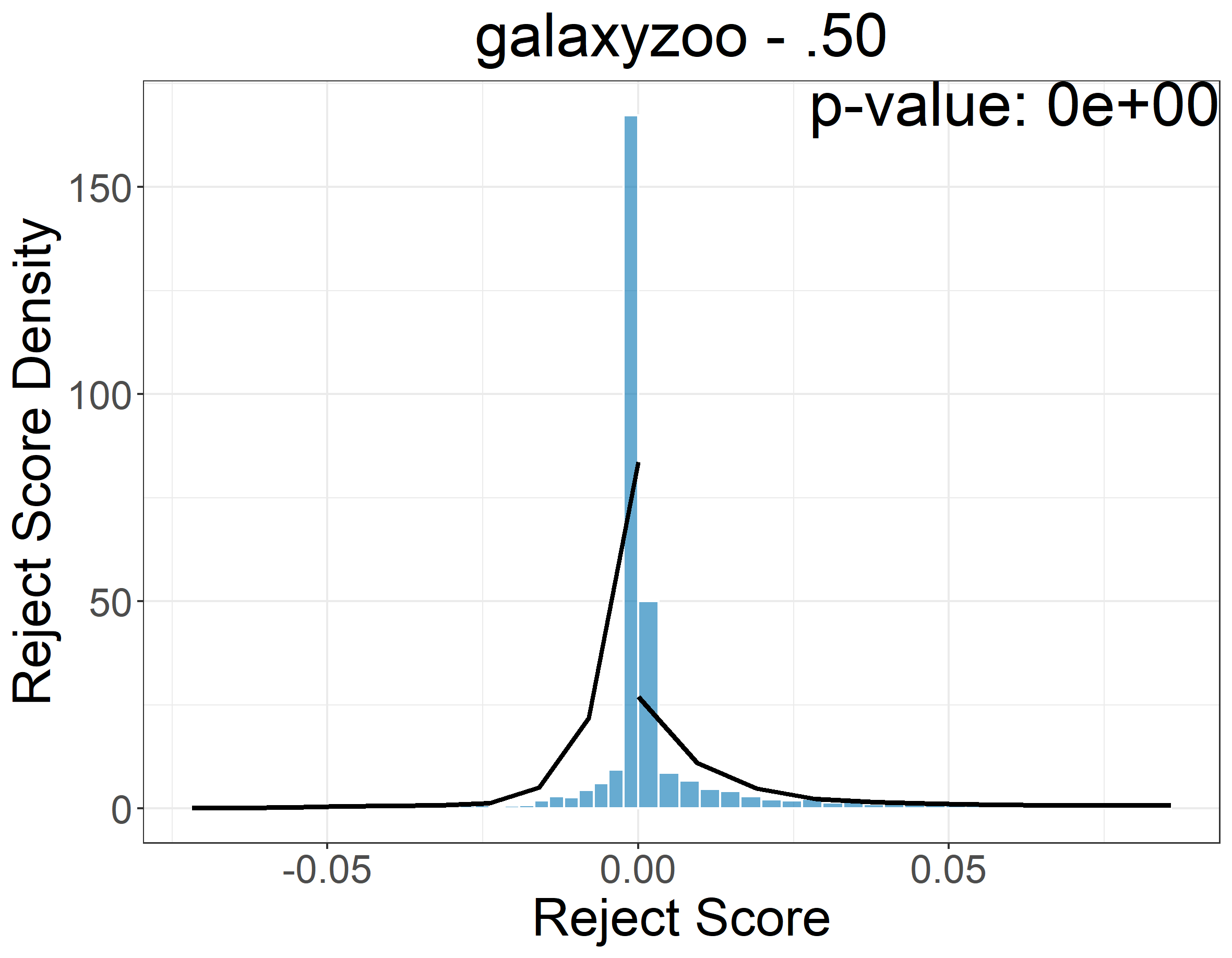}
        \caption{$\ok_{.50}$.}
        \label{fig:galaxyzoo_best_density_cutoff.50}
    \end{subfigure}
    \hfill
    \begin{subfigure}[t]{.32\textwidth}
        \includegraphics[scale=.22]{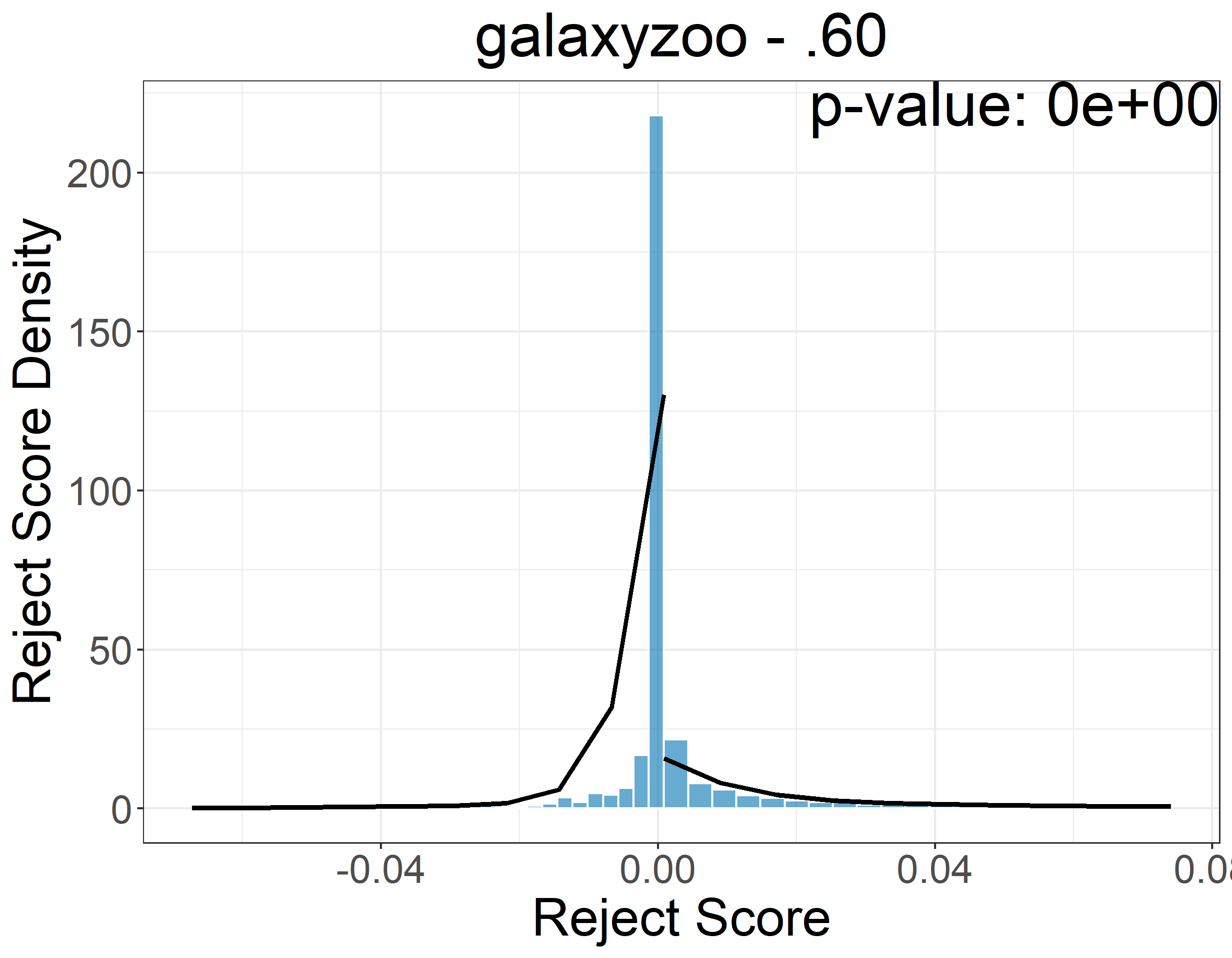}
        \caption{$\ok_{.60}$.}
        \label{fig:galaxyzoo_best_density_cutoff.60}
    \end{subfigure}
    \hfill
    \begin{subfigure}[t]{.32\textwidth}
        \includegraphics[scale=.22]{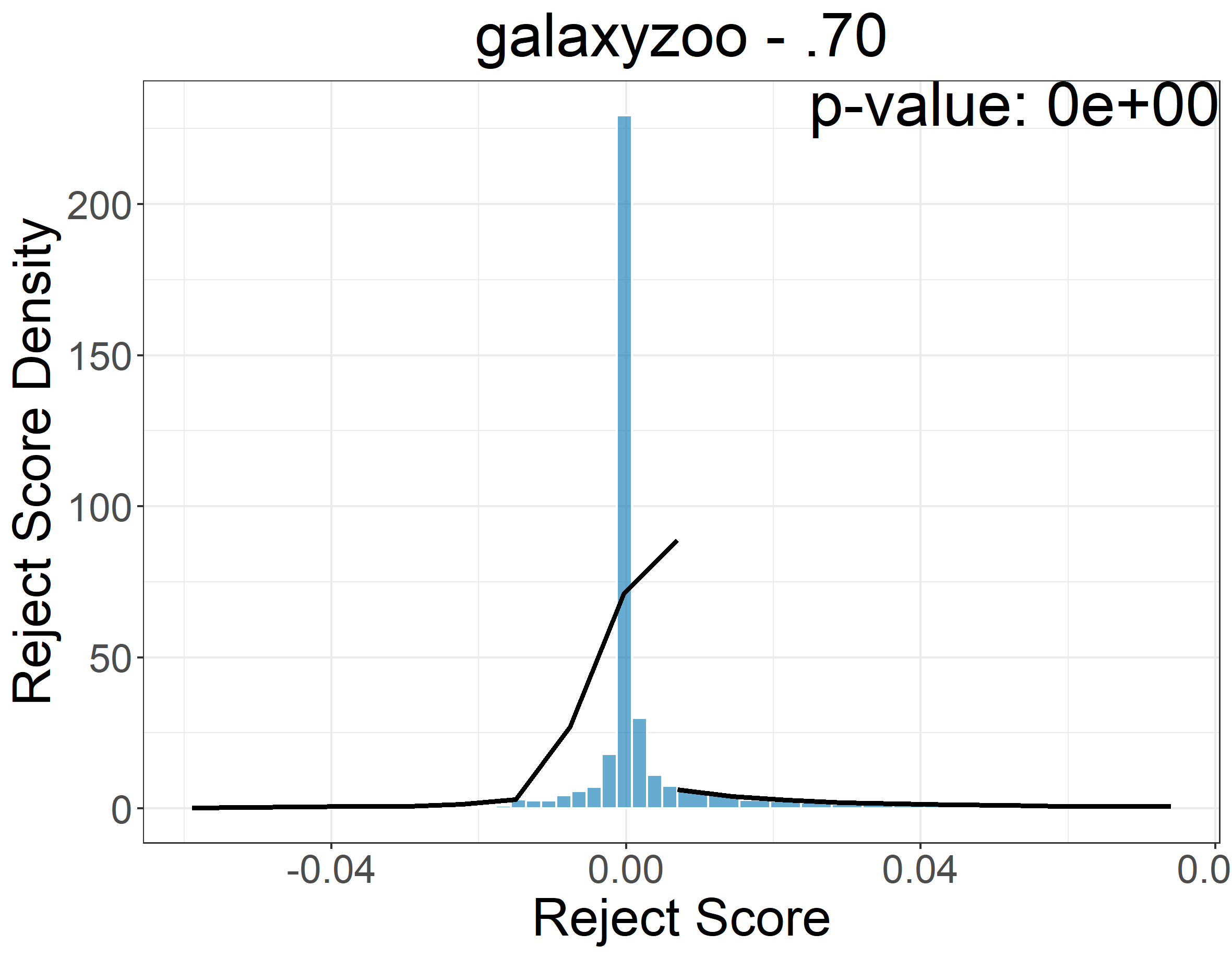}
        \caption{$\ok_{.70}$.}
        \label{fig:galaxyzoo_best_density_cutoff.70}
    \end{subfigure}
    \hfill
    \begin{subfigure}[t]{.32\textwidth}
        \includegraphics[scale=.22]{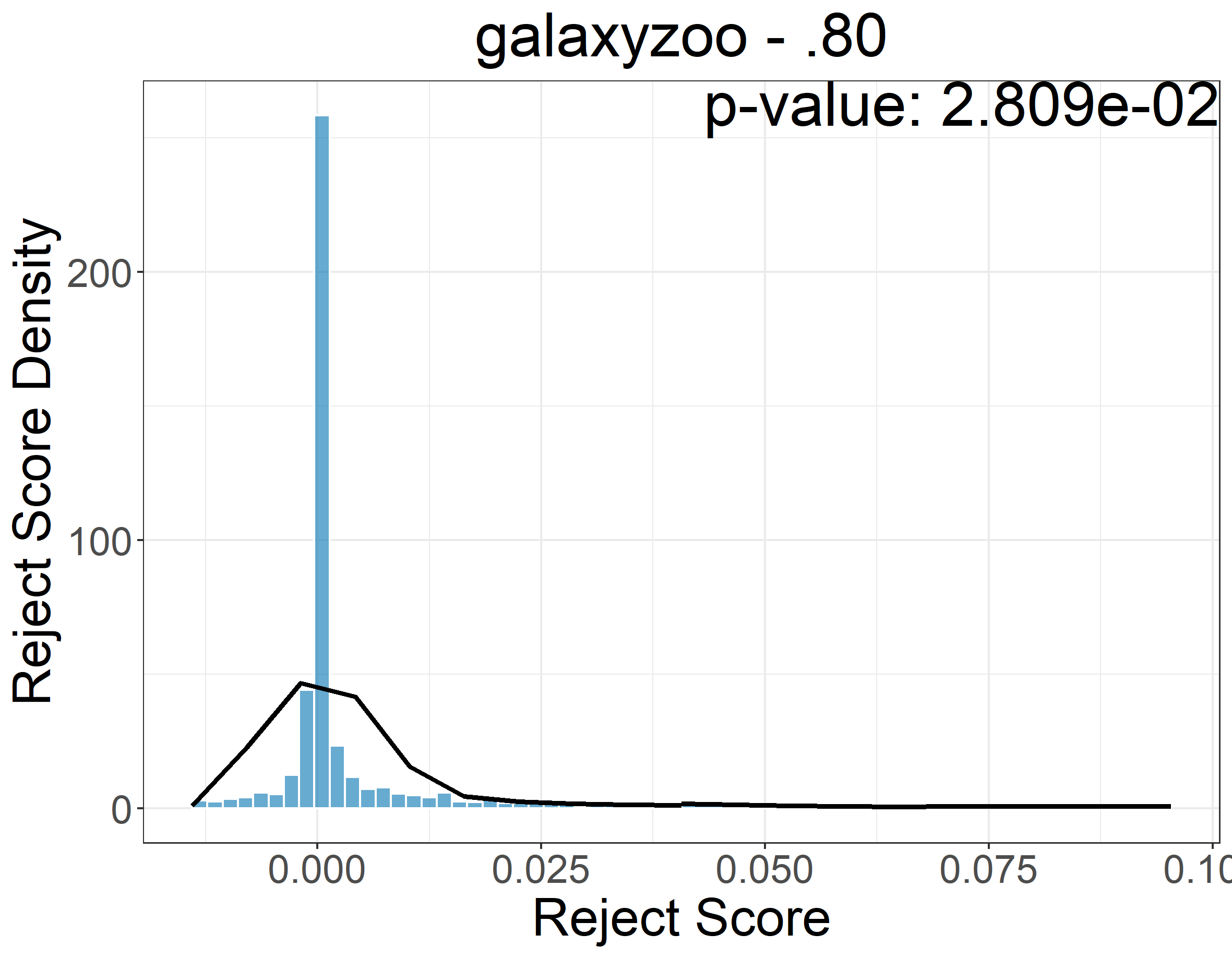}
        \caption{$\ok_{.80}$.}
        \label{fig:galaxyzoo_best_density_cutoff.80}
    \end{subfigure}
    \hfill
    \begin{subfigure}[t]{.32\textwidth}
        \includegraphics[scale=.22]{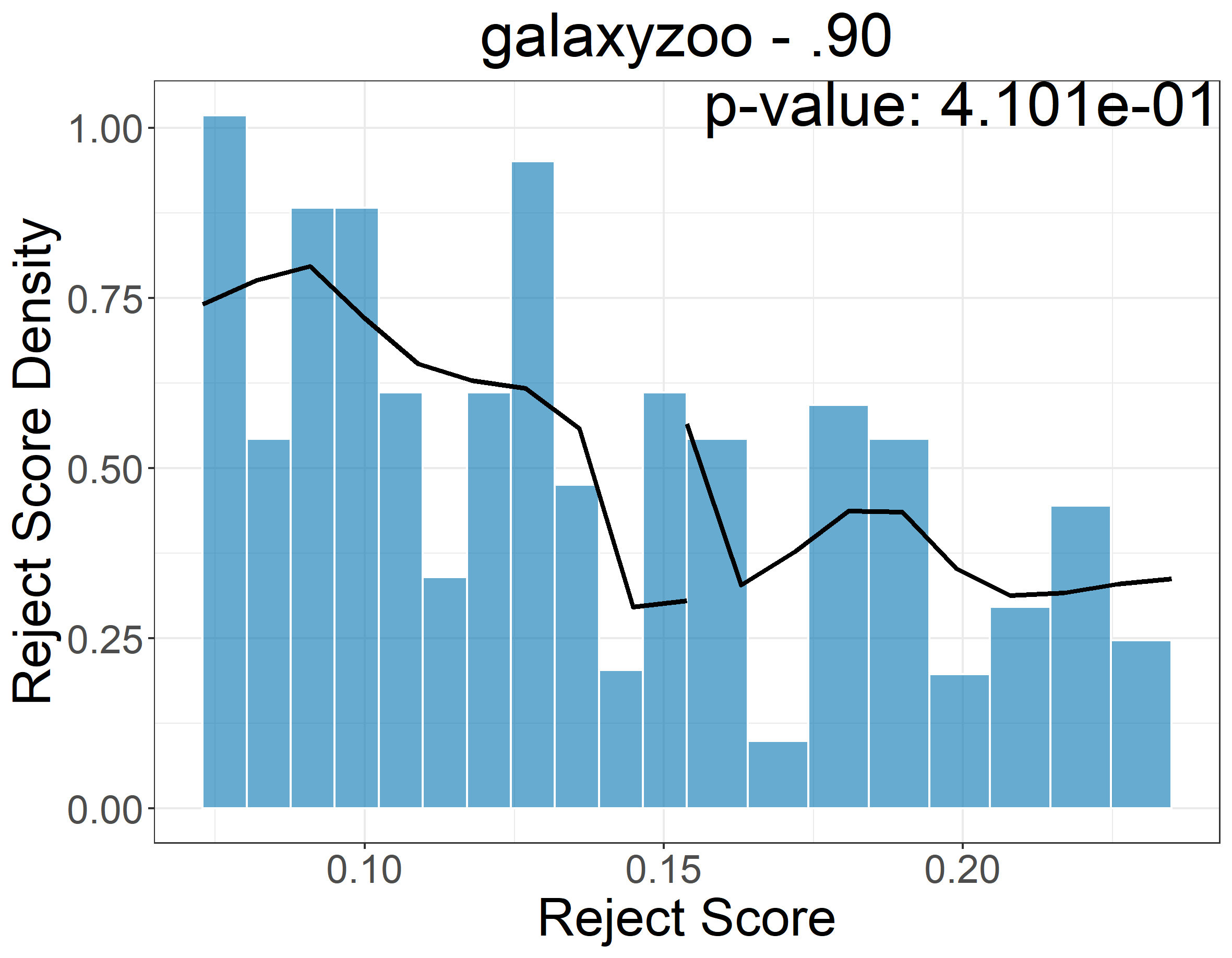}
        \caption{$\ok_{.90}$.}
        \label{fig:galaxyzoo_best_density_cutoff.90}
    \end{subfigure}
    \hfill
    
    \caption{
    \texttt{galaxyzoo} estimated reject scores densities at the left and right of cutoff $\ok_c$ for the best baseline \CC{}. All the plots are zoomed around the cutoff values.
    }
    \label{fig:galaxyzoo estimated density best}
\end{figure*}
}

\afterpage{
\begin{figure*}[t!]
    \begin{subfigure}[t]{.32\textwidth}
        \includegraphics[scale=.22]{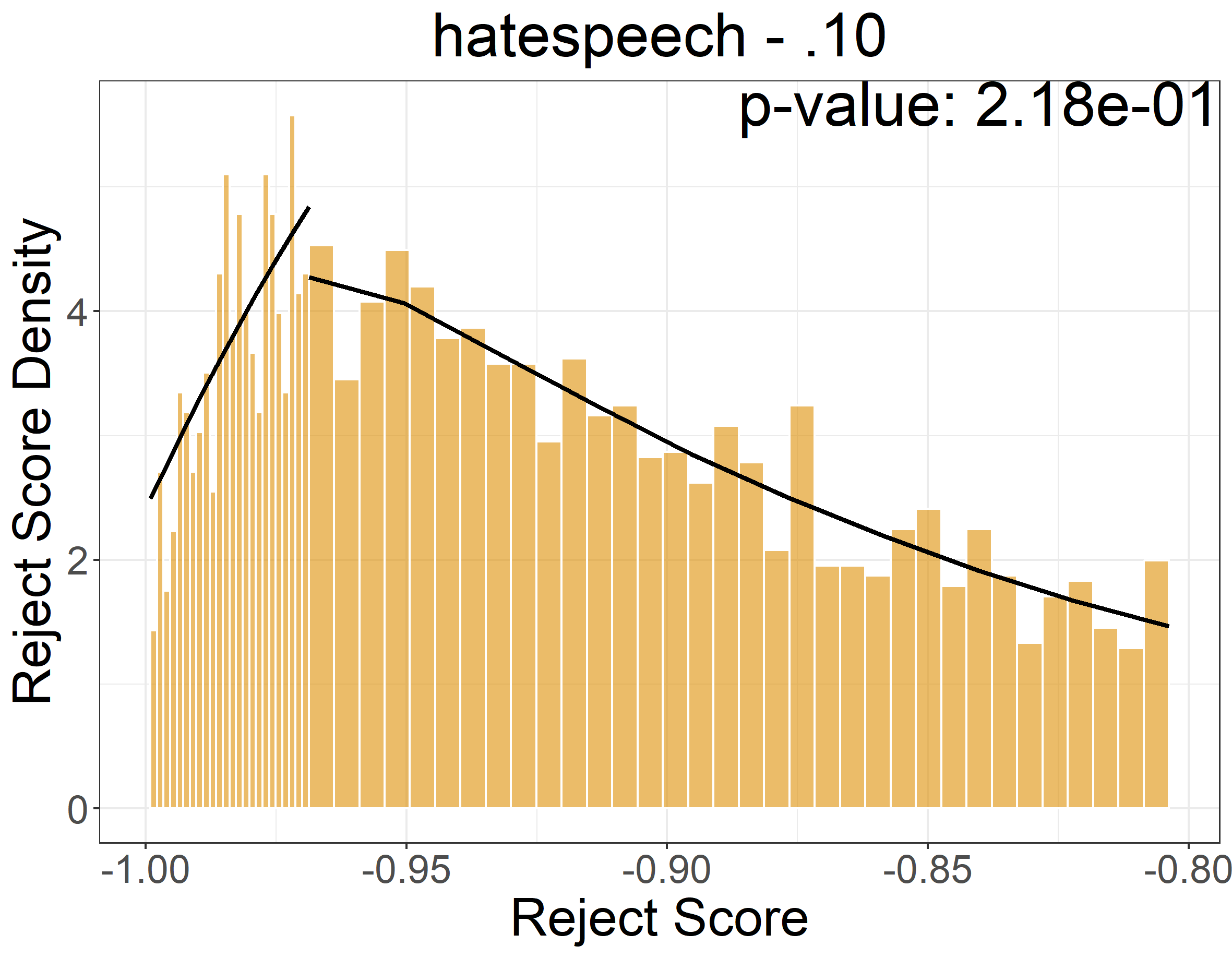}
        \caption{$\ok_{.10}$.}
        \label{fig:hatespeech_best_density_cutoff.10}
    \end{subfigure}
    \hfill
    \begin{subfigure}[t]{.32\textwidth}
        \includegraphics[scale=.22]{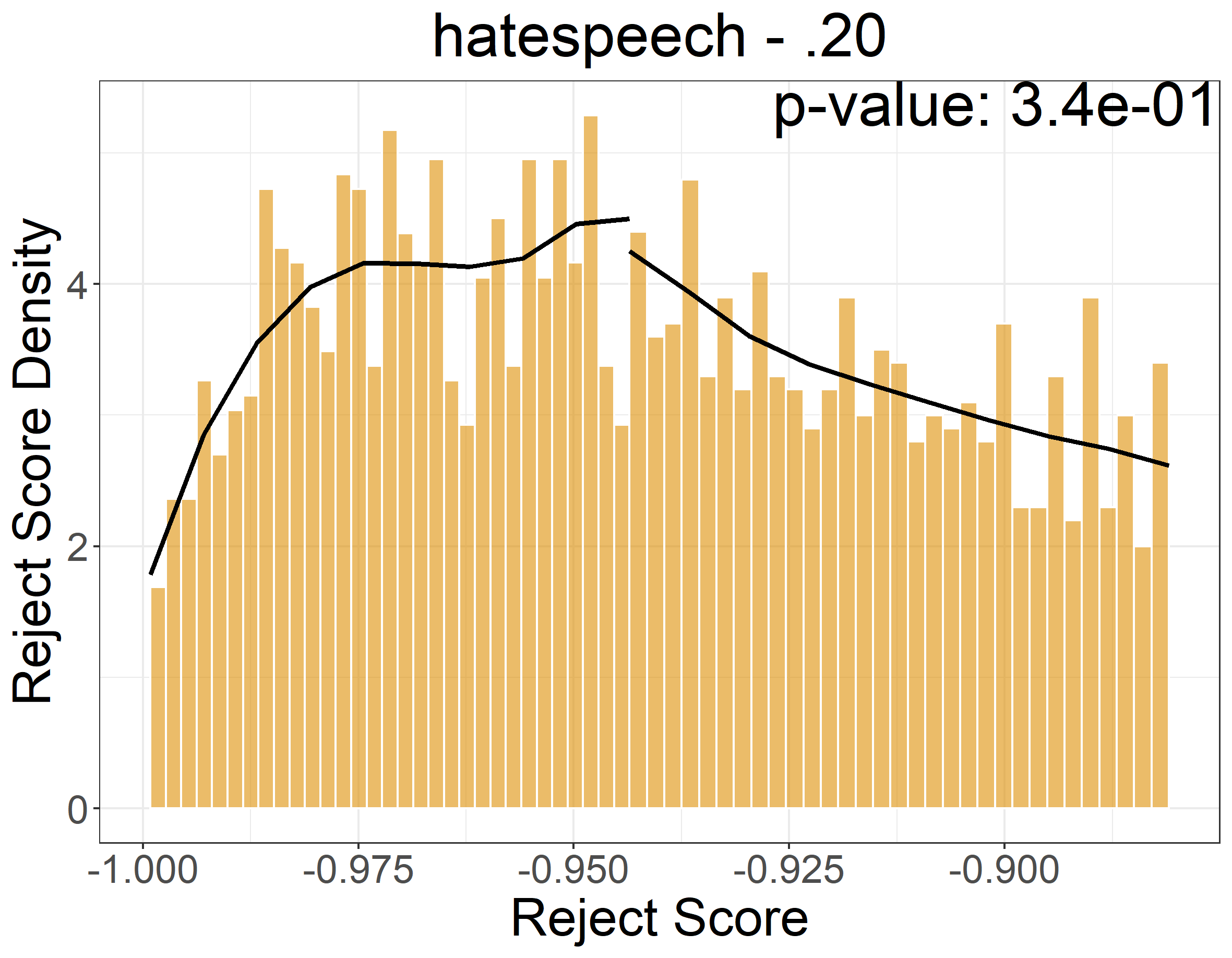}
        \caption{$\ok_{.20}$.}
        \label{fig:hatespeech_best_density_cutoff.20}
    \end{subfigure}
    \hfill
    \begin{subfigure}[t]{.32\textwidth}
        \includegraphics[scale=.22]{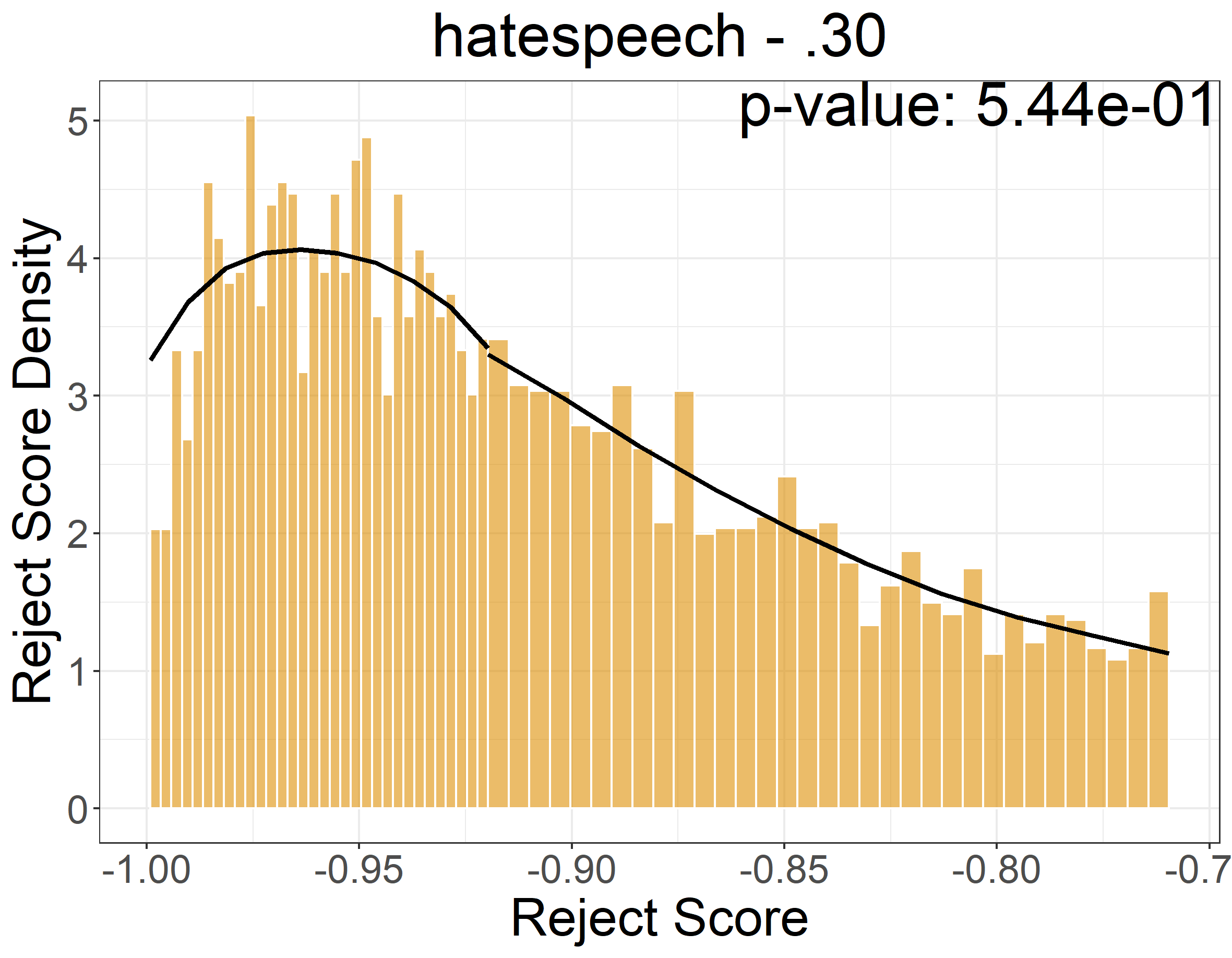}
        \caption{$\ok_{.30}$.}
        \label{fig:hatespeech_best_density_cutoff.30}
    \end{subfigure}
    \hfill
    \begin{subfigure}[t]{.32\textwidth}
        \includegraphics[scale=.22]{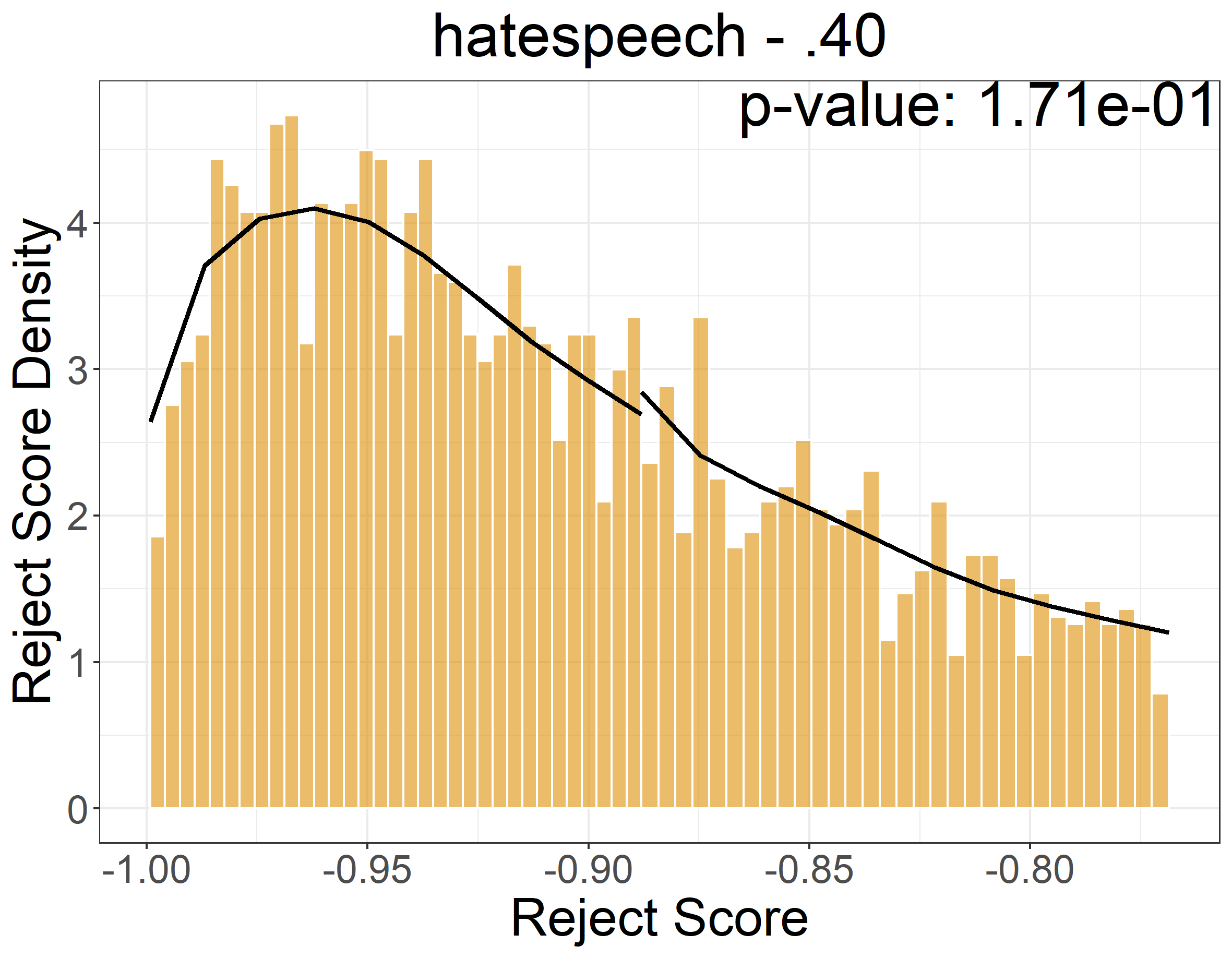}
        \caption{$\ok_{.40}$.}
        \label{fig:hatespeech_best_density_cutoff.40}
    \end{subfigure}
    \hfill
    \begin{subfigure}[t]{.32\textwidth}
        \includegraphics[scale=.22]{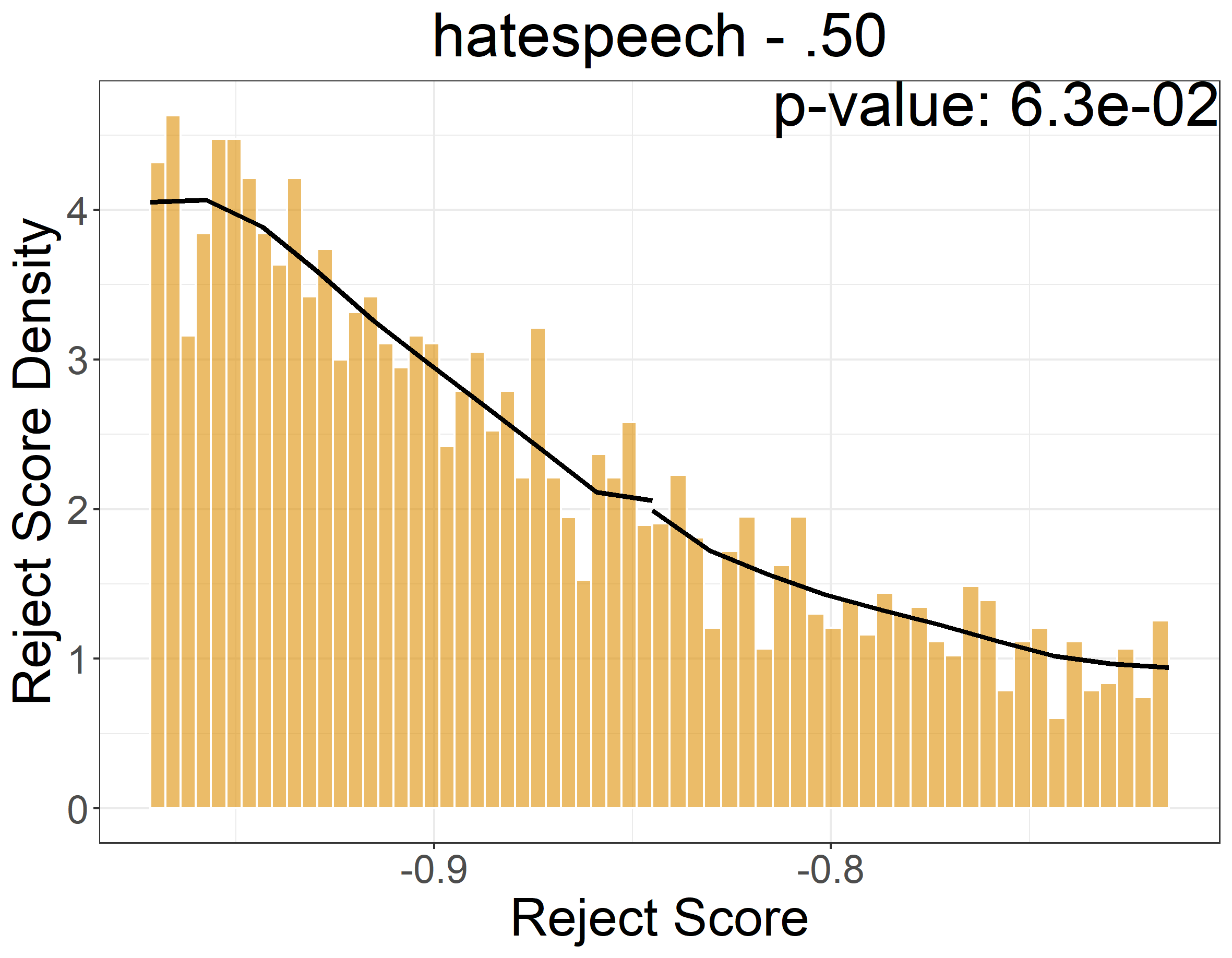}
        \caption{$\ok_{.50}$.}
        \label{fig:hatespeech_best_density_cutoff.50}
    \end{subfigure}
    \hfill
    \begin{subfigure}[t]{.32\textwidth}
        \includegraphics[scale=.22]{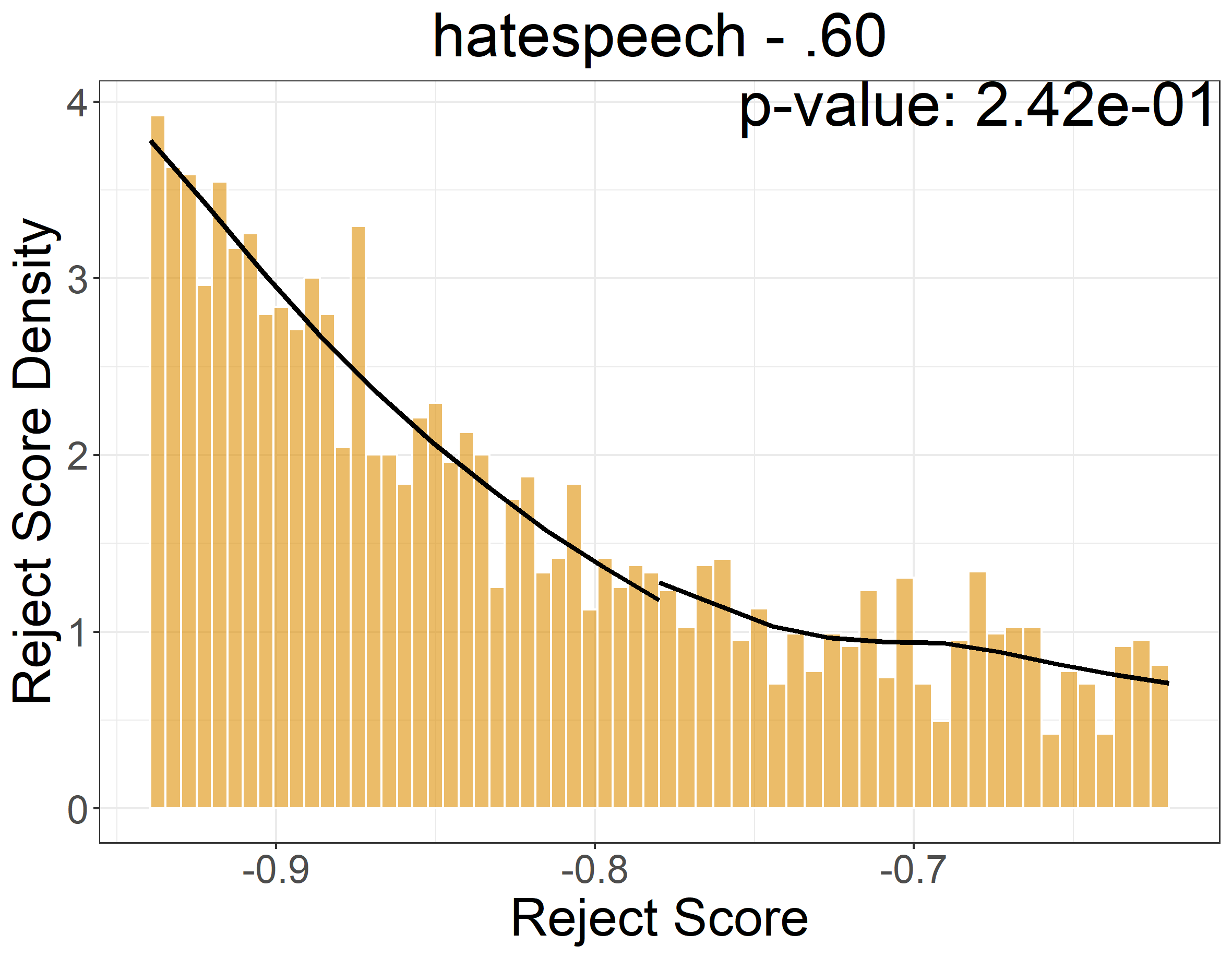}
        \caption{$\ok_{.60}$.}
        \label{fig:hatespeech_best_density_cutoff.60}
    \end{subfigure}
    \hfill
    \begin{subfigure}[t]{.32\textwidth}
        \includegraphics[scale=.22]{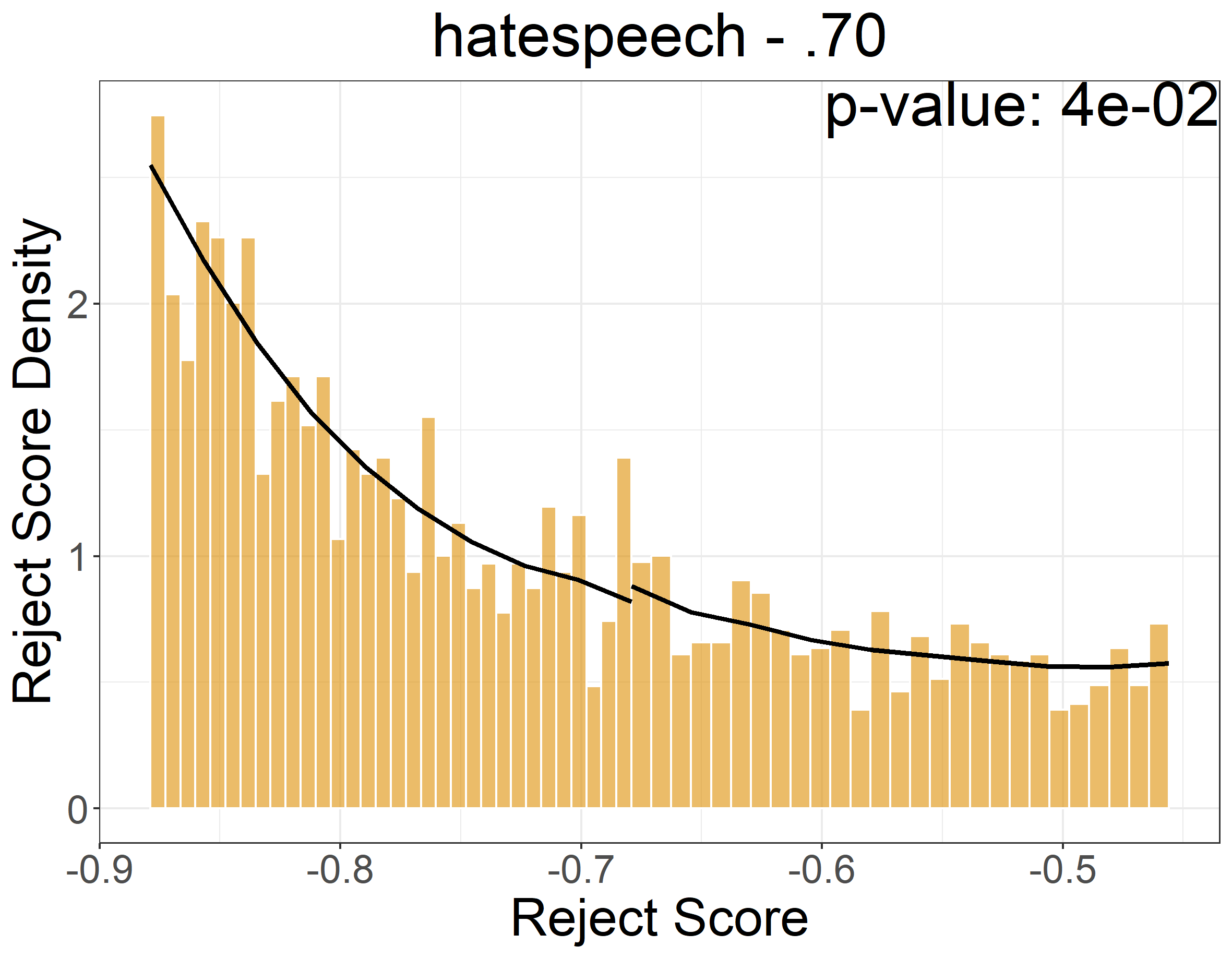}
        \caption{$\ok_{.70}$.}
        \label{fig:hatespeech_best_density_cutoff.70}
    \end{subfigure}
    \hfill
    \begin{subfigure}[t]{.32\textwidth}
        \includegraphics[scale=.22]{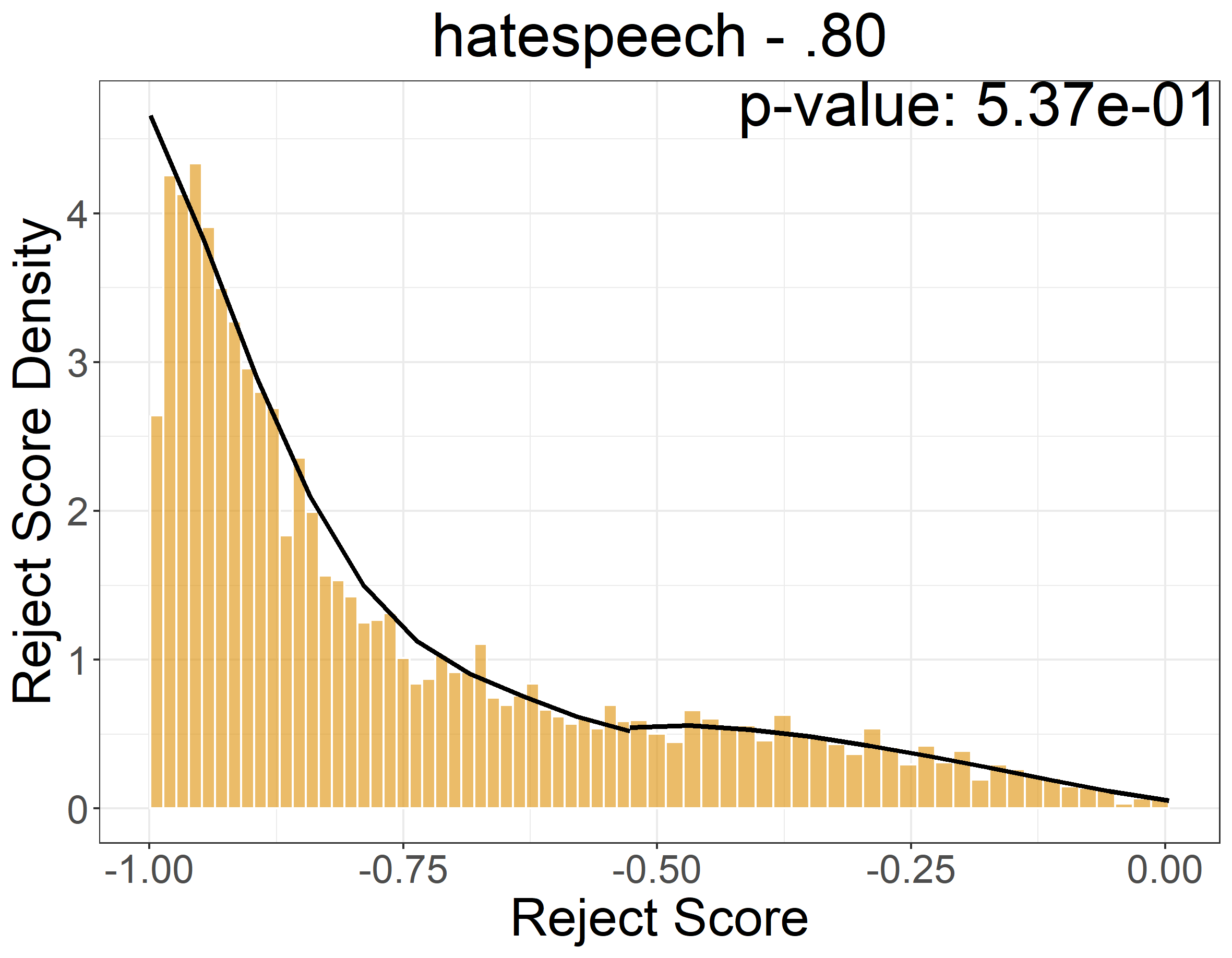}
        \caption{$\ok_{.80}$.}
        \label{fig:hatespeech_best_density_cutoff.80}
    \end{subfigure}
    \hfill
    \begin{subfigure}[t]{.32\textwidth}
        \includegraphics[scale=.22]{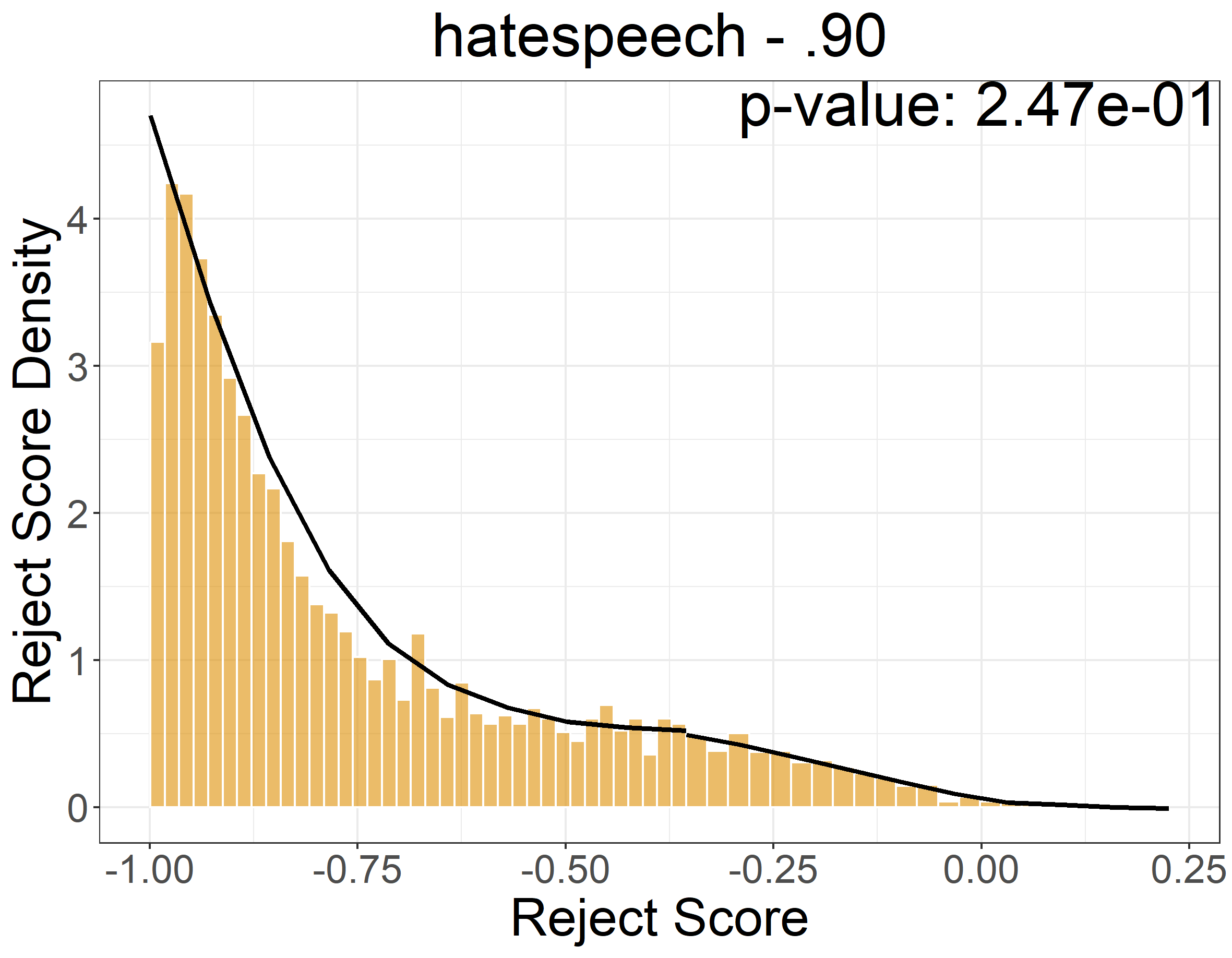}
        \caption{$\ok_{.90}$.}
        \label{fig:hatespeech_best_density_cutoff.90}
    \end{subfigure}
    \hfill
    
    \caption{
    \texttt{hatespeech} estimated best baseline (\RS{}) reject scores densities at the left and right of cutoff $\ok_c$. All the plots are zoomed around the cutoff values.
    }
    \label{fig:hatespeech estimated density best}
\end{figure*}
}

\afterpage{
\begin{figure*}[t!]
    \begin{subfigure}[t]{.32\textwidth}
        \includegraphics[scale=.22]{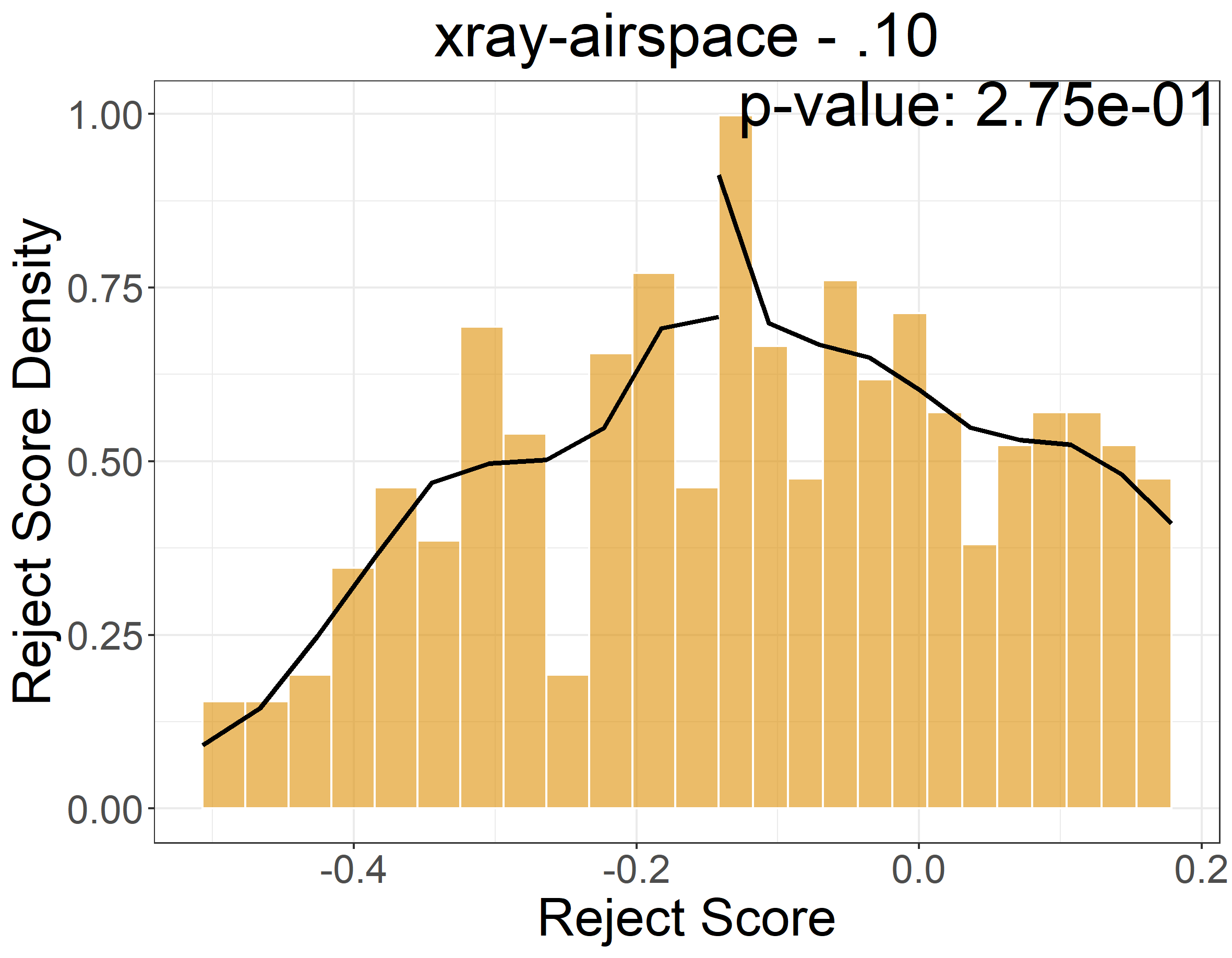}
        \caption{$\ok_{.10}$.}
        \label{fig:xray-airspace_best_density_cutoff.10}
    \end{subfigure}
    \hfill
    \begin{subfigure}[t]{.32\textwidth}
        \includegraphics[scale=.22]{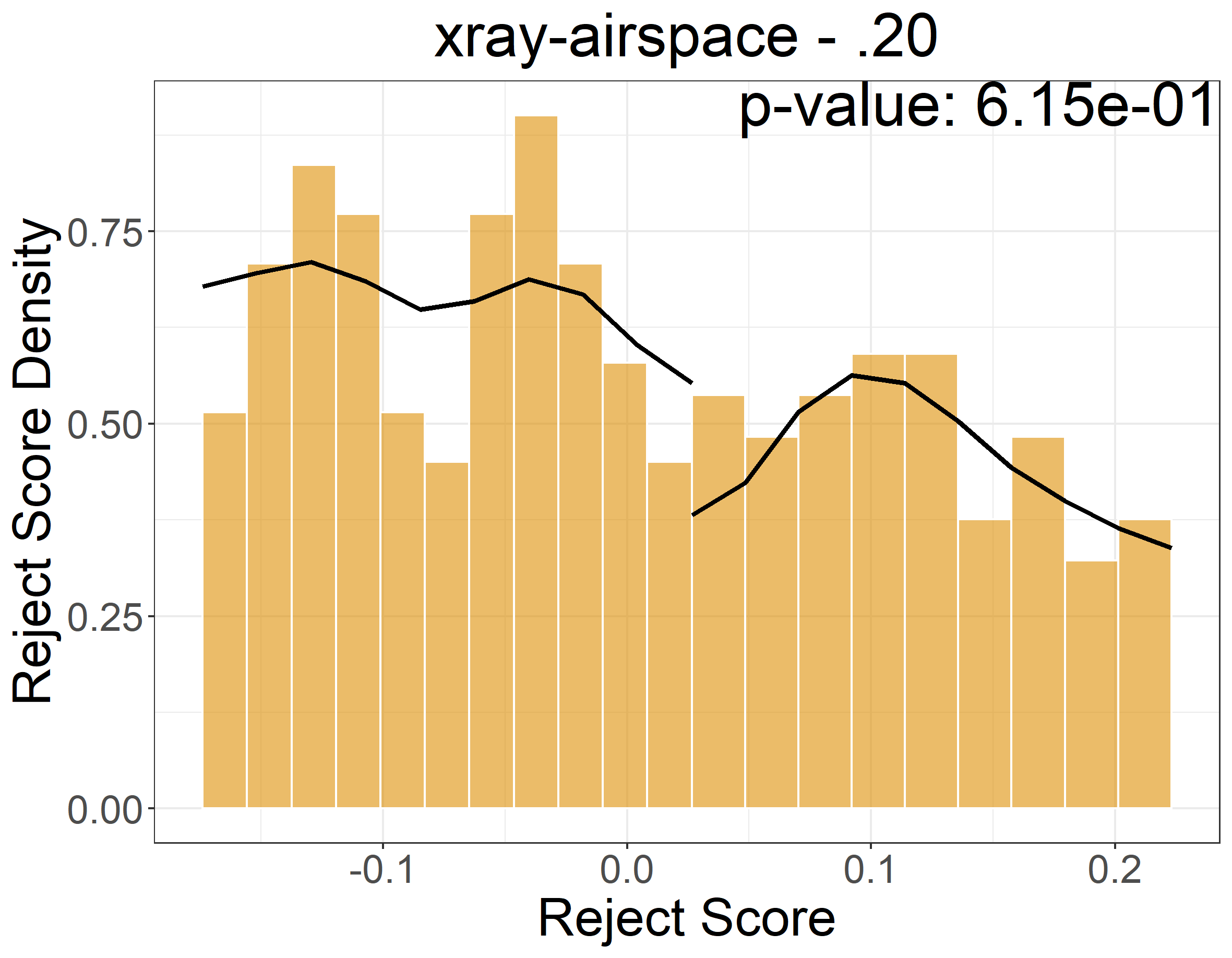}
        \caption{$\ok_{.20}$.}
        \label{fig:xray-airspace_best_density_cutoff.20}
    \end{subfigure}
    \hfill
    \begin{subfigure}[t]{.32\textwidth}
        \includegraphics[scale=.22]{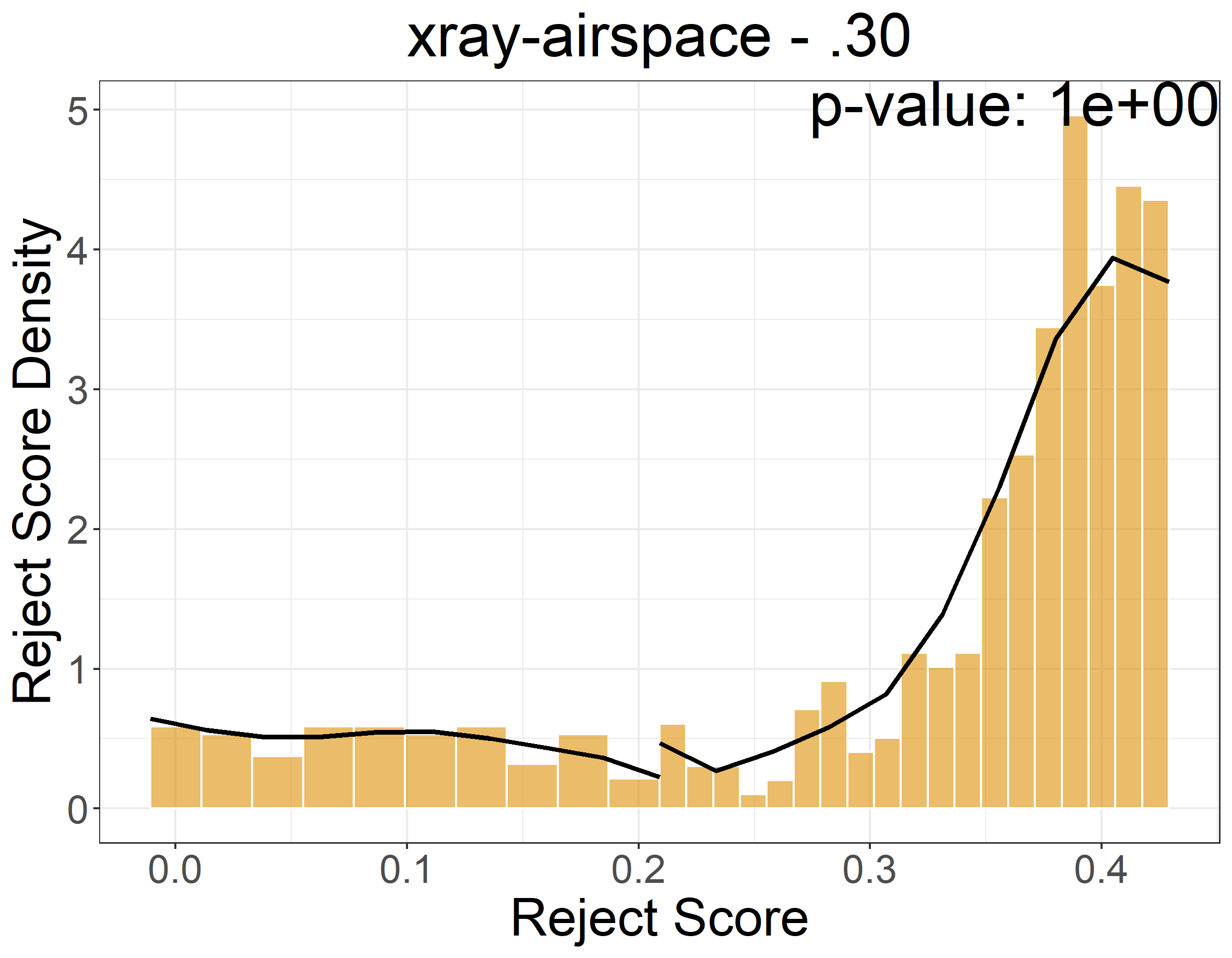}
        \caption{$\ok_{.30}$.}
        \label{fig:xray-airspace_best_density_cutoff.30}
    \end{subfigure}
    \hfill
    \begin{subfigure}[t]{.32\textwidth}
        \includegraphics[scale=.22]{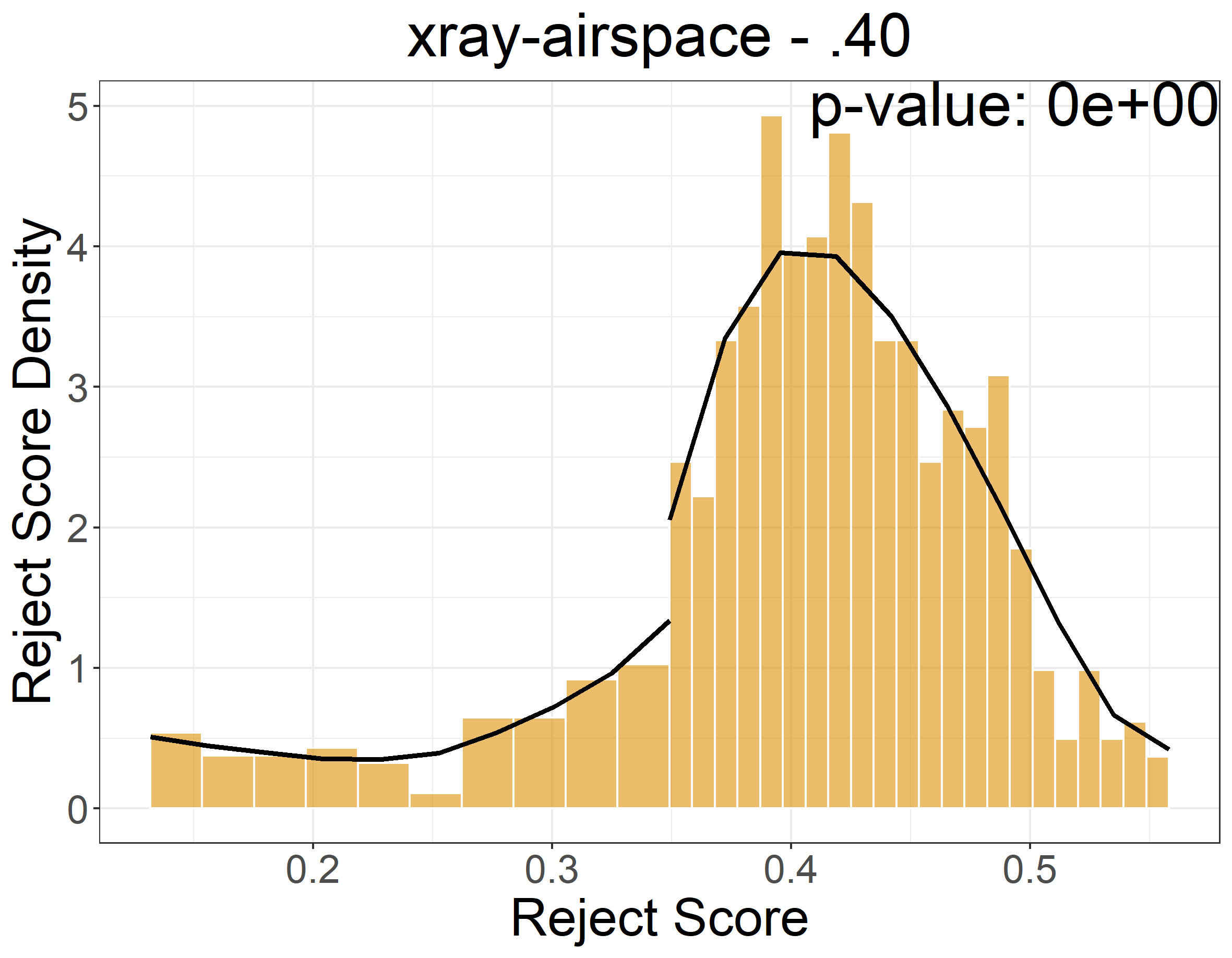}
        \caption{$\ok_{.40}$.}
        \label{fig:xray-airspace_best_density_cutoff.40}
    \end{subfigure}
    \hfill
    \begin{subfigure}[t]{.32\textwidth}
        \includegraphics[scale=.22]{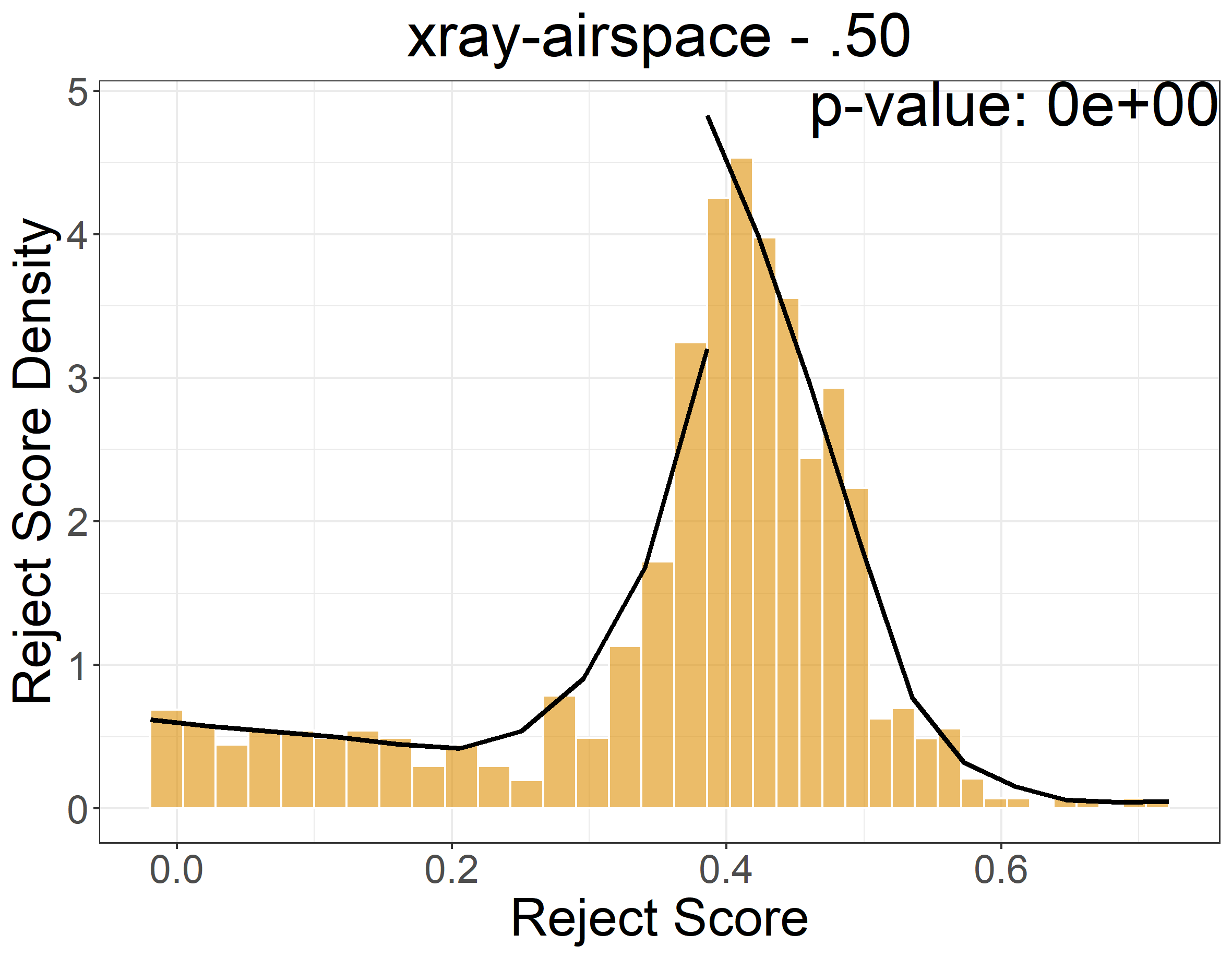}
        \caption{$\ok_{.50}$.}
        \label{fig:xray-airspace_best_density_cutoff.50}
    \end{subfigure}
    \hfill
    \begin{subfigure}[t]{.32\textwidth}
        \includegraphics[scale=.22]{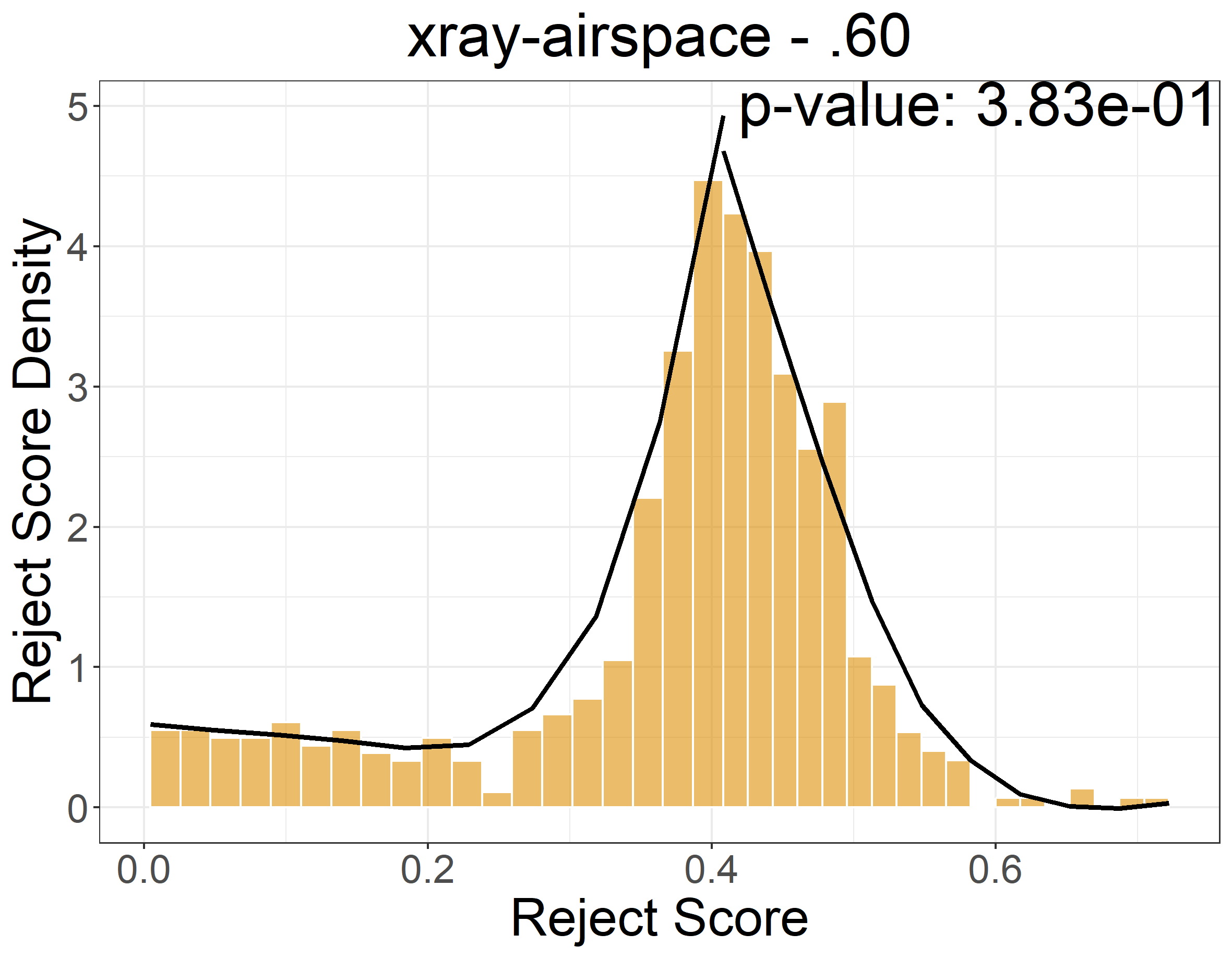}
        \caption{$\ok_{.60}$.}
        \label{fig:xray-airspace_best_density_cutoff.60}
    \end{subfigure}
    \hfill
    \begin{subfigure}[t]{.32\textwidth}
        \includegraphics[scale=.22]{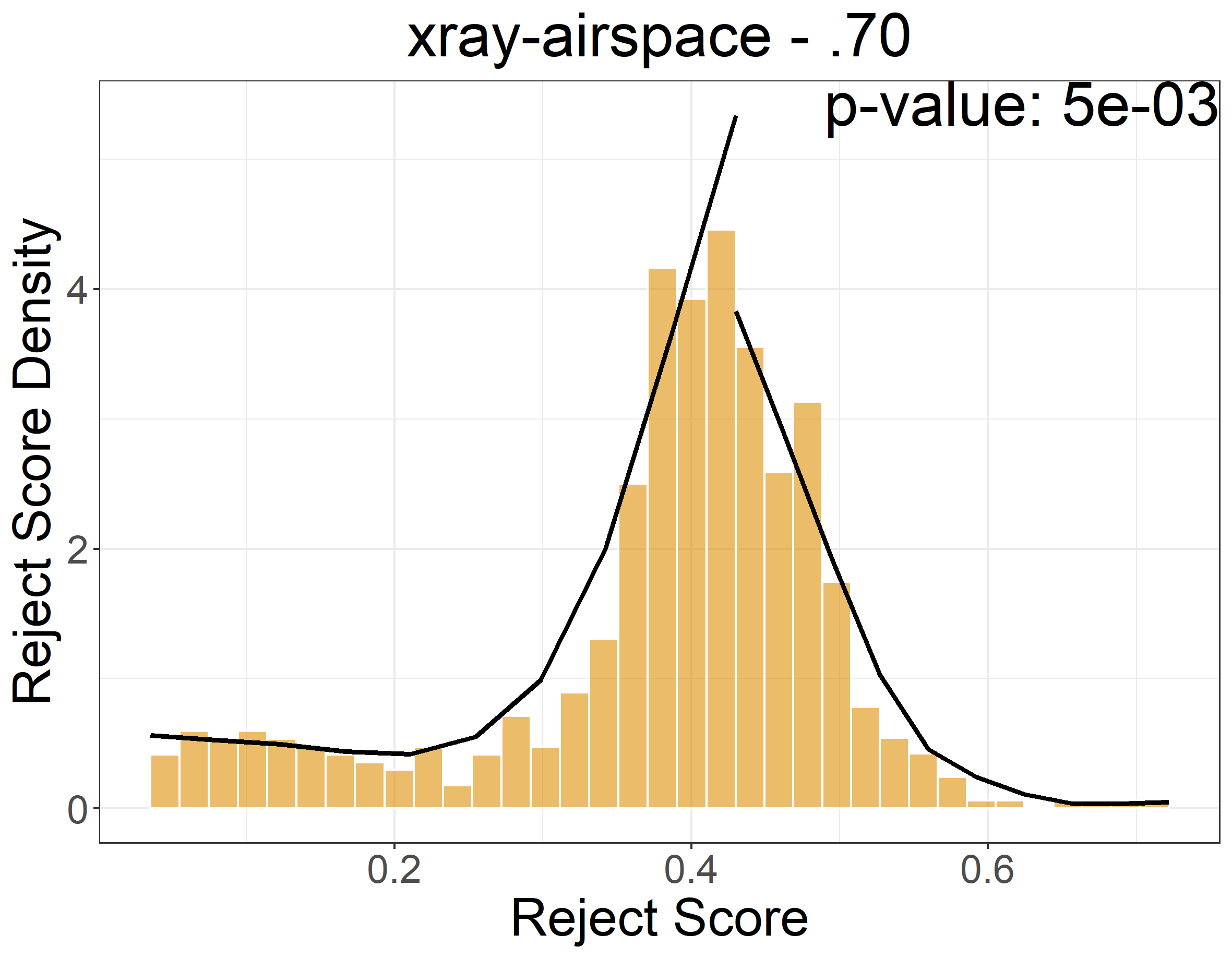}
        \caption{$\ok_{.70}$.}
        \label{fig:xray-airspace_best_density_cutoff.70}
    \end{subfigure}
    \hfill
    \begin{subfigure}[t]{.32\textwidth}
        \includegraphics[scale=.22]{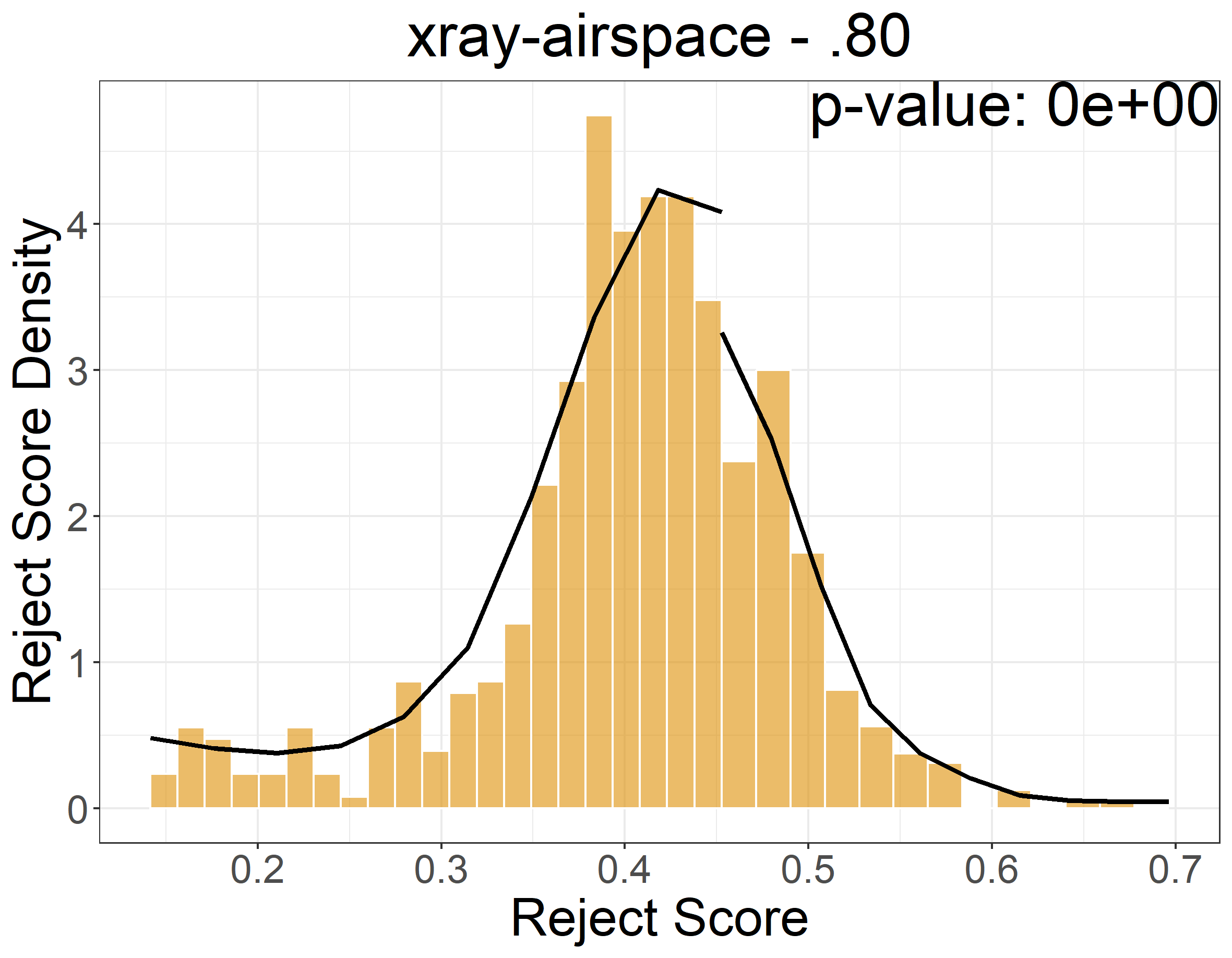}
        \caption{$\ok_{.80}$.}
        \label{fig:xray-airspace_best_density_cutoff.80}
    \end{subfigure}
    \hfill
    \begin{subfigure}[t]{.32\textwidth}
        \includegraphics[scale=.22]{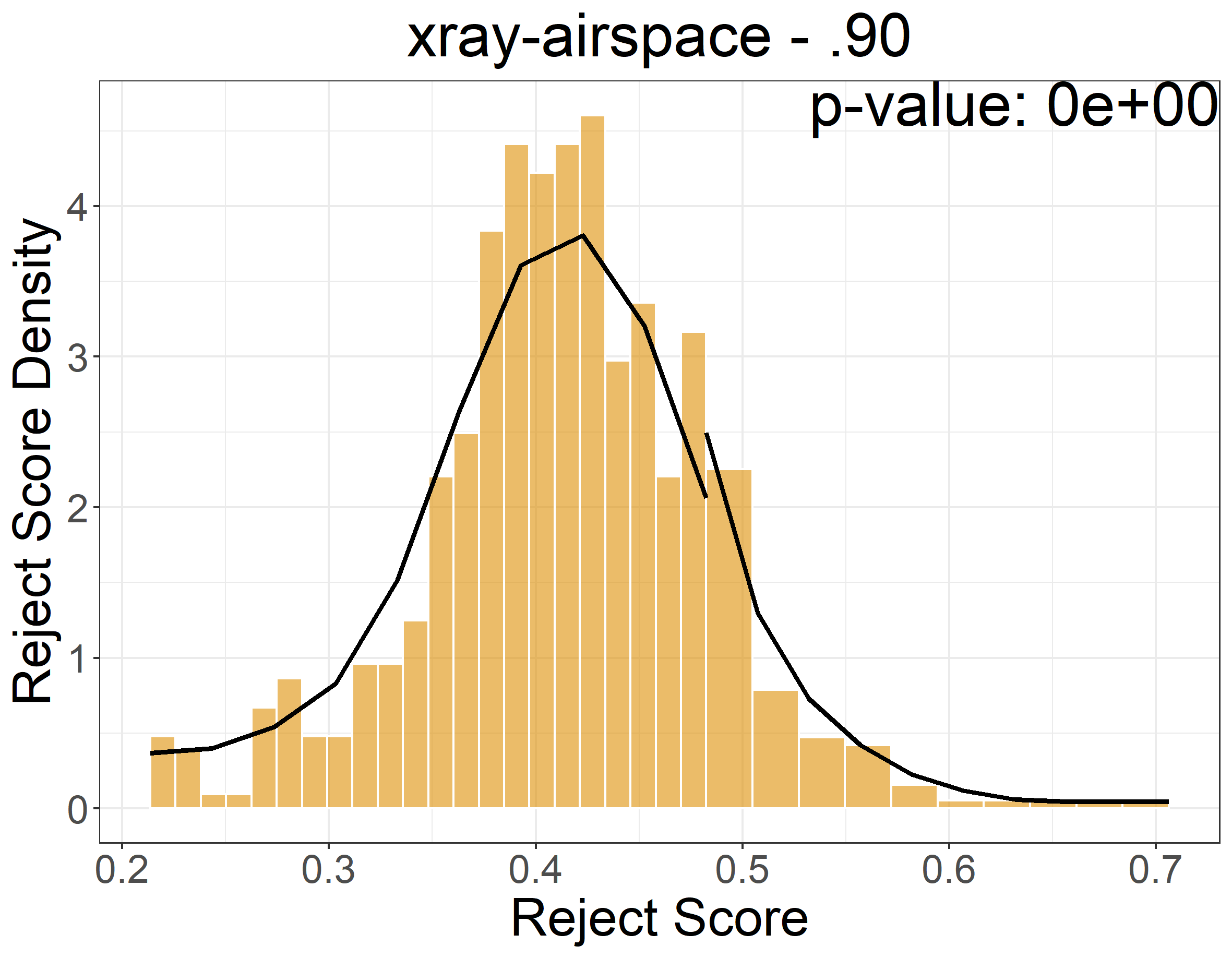}
        \caption{$\ok_{.90}$.}
        \label{fig:xray-airspace_best_density_cutoff.90}
    \end{subfigure}
    \hfill
    
    \caption{
    \texttt{xray-airspace} estimated best baseline (\RS{}) reject scores densities at the left and right of cutoff $\ok_c$. All the plots are zoomed around the cutoff values.
    }
    \label{fig:xray-airspace estimated density best}
\end{figure*}
}

\end{document}